\newcommand{\E}{\mathbb{E}}
\newcommand{\rrr}{\mathbb{R}}
\newcommand{\bx}{\mathbf{x}}
\newcommand{\by}{\mathbf{y}}
\newcommand{\ba}{\mathbf{a}}
\newcommand{\bb}{\mathbf{b}}
\newcommand{\bw}{\mathbf{w}}
\newcommand{\bA}{\mathbf{A}}
\newcommand{\bC}{\mathbf{C}}
\newcommand{\bI}{\mathbf{I}}
\newcommand{\bX}{\mathbf{X}}
\newcommand{\bY}{\mathbf{Y}}
\newcommand{\btheta}{\mathbf{\theta}}
\newcommand{\bSigma}{\mathbf{\Sigma}}
\newcommand{\bLambda}{\mathbf{\Lambda}}
\DeclareMathOperator{\Tr}{Tr}
\newcommand{\apx}{\textsc{Dash }}
\newtheorem{theorem}{Theorem}
\newtheorem{corollary}[theorem]{Corollary}
\newtheorem{lemma}[theorem]{Lemma}
\newtheorem{remark}[theorem]{Remark}
\newtheorem{definition}[theorem]{Definition}
\title{Fast Parallel Algorithms for Statistical Subset Selection Problems}
\author{%
  Sharon Qian \\
  Harvard University\\
  \texttt{sharonqian@g.harvard.edu} \\
  \And
  Yaron Singer \\
  Harvard University\\
   \texttt{yaron@seas.harvard.edu} \\
}
\begin{document}
\maketitle

\begin{abstract}

In this paper, we propose a new framework for designing fast parallel algorithms for fundamental statistical subset selection tasks that include feature selection and experimental design.  Such tasks are known to be \emph{weakly submodular} and are amenable to optimization via the standard greedy algorithm.  Despite its desirable approximation guarantees, the greedy algorithm is inherently sequential and in the worst case, its parallel runtime is linear in the size of the data.
Recently, there has been a surge of interest in a parallel optimization technique called \emph{adaptive sampling} which produces solutions with desirable approximation guarantees for submodular maximization in exponentially faster parallel runtime.  Unfortunately, we show that for general weakly submodular functions such accelerations are impossible.  The major contribution in this paper is a novel relaxation of submodularity which we call \emph{differential submodularity}.  We first prove that differential submodularity characterizes objectives like feature selection and experimental design.  We then design an adaptive sampling algorithm for differentially submodular functions whose parallel runtime is logarithmic in the size of the data and achieves strong approximation guarantees.  Through experiments, we show the algorithm's performance is competitive with state-of-the-art methods and obtains dramatic speedups for feature selection and experimental design problems.       
\end{abstract}

\section{Introduction}
In fundamental statistics applications such as regression, classification and maximum likelihood estimation, we are often interested in selecting a subset of elements to optimize an objective function. In a series of recent works, both feature selection (selecting $k$ out of $n$ features) and  experimental design (choosing $k$ out of $n$ samples) were shown to be \emph{weakly submodular}~\cite{das2011,elenberg2018,bian2017}.  The notion of \emph{weak submodularity} was defined by Das and Kempe in~\cite{das2011} and quantifies the deviance of an objective function from submodularity.  Characterizations of weak submodularity are important as they allow proving guarantees of greedy algorithms in terms of the deviance of the objective function from submodularity.  More precisely, for objectives that are $\gamma$-weakly submodular (for $\gamma$ that depends on the objective, see preliminaries Section~\ref{sec:prelim}), the greedy algorithm is shown to return a $1-1/e^{\gamma}$ approximation to the optimal subset.  

\paragraph{Greedy is sequential and cannot be parallelized.} For large data sets where one wishes to take advantage of parallelization, greedy algorithms are impractical.  Greedy algorithms for feature selection such as forward stepwise regression iteratively add the feature with the largest marginal contribution to the objective which requires computing the contribution of each feature in every iteration. Thus, the parallel runtime of the forward stepwise algorithm and greedy algorithms in general, scale linearly with the number of features we want to select. In cases where the computation of the objective function across all elements is expensive or the dataset is large, this can be computationally infeasible. 

\paragraph{Adaptive sampling for fast parallel submodular maximization.}  In a recent line of work initiated by \cite{BS18a}, adaptive sampling techniques have been used for maximizing submodular functions under varying constraints \cite{balkanski2018, chekuri2019, chekuri2018matroid, balkanski2018non, chen2018, ene2018, farbach2019, balkanski2018matroid, ene2019, FMZ19}. Intuitively, instead of growing the solution set element-wise, adaptive sampling adds a large set of elements to the solution at each round which allows the algorithm to be highly parallelizable. In particular, for canonical submodular maximization problems, one can obtain approximation guarantees arbitrarily close to the one obtained by greedy (which is optimal for polynomial time algorithms~\cite{nemhauser1978best}) in \emph{exponentially} faster parallel runtime.

\paragraph{In general, adaptive sampling fails for weakly submodular functions.}   Adaptive sampling techniques add large sets of  high valued elements in each round by filtering elements with low marginal contributions.  This enables these algorithms to terminate in a small number of rounds.  For weak submodularity, this approach renders arbitrarily poor approximations. In Appendix \ref{appendix:toy1}, we use an example of a weakly submodular function from \cite{elenberg2017} where adaptive sampling techniques have an arbitrarily poor approximation guarantee. Thus, if we wish to utilize adaptive sampling to parallelize algorithms for applications such as feature selection and experimental design, we need a stronger characterization of these objectives which is amenable to parallelization.

\subsection{Differential Submodularity}
In this paper, we introduce an alternative measure to quantify the deviation from submodularity which we call \emph{differential submodularity}, defined below.  We use $f_{S}(A)$ to denote $f(S\cup A) - f(S)$.  

\begin{definition}
A function $f: 2^N \rightarrow \rrr_+$ is $\alpha$-differentially submodular for $\alpha\in[0,1]$, if there exist two submodular functions  $h, g$  s.t. for any $S, A \subseteq N$, we have that $g_S(A) \geq \alpha \cdot h_S(A)$ and
$$ g_S(A) \leq f_S(A) \leq h_S(A)$$
\end{definition}

A 1-differentially submodular function is submodular and a 0-differentially submodular function can be arbitrarily far from submodularity. In Figure \ref{fig:toy}, we show a depiction of differential submodularity (blue lines) calculated from the feature selection objective by fixing an element $a$ and randomly sampling sets $S$ of size 100 to compute the marginal contribution $f_S(a)$ on a real dataset. For a differentially submodular function (blue lines), the property of decreasing marginal contributions does not hold but can be bounded by two submodular functions (red) with such property.
\begin{wrapfigure}[10]{r}{0.35\textwidth}
\centering
\vspace{-1.2em}
\includegraphics[width=0.36\textwidth]{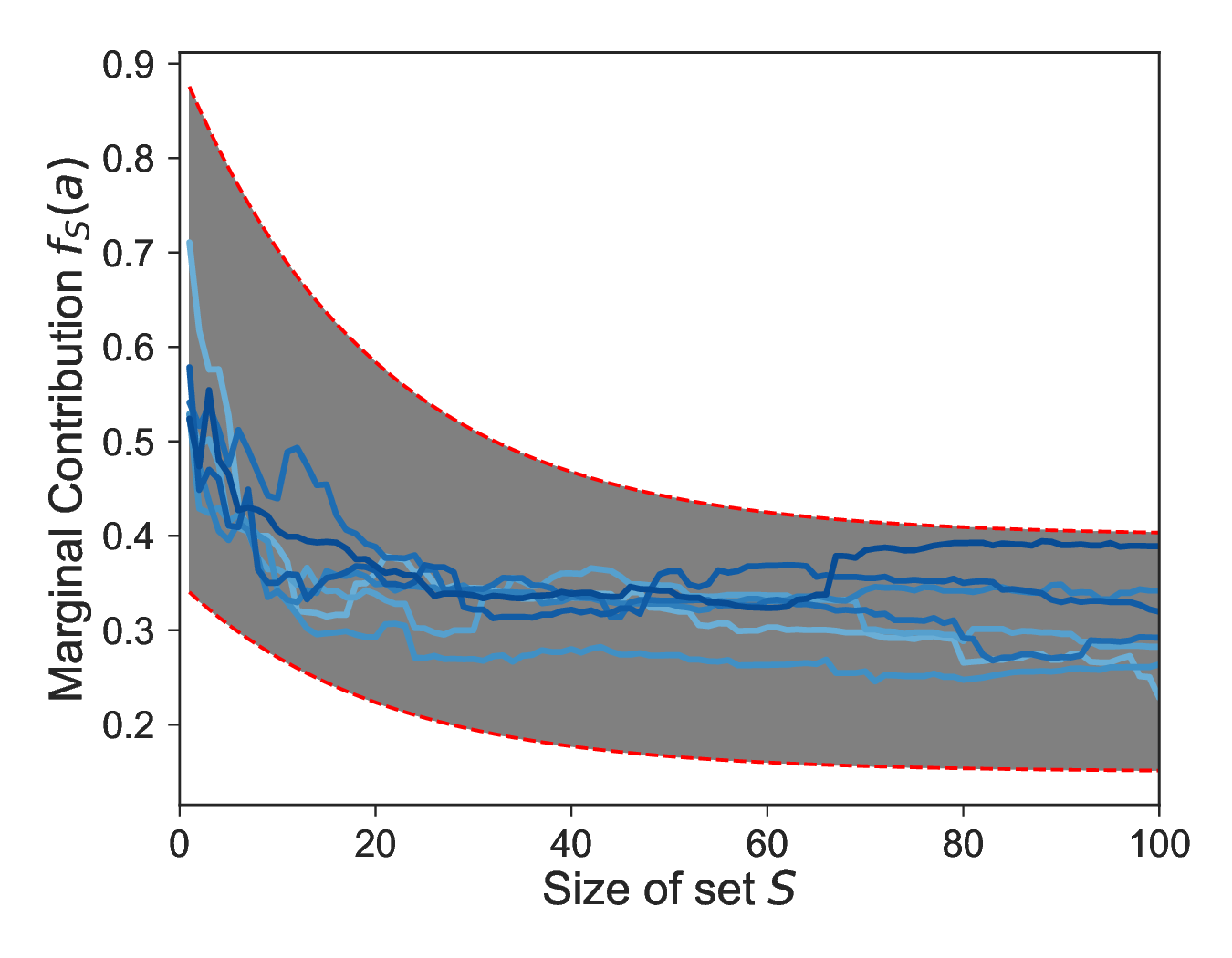}
\vspace{-1.8em}
\caption{Marginal contribution of differentially submodular function.}
\label{fig:toy}
\end{wrapfigure}

As we prove in this paper, applications such as feature selection for regression and classification as well as experimental design are all $\gamma^2$-differentially submodular, where $\gamma$ corresponds to their weak submodularity ratios~\cite{elenberg2018,bian2017}.  The power of this characterization is that it allows for parallelization with strong approximation guarantees. We do this by designing an adaptive sampling algorithm that leverages the differential submodularity structure and has bounded approximation guarantees in terms of the differential submodularity ratios.  

\subsection{Main results} 
Our main result is that for objectives such as feature selection for regression and classification and Bayesian A-optimality experimental design which are all $\gamma$-weakly submodular, there is an approximation guarantee arbitrarily close to $1-1/e^{\gamma^4}$ for maximization under cardinality constraints in $\mathcal O(\log n)$ adaptive rounds (see adaptivity definition in Section \ref{sec:prelim}). Thus, while the approximation is inferior to the $1- 1/e^{\gamma }$ obtained by greedy, our algorithm has exponentially fewer rounds. Importantly, using experiments we show that empirically it has comparable terminal values to the greedy algorithm, greatly outperforms its theoretical lower bound, and obtains the result with two to eight-fold speedups. We achieve our result by proving these objectives are $\alpha$-differentially submodular and designing an adaptive sampling algorithm that gives a $1-1/e^{\alpha^2}$ approximation for maximizing any $\alpha$-differentially submodular function under a cardinality constraint. 

\paragraph{Conceptual overview.} For the past decade, fundamental problems in machine learning have been analyzed through relaxed notions of submodularity (See details on different relaxations of submodularity and relationship to differential submodularity in Appendix \ref{appendix:notions}). Our main conceptual contribution is the framework of differential submodularity which is purposefully designed to enable fast parallelization techniques that previously-studied relaxations of submodularity do not. Specifically, although stronger than weak submodularity, we can prove direct relationships between objectives' weak submodularity ratios and their differential submodularity ratios which allows getting strong approximations and exponentially faster parallel runtime. We note that differential submodularity is also applicable to more recent parallel optimization techniques such as adaptive sequencing~\cite{balkanski2018matroid}.

\paragraph {Technical overview.} From a purely technical perspective, there are two major challenges addressed in this work.  The first pertains to the characterization of the objectives in terms of differential submodularity and the second is the design of an adaptive sampling algorithm for differentially submodular functions.  Previous adaptive sampling algorithms are purposefully designed for submodular functions and cannot be applied when the objective function is not submodular (example in Appendix \ref{appendix:toy2}). In these cases, the marginal contribution of individual elements is not necessarily subadditive to the marginal contribution of the set of elements combined. Thus, the standard analysis of adaptive sampling, where we attempt to add large sets of elements to the solution set by assessing the value of individual elements, does not hold. By leveraging the fact that marginal contributions of differentially submodular functions can be bounded by marginal contributions of submodular functions, we can approximate the marginal contribution of a set by assessing the marginal contribution of its elements. This framework allows us to leverage parallelizable algorithms to show a stronger approximation guarantee in exponentially fewer rounds.

\paragraph {Paper organization.} \ We first introduce preliminary definitions in Section \ref{sec:prelim} followed by introducing our main framework of differential submodularity and its reduction to feature selection and experimental design objectives in Section \ref{sec:obj}. We then introduce an algorithm for selection problems using adaptive sampling in Section \ref{sec:algo} and conclude with experiments in Section \ref{sec:exp}. Due to space constraints, most proofs of the analysis are deferred to the Appendix.

\section{Preliminaries} \label{sec:prelim}

For a positive integer $n$, we use $[n]$ to denote the set $\{1,2,\ldots,n\}$. Boldface lower and upper case letters denote vectors and matrices respectively: $\ba, \bx, \by$ represent vectors and $\bA,\bX, \bY$ represent matrices. Unbolded lower and upper case letters present elements and sets respectively: $a,x,y$ represent elements and $A,X,Y$ represent sets. 
For a matrix $\bX \in \rrr^{d\times n}$ and $S\subseteq [n]$, we denote submatrices by column indices by $\bX_S$. For vectors, we use $\bx_S$ to denote supports $\text{supp}(\bx) \subseteq S$. To connect the discrete function $f(S)$ to a continuous function, we let $f(S) = \ell(\bw^{(S)})$, where $\bw^{(S)}$ denotes the $\bw$ that maximizes $\ell(\cdot)$ subject to $\text{supp}(\bw) \subseteq S$.

\paragraph{Submodularity and weak submodularity.} A function $f : 2^N \rightarrow \rrr_+$ is submodular if $f_S(a) \geq f_T (a)$ for all $a\in N\backslash T$ and $S\subseteq T \subseteq N$. It is {\it monotone} if $f(S)\leq f(T)$ for all $S\subseteq T$. We assume that $f$ is normalized and non-negative, i.e., $0 \leq f(S) \leq 1$ for all $S \subseteq N$, and monotone.  
The concept of \emph{weak submodularity} is a relaxation of submodularity, defined via the {\it submodularity ratio}:

\begin{definition} \label{def:ratio}~\cite{das2011}
The \textbf{submodularity ratio} of $f: 2^N \rightarrow \rrr_+$ is defined as, for all $A\subseteq N$, 
$$\gamma_k = \min_{A\subseteq N, S: |A|\leq k} \frac{\sum_{a\in A} f_S(a)}{f_S(A)}.$$
\end{definition}

Functions with submodularity ratios $\gamma = \min_{k} \gamma_k < 1$ are $\gamma$-{\it weakly submodular}. 

\paragraph{Adaptivity.}
The {\it adaptivity} of algorithms refers to the number of sequential rounds of queries it makes when polynomially-many  queries can be executed in parallel in each round. 

\begin{definition}
For a function $f$, an algorithm is $r$-adaptive if every query $f(S)$ given a set $S$ occurs at a round $i \in [r]$ such that $S$ is independent of the values $f(S')$ of all other queries at round $i$.
\end{definition}

Adaptivity is an information theoretic measure of parallel-runtime that can be translated to standard parallel computation frameworks such as PRAM (See Appendix \ref{app:pram}). Therefore, like all previous work on adaptivity on submodular maximization, we are interested in algorithms that have low adaptivity since they are parallelizable and scalable for large datasets \cite{balkanski2019, balkanski2018, chekuri2019, chekuri2018matroid, balkanski2018non, chen2018, ene2018, farbach2019, balkanski2018matroid, ene2019,FMZ19}.


\section{Feature Selection and A-Optimal Design are Differentially Submodular }\label{sec:obj}
We begin by characterizing differential submodularity in terms of \emph{restricted strong concavity} and \emph{restricted smoothness} defined as follows.
\begin{definition}
\cite{elenberg2018}
Let $\Omega$ be a subset of $\rrr^n \times \rrr^n$ and $\ell : \rrr^n \rightarrow \rrr$ be a continuously differentiable function. A function $\ell$ is \textbf{restricted strong concave} (RSC) with parameter $m_\Omega$ and \textbf{restricted smooth} (RSM) with parameter $M_\Omega$ if, for all $(\by, \bx) \in \Omega$,
\begin{eqnarray}
- \frac{m_\Omega}{2} \| \by - \bx \|^2_2 &\geq& \ell(\by) - \ell(\bx) - \langle \nabla \ell(\bx), \by - \bx\rangle 
\geq -\frac{M_\Omega}{2} \| \by - \bx \|^2_2 \nonumber 
\end{eqnarray}

\end{definition}

Before connecting our notion of differential submodularity to RSC/RSM properties, we first define concavity and smoothness parameters on subsets of $\Omega$. If $\Omega' \subseteq \Omega$, then $M_{\Omega'} \leq M_\Omega$ and $m_{\Omega'} \geq m_{\Omega}$. 
\begin{definition}\label{def:rsc}
We define the domain of $s$-sparse vectors as $\Omega_{s} = \{(\bx, \by) : \|\bx\|_0 \leq s, \|\by\|_0 \leq s, \|\bx - \by\|_0 \leq s\}$. If $ t\geq s$, $M_s \leq M_t$ and $m_s\geq m_t$.
\end{definition}

\begin{theorem}\label{thm:param}
Suppose $\ell(\cdot)$ is RSC/RSM on $s$-sparse subdomains $\Omega_s$ with parameters $m_s,M_s$ for $s \leq 2k$. Then, for $t = |S| + k, s = |S| + 1$, the objective $f(S) = \ell(\bw^{(S)})$  is differentially submodular s.t. for $S,A\subseteq N$, $|A| \leq k$,
$ \frac{m_{s}}{M_{t}} \tilde f_S(A) \leq f_S(A) \leq \frac{M_{s}}{m_{t}} \tilde f_S(A)$,
where $\tilde f_S(A) = \sum_{a\in A} f_S(a)$.
\end{theorem}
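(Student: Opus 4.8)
The plan is to route every marginal of $f$ through the gradient of $\ell$ at the constrained optimum $\bw^{(S)}$: I will sandwich both $f_S(A)$ and each singleton marginal $f_S(a)$ between multiples of squared gradient coordinates, and then eliminate the gradient to compare $f_S(A)$ with $\tilde f_S(A)$. The workhorse throughout is the first-order optimality of $\bw^{(S)}$, namely $\nabla_i \ell(\bw^{(S)}) = 0$ for every $i \in S$, so that perturbations supported on $S$ contribute nothing to the linear term of the RSC/RSM expansions.

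First I would handle single elements. Fix $S$ and $a \notin S$, write $\bw = \bw^{(S)}$, and compare $\bw$ with the perturbation $\bw + \eta\,\mathbf{e}_a$. Applying the RSM (lower) inequality and optimizing the resulting quadratic in $\eta$ gives $f_S(a) = \ell(\bw^{(S\cup a)}) - \ell(\bw) \geq (\nabla_a\ell(\bw))^2 / (2M_s)$, since $\bw^{(S\cup a)}$ is a maximizer over a larger support. Symmetrically, applying the RSC (upper) inequality at $\bw^{(S\cup a)}$ and maximizing the concave quadratic surrogate over all directions supported on $S\cup\{a\}$ — where optimality kills the $S$-coordinates of the linear term — gives $f_S(a) \leq (\nabla_a\ell(\bw))^2/(2m_s)$. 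Summing over $a\in A$ yields
$$\frac{\|\nabla_A\ell(\bw^{(S)})\|^2}{2M_s} \leq \tilde f_S(A) \leq \frac{\|\nabla_A\ell(\bw^{(S)})\|^2}{2m_s}.$$

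Next I would repeat the identical argument with the whole block $A$ in place of a single coordinate: perturbing $\bw$ simultaneously in all coordinates of $A$ and using RSM gives the lower bound, while applying RSC at $\bw^{(S\cup A)}$ and maximizing over directions supported on $S\cup A$ gives the upper bound. Because the supporting vectors now have sparsity at most $|S|+k$, the relevant parameters are $m_t, M_t$ with $t=|S|+k$, producing
$$\frac{\|\nabla_A\ell(\bw^{(S)})\|^2}{2M_t} \leq f_S(A) \leq \frac{\|\nabla_A\ell(\bw^{(S)})\|^2}{2m_t}.$$
Chaining the two displays to cancel $\|\nabla_A\ell(\bw^{(S)})\|^2$ — lower-bounding $f_S(A)$ by the gradient and the gradient by $\tilde f_S(A)$, and vice versa for the upper side — immediately gives $\frac{m_s}{M_t}\tilde f_S(A) \leq f_S(A) \leq \frac{M_s}{m_t}\tilde f_S(A)$, the claimed inequality.

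Finally, to upgrade this inequality to differential submodularity in the sense of the definition, I must exhibit genuine submodular functions $g,h$ with $g_S(A)\leq f_S(A)\leq h_S(A)$ and $g_S(A)\geq \alpha\, h_S(A)$; the inequality then forces $\alpha=\tfrac{m_s m_t}{M_s M_t}$, consistent with the $\gamma^2$ ratio quoted in the introduction. The natural candidates are the constrained maximizers of the quadratic lower and upper models of $\ell$ supplied by RSM and RSC, whose optimal values are separable over the chosen support and hence modular, therefore submodular. I expect this last step to be the main obstacle: those quadratic models are anchored at the moving point $\bw^{(S)}$, so the surrogate's marginal structure depends on $S$, and the real work is to produce \emph{fixed} submodular $g,h$ whose marginals dominate and are dominated by $f_S(A)$ for every pair $S,A$ simultaneously, rather than merely bracketing the function values $f(S)$. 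Reconciling this $S$-dependence — so that $\tilde f_S(A)$, which is itself not the marginal of any single set function, is bracketed by bona fide submodular marginals with the stated ratio — is where the argument is most delicate.
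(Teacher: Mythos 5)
Your proof is correct and follows essentially the same route as the paper: both arguments sandwich $f_S(A)$ and $\tilde f_S(A)$ between multiples of $\|\nabla \ell(\bw^{(S)})_A\|_2^2$ via the RSM/RSC quadratic bounds at $\bw^{(S)}$ (with first-order optimality killing the $S$-coordinates) and then chain, the only cosmetic difference being that the paper proves the lower direction explicitly and imports the upper direction as the submodularity-ratio bound of Elenberg et al. The concern in your final paragraph --- that $\tilde f_S(A)$ is not the marginal of any fixed set function, so the bracketing functions are not obviously submodular in the sense of the definition --- is a legitimate observation, but the paper's own proof stops at the same inequality and does not resolve it either, so it is not a gap relative to the argument you were asked to reproduce.
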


\begin{proof}
We first prove the lower bound of the inequality. We define $\bx_{(S \cup A)} = \frac{1}{M_{t}} \nabla \ell (\bw^{(S)})_A + \bw^{(S)}$ and  use the strong concavity of $\ell(\cdot)$ to lower bound  $f_S(A)$:
\begin{eqnarray}
f_S(A) 
\geq  \ell(\bx_{(S\cup A)}) - \ell(\bw^{(S)}) 
&\geq& \langle \nabla \ell (\bw ^{(S)}), \bx_{(S\cup A)} - \bw^{(S)} \rangle 
- \frac{M_{t}}{2}\| \bx_{(S\cup A)} - \bw^{(S)} \|^2_2
\nonumber 
\\
&\geq& \frac{1}{2M_{t}} \|  \nabla \ell (\bw ^{(S)})_A \|^2_2 \label{eqn:lower}
\end{eqnarray}
where the first inequality follows from the optimality of $\ell(\bw^{(S\cup A)})$ for vectors with support $S\cup A$ and the last inequality is by the definition of $\bx_{(S \cup A)}$.

We also can use smoothness of $\ell(\cdot)$ to upper bound the marginal contribution of each element in $A$ to $S$, $f_S(a)$. We define $\bx_{(S \cup a)} = \frac{1}{m_{ s}} \nabla \ell (\bw^{(S)})_a + \bw^{(S)}$. For $a\in A$,
\begin{eqnarray}
f_S(a) = \ell(\bw^{(S\cup a)}) - \ell(\bw^{(S)}) 
&\leq&  \langle \nabla \ell (\bw ^{(S)}), \bx_{(S \cup a)} - \bw^{(S)} \rangle  
- \frac{m_{ s}}{2}\| \bx_{(S \cup a)} - \bw^{(S)} \|^2_2 \nonumber \\
&\leq& \frac{1}{2m_{ s}} \|  \nabla \ell (\bw ^{(S)})_a \|^2_2 
\end{eqnarray}
where the last inequality follows from the definition of $\bx_{(S \cup a)}$.
Summing across all $a\in A$, we get 
\begin{eqnarray}
\sum_{a\in A} f_S(a) &\leq& \sum_{a\in A} \frac{1}{2m_{ s}} \|  \nabla \ell (\bw ^{(S)})_a \|^2_2 
= \frac{1}{2m_{s}} \|  \nabla \ell (\bw ^{(S)})_A \|^2_2 \label{eqn:upper}
\end{eqnarray}

By combining (\ref{eqn:lower}) and (\ref{eqn:upper}), we can get the desired lower bound of $f_S(A)$.
To get the upper bound on the marginals, we can use the lower bound of submodularity ratio $\gamma_{S, k}$ of $f$ from Elenberg et al. \cite{elenberg2018}, which is no less than $\frac{m_{t}}{M_{s}}$. Then, by letting $\tilde f_S(A) = \sum_{a\in A} f_S(a)$, we can complete the proof and show that the marginals can be bounded.
\end{proof}

We can further generalize the previous lemma to all sets $S,A\subseteq N$, by using the general RSC/RSM parameters $m, M$ associated with $\Omega_n$, where $n \geq t,s$. From Definition \ref{def:rsc}, since $\Omega_{s} \subseteq \Omega_{t} \subseteq \Omega_n$, $M_{s} \leq M_{t} \leq M$ and $m_{s} \geq m_{t} \geq m$. Thus, we can weaken the bounds from Lemma \ref{thm:param} to get $\frac{m}{M} \tilde f_S(A) \leq f_S(A) \leq \frac{M}{m} \tilde f_S(A)$ which is a $\gamma^2$-differentially submodular function for $\gamma = \frac{m}{M}$.

\subsection{Differential submodularity bounds for statistical subset selection problems}
We now connect differential submodularity to feature selection and experimental design objectives.  We also show that even when adding diversity-promoting terms $d(S)$ as in \cite{das2012} the functions remain differentially submodular. Due to space limitations, proofs are deferred to Appendix \ref{appendix:main2}.
 
 {\bf Feature selection for regression.} \ For a response variable $\by \in \rrr^d$ and feature matrix $\bX \in \rrr^{d\times n}$, the objective is the maximization of the $\ell_2$-utility function that represents the variance reduction of $\by$ given the feature set $S$: 
\begin{eqnarray}
\ell_{\texttt{reg}}(\by, \bw^{(S)}) = \|\by\|_2^2 - \| \by-\bX_S\bw\|^2_2 \nonumber
\end{eqnarray}

We can bound the marginals by eigenvalues of the feature covariance matrix. We denote the minimum and maximum eigenvalues of the $k$-sparse feature covariance matrix by $\lambda_{min}(k)$ and $\lambda_{max}(k)$.

\begin{corollary} \label{corollary:lin}
Let $\gamma = \frac{\lambda_{min}(2k)}{\lambda_{max}(2k)}$ and $d : 2^N \rightarrow \rrr_+$ be a submodular diversity function.  Then $f(S) = \ell_{\texttt{reg}}(\bw^{(S)})$ and $f_{\texttt{div}}(S) = \ell_{\texttt{reg}}(\bw^{(S)}) + d(S)$ are $\gamma^2$-differentially submodular.
\end{corollary}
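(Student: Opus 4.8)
The plan is to reduce the corollary to Theorem~\ref{thm:param}: I would identify the restricted-strong-concavity and restricted-smoothness constants of $\ell_{\texttt{reg}}$ with the extreme eigenvalues of the feature covariance, invoke that characterization (or its weakened $\Omega_n$ form in the following remark) to obtain $\gamma^2$-differential submodularity of $f$, and then argue separately that augmenting by a submodular diversity term leaves the property intact.

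For the parameter identification, note that $\ell_{\texttt{reg}}(\bw) = \|\by\|_2^2 - \|\by - \bX\bw\|_2^2$ is quadratic in $\bw$ with constant Hessian $-2\bX^\top\bX$, so for any two points $\bx, \bv$ the second-order remainder appearing in the RSC/RSM definition equals exactly $-(\bx - \bv)^\top \bX^\top \bX (\bx - \bv)$. Restricted to the $s$-sparse domain $\Omega_s$, the Rayleigh quotient of $\bX^\top\bX$ along the admissible directions lies between $\lambda_{min}(s)$ and $\lambda_{max}(s)$, which identifies $m_s = 2\lambda_{min}(s)$ and $M_s = 2\lambda_{max}(s)$. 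Since the factor $2$ is common to both parameters, it cancels in every ratio $m_s/M_t$ that enters Theorem~\ref{thm:param}.

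Under the constraint $|A|\le k$ (and $|S|\le k$), the relevant sparsity levels $s = |S|+1$ and $t = |S|+k$ never exceed $2k$; the monotonicity of the constants from Definition~\ref{def:rsc} then lets me replace the $S$-dependent coefficients by their uniform worst-case values at sparsity $2k$, namely $m_{2k}/M_{2k} = \lambda_{min}(2k)/\lambda_{max}(2k) = \gamma$. Reading off Theorem~\ref{thm:param} with these constants exhibits fixed submodular witnesses $g, h$ with $g_S(A) \le f_S(A) \le h_S(A)$ and $g_S(A) \ge (m_{2k}/M_{2k})^2 h_S(A) = \gamma^2 h_S(A)$, establishing that $f(S) = \ell_{\texttt{reg}}(\bw^{(S)})$ is $\gamma^2$-differentially submodular.

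Finally, for $f_{\texttt{div}} = f + d$ I would reuse these witnesses. Since $g + d$ and $h + d$ are submodular as sums of submodular functions, they sandwich $f + d$ because $(g+d)_S(A) = g_S(A) + d_S(A) \le f_S(A) + d_S(A) \le (h+d)_S(A)$, and as $d$ is monotone we have $d_S(A) \ge 0$, so with $\gamma^2 \le 1$ the ratio survives: $(g+d)_S(A) \ge \gamma^2 h_S(A) + \gamma^2 d_S(A) = \gamma^2 (h+d)_S(A)$. The main obstacle I expect is the bookkeeping of the third step: pinning down the sparsity levels $s, t \le 2k$ so that the ratio collapses cleanly to $\lambda_{min}(2k)/\lambda_{max}(2k)$, and confirming that the relevant eigenvalues are those of the $2k$-sparse principal submatrices rather than the full covariance. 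Both are precisely what the sparse RSC/RSM formulation of Theorem~\ref{thm:param} is designed to supply, so the diversity extension and the eigenvalue substitution are the only genuinely new work.
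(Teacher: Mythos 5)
Your proposal is correct and mirrors the paper's own proof: both identify the RSC/RSM constants of $\ell_{\texttt{reg}}$ with the sparse eigenvalues of the feature covariance (the factor of $2$ you carry cancels, as you note), invoke Theorem~\ref{thm:param} with $s=|S|+1$, $t=|S|+k\le 2k$, use monotonicity of the eigenvalues in the sparsity level to collapse the ratio to $\bigl(\lambda_{min}(2k)/\lambda_{max}(2k)\bigr)^2$, and handle the diversity term by adding $d$ to both witnesses and using $d_S(A)\ge 0$ together with $\gamma^2\le 1$. No substantive differences.
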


We note that \cite{das2011} use a different objective function to measure the goodness of fit $R^2$. In Appendix~\ref{appendix:eig}, we show an analogous bound for the objective used in~\cite{das2011}. Our lower bound is consistent with the result in Lemma 2.4 from Das and Kempe \cite{das2011}. 

\paragraph{Feature selection for classification.} For classification, we wish to select the best $k$ columns from $\bX \in \rrr^{d\times n}$ to predict a categorical variable $\by \in \rrr^d$. We use the following log-likelihood objective in logistic regression to select features. For a categorical variable $\by \in \rrr^d$, the objective in selecting the elements to form a solution set is the maximization of the log-likelihood function for a given $S$: 
\begin{eqnarray}
\ell_{\texttt{class}}(\by, \bw^{(S)}) = \sum_{i=1}^d y_i (\bX_S \bw) - \log(1+e^{ \bX_S \bw}) \nonumber
\end{eqnarray}

We denote $m$ and $M$ to be the RSC/RSM parameters on the feature matrix $\bX$.  For $\gamma=\frac{m}{M}$~\cite{elenberg2018} show that the feature selection objective for classification is $\gamma$-weakly submodular. 

\begin{corollary}  \label{corollary:log}
Let $\gamma = \frac{m}{M}$ and $d : 2^N \rightarrow \rrr_+$ be a submodular diversity function.  Then $f(S) = \ell_{\texttt{class}}(\bw^{(S)})$ and $f_{\texttt{div}}(S) =  \ell_{\texttt{class}}(\bw^{(S)}) + d(S)$ are $\gamma^2$-differentially submodular.
\end{corollary}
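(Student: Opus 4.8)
The plan is to reduce Corollary \ref{corollary:log} to the machinery already established in Theorem \ref{thm:param} in three moves: (i) certify that the logistic log-likelihood is RSC/RSM, (ii) convert the marginal sandwich produced by the theorem into genuine submodular bounding functions witnessing $\gamma^2$-differential submodularity, and (iii) absorb the diversity term $d(S)$ without degrading the ratio.

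First I would establish the restricted strong concavity/smoothness of $\ell_{\texttt{class}}(\by, \bw^{(S)})$. Since the objective is the logistic log-likelihood, its Hessian restricted to an $s$-sparse support is $-\bX_S^\top \mathbf{D}\, \bX_S$ with $\mathbf{D}$ the diagonal matrix of Bernoulli variances $\sigma(\cdot)(1-\sigma(\cdot))$, whose entries lie in $(0,1/4]$. Bounding the extreme eigenvalues of this restricted Hessian uniformly over sparse supports yields constants $m \le M$ so that $\ell_{\texttt{class}}$ is RSC with parameter $m$ and RSM with parameter $M$ on $\Omega_n$; this is exactly the estimate used by \cite{elenberg2018} to obtain $\gamma = m/M$ as the weak submodularity ratio, so I would invoke it directly rather than re-deriving the eigenvalue bounds.

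Next, with $\ell_{\texttt{class}}$ shown RSC/RSM, I would apply Theorem \ref{thm:param} and its generalization to $\Omega_n$ to obtain the marginal sandwich $\frac{m}{M}\tilde f_S(A) \le f_S(A) \le \frac{M}{m}\tilde f_S(A)$, where $\tilde f_S(A)=\sum_{a\in A} f_S(a)$ and $f(S)=\ell_{\texttt{class}}(\bw^{(S)})$. To meet the definition of differential submodularity I then need \emph{actual} submodular functions $g,h$ with $g_S(A)\le f_S(A)\le h_S(A)$ and $g_S(A)\ge \gamma^2 h_S(A)$ for $\gamma=m/M$. Taking $g=\frac{m}{M}\,\widehat{f}$ and $h=\frac{M}{m}\,\widehat{f}$ for a surrogate $\widehat f$ whose marginals reproduce $\tilde f_S(A)$, the ratio is immediate since $\frac{m/M}{M/m}=(m/M)^2=\gamma^2$; the content of this step is verifying that the surrogate is genuinely submodular rather than merely bounding marginals pointwise. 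I expect this to be the main obstacle: Theorem \ref{thm:param} controls marginals through the additive quantity $\tilde f$, which is not itself a set function with decreasing marginals, so I would instead seek a set function built from the RSC/RSM quantities appearing in the proof of Theorem \ref{thm:param} (e.g.\ a rescaling of the gradient-norm term $\tfrac12\|\nabla\ell(\bw^{(S)})\|_2^2$) and verify that it has decreasing marginal contributions.

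Finally, for $f_{\texttt{div}}(S)=\ell_{\texttt{class}}(\bw^{(S)})+d(S)$ with $d$ monotone submodular, I would take $g'=g+d$ and $h'=h+d$, which are submodular as sums of submodular functions. The marginal sandwich is preserved since $d_S(A)$ adds to all three terms, and the ratio survives because $d_S(A)\ge 0$ and $\gamma^2\le 1$ give $d_S(A)\ge \gamma^2 d_S(A)$, whence $g'_S(A)=g_S(A)+d_S(A)\ge \gamma^2 h_S(A)+\gamma^2 d_S(A)=\gamma^2 h'_S(A)$. This establishes that both $f$ and $f_{\texttt{div}}$ are $\gamma^2$-differentially submodular.
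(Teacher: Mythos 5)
Your proposal follows the paper's proof essentially verbatim: invoke the RSC/RSM parameters $m,M$ of the logistic log-likelihood from Elenberg et al.\ rather than re-deriving the Hessian bounds, apply Theorem \ref{thm:param} (in its $\Omega_n$ form) to obtain $\frac{m}{M}\tilde f_S(A)\le f_S(A)\le \frac{M}{m}\tilde f_S(A)$, and absorb the diversity term exactly as in Corollary \ref{corollary:lin} by adding $d_S(A)\ge 0$ to both bounds. The one point where you go beyond the paper is your worry about whether the sandwiching quantities are marginals of \emph{genuine} submodular set functions $g,h$: the paper's own proofs simply declare $g_S(A)=\frac{m}{M}\tilde f_S(A)$ and $h_S(A)=\frac{M}{m}\tilde f_S(A)$ to be the witnesses without constructing such set functions, so the obstacle you flag is a gap in the paper's exposition rather than a defect of your argument.
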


\paragraph{Bayesian A-optimality for experimental design.} In experimental design, we wish to select the set of experimental samples $\bx_i$ from $\bX \in \rrr^{d\times n}$ to maximally reduce variance in the parameter posterior distribution. We now show that the objective for selecting diverse experiments using Bayesian A-optimality criterion is differentially submodular. We denote $\bLambda = \beta^2\bI$ as the prior that takes the form of an isotropic Gaussian and $\sigma^2$ as variance (See Appendix \ref{app:bayes} for more details).
 
\begin{corollary} \label{corollary:bayes}
Let $\gamma = \frac{\beta^2}{\|\bX\|^2(\beta^2+\sigma^{-2} \|\bX\|^2)}$ and $d : 2^N \rightarrow \rrr_+$ be a submodular diversity function, then the objectives of Bayesian A-optimality defined by $f_{\texttt{A-opt}}(S) = \Tr(\bLambda^{-1}) - \Tr((\bLambda + \sigma^{-2}\bX_S \bX_S^T)^{-1})$ and the diverse analog defined by $f_{\texttt{A-div}}(S) = f_{\texttt{A-opt}}(S) + d(S)$ are $\gamma^2$-differentially submodular.
\end{corollary}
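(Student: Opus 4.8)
The plan is to obtain Corollary~\ref{corollary:bayes} from the reduction established just after Theorem~\ref{thm:param}: it suffices to sandwich the set marginal $f_S(A)$ between $\frac{m}{M}\tilde f_S(A)$ and $\frac{M}{m}\tilde f_S(A)$ for spectral parameters $m,M$ with $m/M=\gamma$, after which $f_{\texttt{A-opt}}$ is automatically $\gamma^2$-differentially submodular with bounding submodular functions $g_S(A)=\frac{m}{M}\tilde f_S(A)$ and $h_S(A)=\frac{M}{m}\tilde f_S(A)$, since their ratio is $(m/M)^2=\gamma^2$. Unlike the regression and classification objectives, A-optimality indexes \emph{samples} (columns of $\bX$) rather than coordinates of the weight vector, so rather than invoke Theorem~\ref{thm:param} verbatim I would derive the two inequalities directly from the trace objective. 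The upper inequality $f_S(A)\le \gamma^{-1}\tilde f_S(A)$ is exactly the weak submodularity of A-optimality with ratio $\gamma$ established in~\cite{bian2017}; the new content is the matching lower inequality $f_S(A)\ge \gamma\,\tilde f_S(A)$.

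To produce these inequalities I would write closed forms for the marginals using the matrix-inversion identities. Writing $\mathbf{B}_S=\bLambda+\sigma^{-2}\bX_S\bX_S^T$, Sherman--Morrison gives the single-element marginal $f_S(a)=\frac{\sigma^{-2}\|\mathbf{B}_S^{-1}\bx_a\|_2^2}{1+\sigma^{-2}\bx_a^T\mathbf{B}_S^{-1}\bx_a}$, and the Woodbury identity gives the set marginal $f_S(A)=\sigma^{-2}\Tr\big((\bI+\sigma^{-2}\bX_A^T\mathbf{B}_S^{-1}\bX_A)^{-1}\bX_A^T\mathbf{B}_S^{-2}\bX_A\big)$. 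Both are quadratic forms in the selected columns weighted by $\mathbf{B}_S^{-2}$ and modulated by a positive-definite factor whose eigenvalues lie in $(0,1]$. Since $\bLambda=\beta^2\bI$, the eigenvalues of $\mathbf{B}_S$ lie in $[\beta^2,\ \beta^2+\sigma^{-2}\|\bX\|^2]$, so those of $\mathbf{B}_S^{-1}$ and $\mathbf{B}_S^{-2}$ are pinched between the corresponding reciprocals. Substituting these extreme eigenvalue bounds into the two closed forms and taking the worst-case ratio is what yields $\gamma=\frac{\beta^2}{\|\bX\|^2(\beta^2+\sigma^{-2}\|\bX\|^2)}$, giving both $f_S(A)\ge\gamma\,\tilde f_S(A)$ and $f_S(A)\le\gamma^{-1}\tilde f_S(A)$, hence the $\gamma^2$-differential submodularity of $f_{\texttt{A-opt}}$.

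For the diverse objective $f_{\texttt{A-div}}=f_{\texttt{A-opt}}+d$ with $d$ submodular and monotone, I would take the bounding functions $g+d$ and $h+d$, which remain submodular as sums of submodular functions and still sandwich $f_{\texttt{A-div}}$. The ratio condition survives because $(g+d)_S(A)-\gamma^2(h+d)_S(A)=\big(g_S(A)-\gamma^2 h_S(A)\big)+(1-\gamma^2)d_S(A)\ge 0$, using $g_S(A)\ge\gamma^2 h_S(A)$, $\gamma^2\le 1$, and $d_S(A)\ge 0$. This establishes that $f_{\texttt{A-div}}$ is also $\gamma^2$-differentially submodular and completes the proof.

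I expect the main obstacle to be the spectral bookkeeping in the lower inequality for sets $A$: unlike the single-element marginal, the set marginal carries the coupling factor $(\bI+\sigma^{-2}\bX_A^T\mathbf{B}_S^{-1}\bX_A)^{-1}$, which mixes the contributions of the columns of $\bX_A$, so crudely bounding each eigenvalue separately can lose the exact constant. Showing that the worst case over all $S$ and $A$ collapses precisely to $\gamma=\frac{\beta^2}{\|\bX\|^2(\beta^2+\sigma^{-2}\|\bX\|^2)}$ — rather than a looser product of spectral ratios — is the delicate step, and it is what ties the differential submodularity ratio to the weak submodularity ratio of~\cite{bian2017}.
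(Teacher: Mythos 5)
Your overall architecture is the same as the paper's: take the upper inequality $f_S(A)\le\gamma^{-1}\tilde f_S(A)$ from the weak-submodularity bound of Bian et al., prove the new lower inequality $f_S(A)\ge\gamma\,\tilde f_S(A)$ by spectral bounds, set $g_S(A)=\gamma\tilde f_S(A)$ and $h_S(A)=\gamma^{-1}\tilde f_S(A)$ so that $g/h=\gamma^2$, and absorb the diversity term by adding $d$ to both bounding functions using $d_S(A)\ge0$. The gap is exactly in the step you flag as delicate, and as sketched it does not close. From the Woodbury form $f_S(A)=\sigma^{-2}\Tr\bigl((\bI+\sigma^{-2}\bX_A^T\mathbf{B}_S^{-1}\bX_A)^{-1}\bX_A^T\mathbf{B}_S^{-2}\bX_A\bigr)$ with $\mathbf{B}_S=\bLambda+\sigma^{-2}\bX_S\bX_S^T$, pinching the eigenvalues of $\mathbf{B}_S^{-2}$ handles the factor $\bX_A^T\mathbf{B}_S^{-2}\bX_A$, but you must also lower-bound the coupling factor, whose smallest eigenvalue is only guaranteed to be $\ge\beta^2/(\beta^2+\sigma^{-2}\|\bX\|^2)$. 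The naive substitution therefore yields $f_S(A)\ge\sigma^{-2}\beta^2|A|/(\beta^2+\sigma^{-2}\|\bX\|^2)^{3}$, which is weaker than the required $\sigma^{-2}|A|/(\beta^2+\sigma^{-2}\|\bX\|^2)^{2}$ by precisely the factor $\beta^2/(\beta^2+\sigma^{-2}\|\bX\|^2)$, and the resulting differential-submodularity ratio falls strictly below $\gamma^2$. You do not supply the argument that recovers the exact constant.

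The paper's proof sidesteps the coupling factor entirely by never invoking Woodbury: it writes $f_S(A)=\sum_{i}\bigl[(\beta^2+\sigma^{-2}\sigma_i^2(\bX_S))^{-1}-(\beta^2+\sigma^{-2}\sigma_i^2(\bX_{S\cup A}))^{-1}\bigr]$, uses $\sigma_i^2(\bX_{S\cup A})\ge\sigma_i^2(\bX_S)$ to bound each summand below by $\sigma^{-2}\bigl(\sigma_i^2(\bX_{S\cup A})-\sigma_i^2(\bX_S)\bigr)/(\beta^2+\sigma^{-2}\|\bX\|^2)^2$, and then telescopes the traces: $\sum_i\bigl(\sigma_i^2(\bX_{S\cup A})-\sigma_i^2(\bX_S)\bigr)=\Tr(\bX_A\bX_A^T)=\sum_{a\in A}\|\bx_a\|^2=|A|$ for normalized columns. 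This gives $f_S(A)\ge\sigma^{-2}|A|/(\beta^2+\sigma^{-2}\|\bX\|^2)^2$ with no loss, and pairing it with the direct upper bound $\sum_{a\in A}f_S(a)\le|A|\,\sigma^{-2}\|\bX\|^2/\bigl(\beta^2(\beta^2+\sigma^{-2}\|\bX\|^2)\bigr)$ produces the ratio $\gamma$ exactly. If you replace your Woodbury computation for the set marginal with this telescoping argument (your Sherman--Morrison formula for the singleton marginal is fine, and your treatment of the diversity term matches the paper's), the rest of your proof goes through.
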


\section{The Algorithm} \label{sec:algo}
We now present the \apx (\textsc{Differentially-Adaptive-SHampling}) algorithm for maximizing differentially submodular objectives with logarithmic adaptivity. 
Similar to recent works on low adaptivity algorithms \cite{balkanski2019, balkanski2018, chekuri2019, chekuri2018matroid, balkanski2018non, chen2018, ene2018, farbach2019, balkanski2018matroid, ene2019}, this algorithm is a variant of the adaptive sampling technique introduced in~\cite{BS18a}.  The adaptive sampling algorithm for submodular functions, where $\alpha=1$, is not guaranteed to terminate for non-submodular functions (See Appendix \ref{appendix:toy2}). Thus, we design a variant to specifically address differential submodularity to parallelize the maximization of non-submodular objectives.

\begin{algorithm}
\caption{\apx$(N, r, \alpha)$}
    \begin{algorithmic}[1]
    \STATE {\bf Input} Ground set $N$, number of outer-iterations $r$, 
    differential submodularity parameter $\alpha$
      \normalsize
      \STATE{$S \leftarrow \emptyset$, $X \leftarrow N$}
	\FOR{$r$ iterations}
      \STATE $t := (1-\epsilon) (f(O) - f(S))$
      
      \WHILE{$\mathbb E_{R\sim \mathcal U(X)} [f_S(R)] < \alpha^2 \frac{t}{r}$}
      	\STATE {$X \leftarrow X \backslash \{ a : \mathbb E_{R\sim \mathcal U(X)}[ f_{S \cup (R\backslash \{a\})} (a)] < \alpha(1+\frac{\epsilon}{2}) t/k
	\}$}
      \ENDWHILE
            \STATE{$S \leftarrow S \cup R$ where $R \sim \mathcal U(X)$}
     	\ENDFOR
       \STATE {\bf return} $S$
    \end{algorithmic}
 \end{algorithm}
 
\paragraph {Algorithm overview.} At each round, the \apx algorithm selects good elements determined by their individual marginal contributions and attempts to add a set of $k/r$ elements to the solution set $S$. The decision to label elements as "good" or "bad" depends on the threshold $t$ which quantifies the distance between the elements that have been selected and \texttt{OPT}. This elimination step takes place in the \texttt{while} loop and effectively filters out elements with low marginal contributions. The algorithm terminates when $k$ elements have been selected or when the value of $f(S)$ is sufficiently close to \texttt{OPT}.

The algorithm presented is an idealized version because we cannot exactly calculate expectations, and \texttt{OPT} and differential submodularity parameter $\alpha$ are unknown. We can estimate the expectations by increasing sampling of the oracle and we can guess \texttt{OPT} and $\alpha$ through parallelizing multiple guesses (See Appendix \ref{appendix:queries} for more details).

\paragraph {Algorithm analysis.}  We now outline the proof sketch of the approximation guarantee of $f(S)$ using \textsc{Dash}.  In our analysis, we denote the optimal solution as $\texttt{OPT} = f(O)$ where $O = \text{argmax}_{|S| \leq k} f(S)$ and $k$ is a cardinality constraint parameter. Proof details can be found in Appendix \ref{appendix:main}.

\begin{theorem} \label{thm:apx}
Let $f$ be a monotone, $\alpha$-differentially submodular function where $\alpha \in [0, 1]$, then, for any $\epsilon > 0$, \apx is a $\log_{1+\epsilon/2}(n)$ adaptive algorithm that obtains the following approximation for the set $S$ that is returned by the algorithm
$$f(S) \geq (1-1/e^{\alpha^2} - \epsilon) f(O).$$
\end{theorem}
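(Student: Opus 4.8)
The plan is to prove the adaptivity and approximation claims separately, both resting on one structural inequality that differential submodularity provides and that fails for merely weakly submodular $f$. The governing inequality, which I would establish first, is that for the uniformly random set $R \sim \mathcal{U}(X)$ of size $|R| = k/r$,
$$\E_R[f_S(R)] \geq \alpha\, \E_R\left[ \sum_{a \in R} f_{S \cup (R \setminus a)}(a) \right].$$
By definition there are submodular functions $g \leq f \leq h$ with $g_{S'}(a) \geq \alpha\, h_{S'}(a)$ for every $S'$. For a fixed $R$, telescoping $g$ along any ordering of its elements and using submodularity gives $f_S(R) \geq g_S(R) \geq \sum_{a \in R} g_{S\cup(R\setminus a)}(a) \geq \alpha \sum_{a \in R} h_{S\cup(R\setminus a)}(a) \geq \alpha \sum_{a\in R} f_{S\cup(R\setminus a)}(a)$; taking expectations yields the claim. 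This is precisely the reduction Theorem~\ref{thm:param} makes available.

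For adaptivity, I would show each pass of the \texttt{while} loop deletes at least a $\frac{\epsilon/2}{1+\epsilon/2}$ fraction of $X$. Suppose the guard $\E_R[f_S(R)] < \alpha^2 t / r$ holds. Substituting $\Pr[a \in R] = |R|/|X| = k/(r|X|)$ into the governing inequality, and noting every surviving element has $\E_R[f_{S\cup(R\setminus a)}(a)] \geq \alpha(1+\tfrac{\epsilon}{2}) t/k$, if more than $|X|/(1+\epsilon/2)$ elements survived then their contribution alone would force $\E_R[f_S(R)] \geq \alpha^2 t/r$, contradicting the guard. Hence $|X|$ contracts by a $(1+\epsilon/2)$ factor per pass; since $X$ is maintained across the outer loop and only ever shrinks from $n$, the total number of filtering rounds, and thus the adaptivity, is at most $\log_{1+\epsilon/2}(n)$.

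For the approximation, the second factor of $\alpha$ sits in the filter threshold $\alpha(1+\epsilon/2)t/k$, which is exactly what forces the guard to be violated on exit, so that upon leaving the loop $\E_R[f_S(R)] \geq \alpha^2 t/r = \alpha^2(1-\epsilon)(f(O)-f(S))/r$. Writing $\Delta_i = f(O) - f(S_i)$ for the gap after round $i$, adding $R$ gives $\E[\Delta_i] \leq (1 - \alpha^2(1-\epsilon)/r)\,\Delta_{i-1}$, and iterating over the $r$ outer rounds yields $\E[\Delta_r] \leq (1-\alpha^2(1-\epsilon)/r)^r f(O) \leq e^{-\alpha^2(1-\epsilon)} f(O)$, i.e. $f(S) \geq (1 - e^{-\alpha^2(1-\epsilon)}) f(O)$.

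The main obstacle, which couples both parts, is ruling out the degenerate exit in which $X$ empties before the guard is violated, leaving $R = \emptyset$ and no progress. Here I would argue the optimal elements survive filtering: since $\sum_{a \in O\setminus S} f_S(a) \geq g_S(O\setminus S) \geq \alpha f_S(O) \geq \alpha(f(O)-f(S))$, at least one element of $O \setminus S$ lies above the threshold, so while $f(S)$ remains far from $f(O)$ the loop must terminate through the guard and deliver the per-round gain. The delicate bookkeeping is that the filter conditions on the random $R$ through $\E_R[f_{S\cup(R\setminus a)}(a) \mid a \in R]$, whereas this survival bound is stated for the fixed set $O\setminus S$; reconciling the two conditionings, together with the $|R|=k/r$ accounting, is where the real work lies. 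Converting $e^{-\alpha^2(1-\epsilon)}$ into the stated $e^{-\alpha^2}+\epsilon$ is then routine: since $f(O)\leq 1$ and $e^{-\alpha^2(1-\epsilon)} = e^{-\alpha^2} + O(\epsilon)$, a reparametrization of $\epsilon$ absorbs the slack.
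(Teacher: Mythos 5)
Your overall architecture matches the paper's: the ``governing inequality'' $f_S(R)\geq g_S(R)\geq \sum_{a\in R}g_{S\cup(R\setminus a)}(a)\geq\alpha\sum_{a\in R}h_{S\cup(R\setminus a)}(a)\geq\alpha\sum_{a\in R}f_{S\cup(R\setminus a)}(a)$ is exactly the chain the paper uses in its filtering lemma (Lemma~\ref{lemma:log}), your contradiction argument for the $(1+\epsilon/2)$ shrinkage of $X$ is the paper's adaptivity proof, and your per-round recursion $\E[\Delta_i]\leq(1-\alpha^2(1-\epsilon)/r)\Delta_{i-1}$ is the concluding induction the paper only sketches. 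However, there is a genuine gap at the step you yourself flag as ``where the real work lies,'' and it is not a bookkeeping detail: it is the central lemma of the proof (the paper's Lemma~\ref{lemma:apx}). The while loop does not exit ``through the guard'' automatically; one must show that after the filtering has shrunk $X$ down to at most $k/r$ elements, those survivors collectively satisfy $f_S(X_\rho)\geq\frac{\alpha^2}{r}(1-\epsilon)(f(O)-f(S))$, so that $R=X_\rho$ violates the guard and the loop terminates with the claimed progress. Your proposed fix --- that $\sum_{a\in O\setminus S}f_S(a)\geq\alpha(f(O)-f(S))$ forces at least one optimal element above the threshold --- is insufficient in two ways. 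Quantitatively, a single surviving optimal element does not give $\E_R[f_S(R)]\geq\alpha^2 t/r$; you need a whole subset $T\subseteq O$ of survivors whose joint marginal value is $\Omega(\epsilon)\cdot(f(O)-f(S))$, which is why the paper introduces the quantities $\Delta_l$, $\Delta$, the set $T=\{o_l:\Delta_l\geq(1-\epsilon/4)\Delta\}$, and the averaging bound $\sum_{o_l\in T}\Delta_l\geq k\frac{\epsilon}{4}\Delta$, together with the requirement $r\geq 20\rho\epsilon^{-1}$ that makes the constants close. Technically, your survival bound controls $f_S(a)$ while the filter tests $\E_R[f_{S\cup(R\setminus a)}(a)]$, and for a non-submodular $f$ the latter can be far smaller; the paper bridges this by lower-bounding $h_{S\cup(\cup_i R_i\setminus o_l)}(o_l)\geq h_{S\cup O_{l-1}\cup(\cup_i R_i\setminus o_l)}(o_l)=\Delta_l$ via submodularity of $h$ and then invoking Lemma~\ref{lemma:bound}, $\E[f_{S\cup(\cup_i R_i)}(O)]\geq(1-\rho/r)(f(O)-f(S))$, a lemma absent from your outline. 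Until this survival-of-$T$ argument is carried out, both the termination of the while loop and the per-round gain --- and hence both halves of the theorem --- remain unproven.
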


The key adaptation for $\alpha$-differential submodular functions appears in the thresholds of the algorithm, one to filter out elements and another to lower bound the marginal contribution of the set added in each round. The additional $\alpha$ factor in the \texttt{while} condition compared to the single element marginal contribution threshold is a result of differential submodularity properties and guarantees termination. 

To prove the theorem, we lower bound the marginal contribution of selected elements $X_\rho$ at each iteration $\rho$: $f_S(X_\rho) \geq \frac{\alpha^2}{r}(1-\epsilon) (f(O) - f(S))$ (Lemma \ref{lemma:apx} in Appendix \ref{appendix:lemma}).

We can show that the algorithm terminates in $\log_{1+\epsilon/2}(n)$ rounds  (Lemma \ref{lemma:log} in Appendix \ref{appendix:lemma}). Then, using the lower bound of the marginal contribution of a set at each round $f_S(X_\rho)$ in conjunction with an inductive proof, we get the desired result. 

We have seen in Corollary \ref{corollary:lin}, \ref{corollary:log} and \ref{corollary:bayes} that the feature selection and Bayesian experimental design problems are differentially submodular. Thus, we can apply \apx to these problems to obtain the $f(S) \geq (1-1/e^{\alpha^2} - \epsilon) f(O)$ guarantee from Theorem \ref{thm:apx}.

\begin{figure*}[h]
\vskip 0.2in
\begin{center}
\begin{minipage}{0.28\textwidth}
\includegraphics[width=\textwidth]{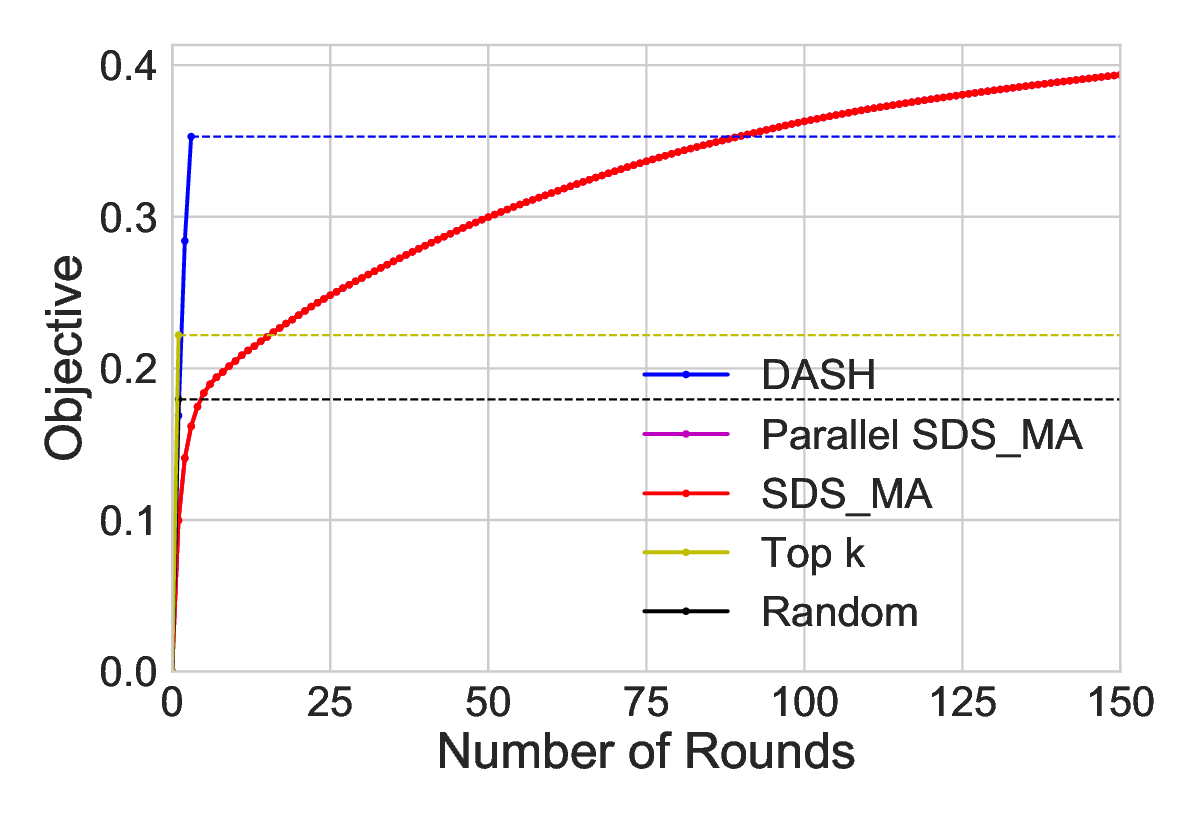}
\subcaption{} \label{fig:lin_a}
\end{minipage}
\begin{minipage}{0.28\textwidth}
\includegraphics[width=\textwidth]{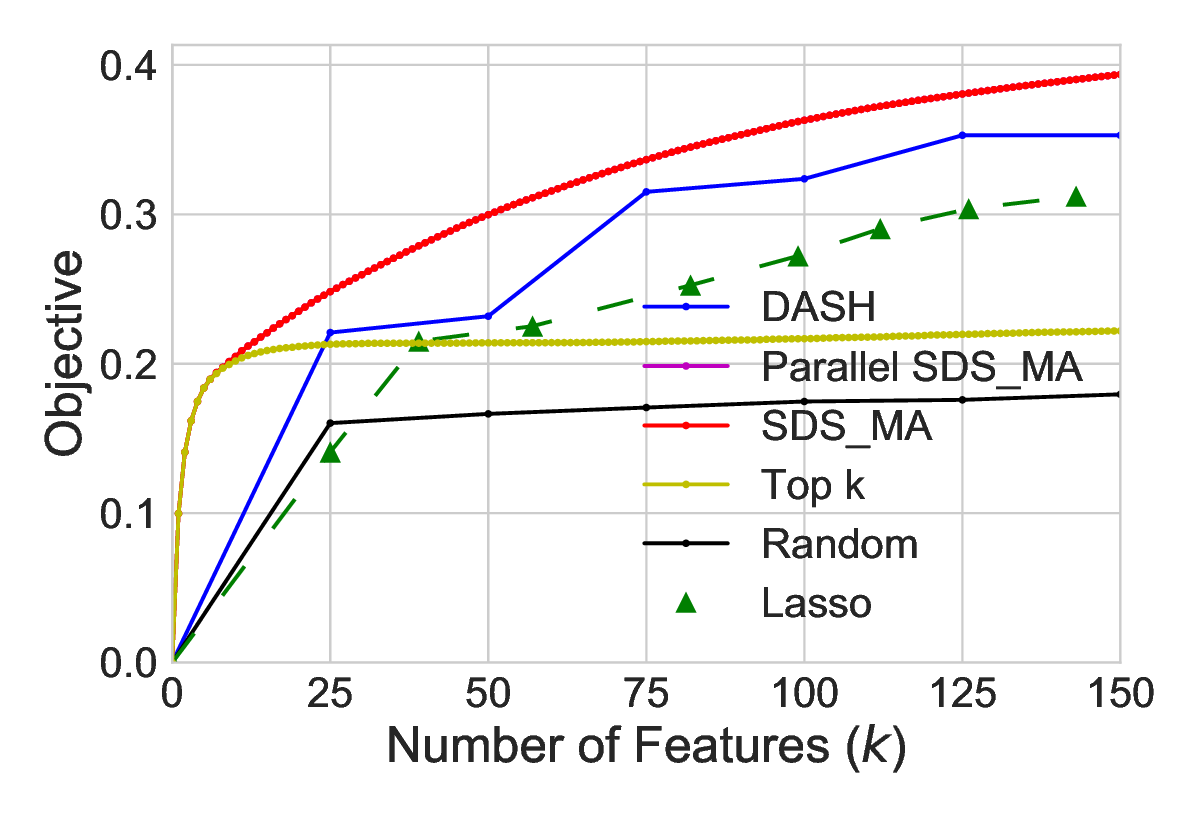}
\subcaption{} \label{fig:lin_b}
\end{minipage}
\begin{minipage}{0.28\textwidth}
\includegraphics[width=\textwidth]{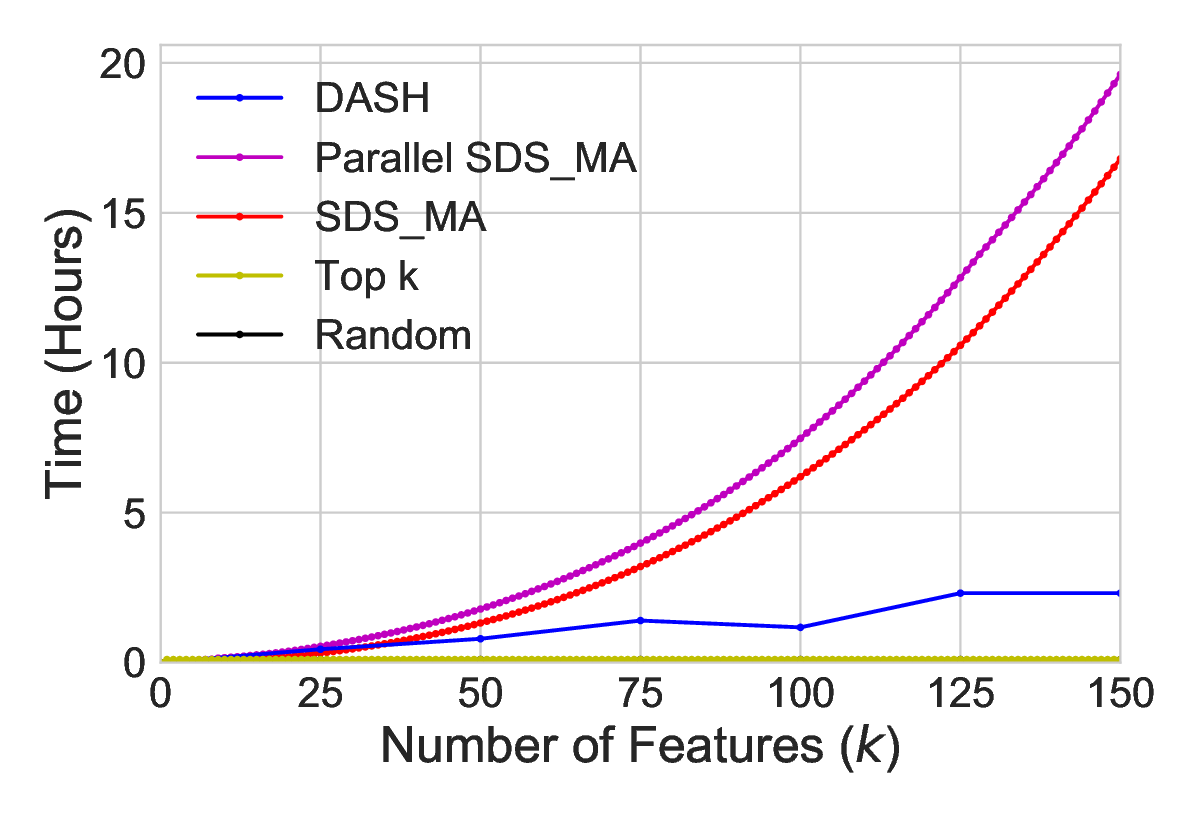}
\subcaption{} \label{fig:lin_c}
\end{minipage}
\begin{minipage}{0.28\textwidth}
\includegraphics[width=\textwidth]{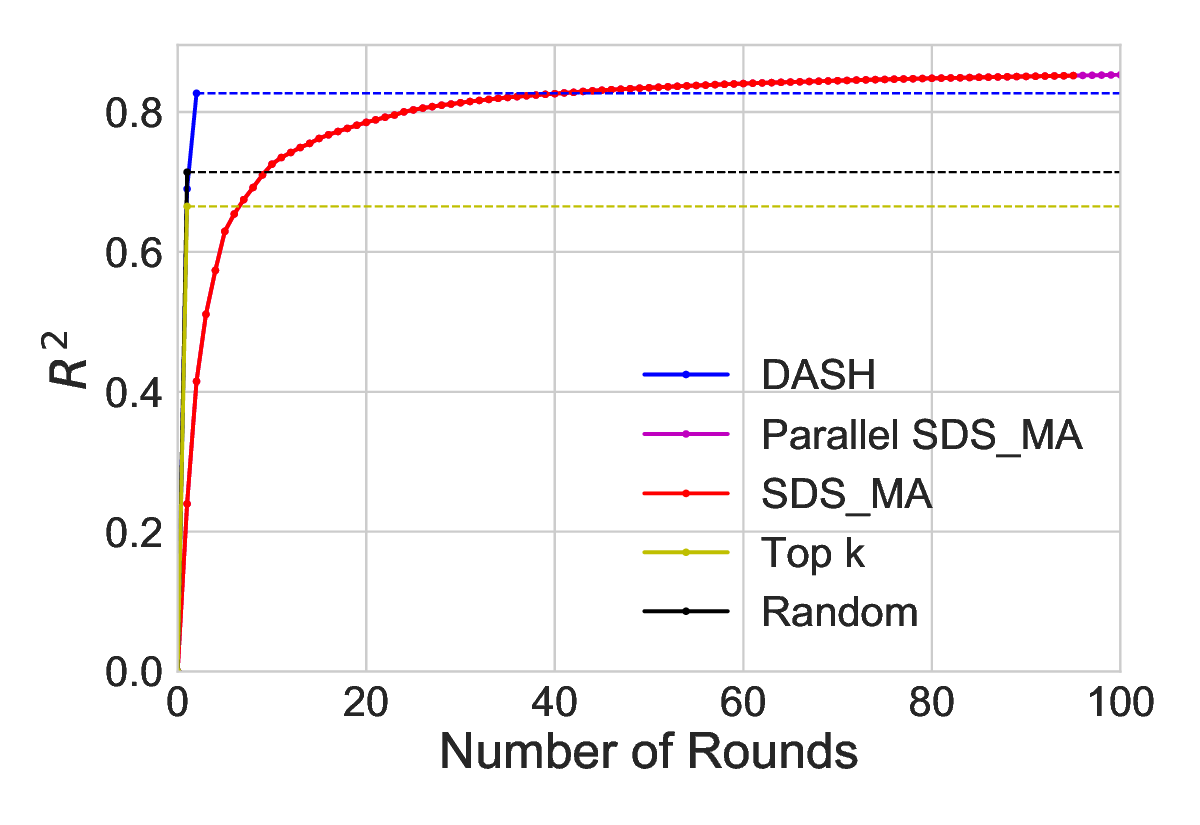}
\subcaption{} \label{fig:lin_d}
\end{minipage}
\begin{minipage}{0.28\textwidth}
\includegraphics[width=\textwidth]{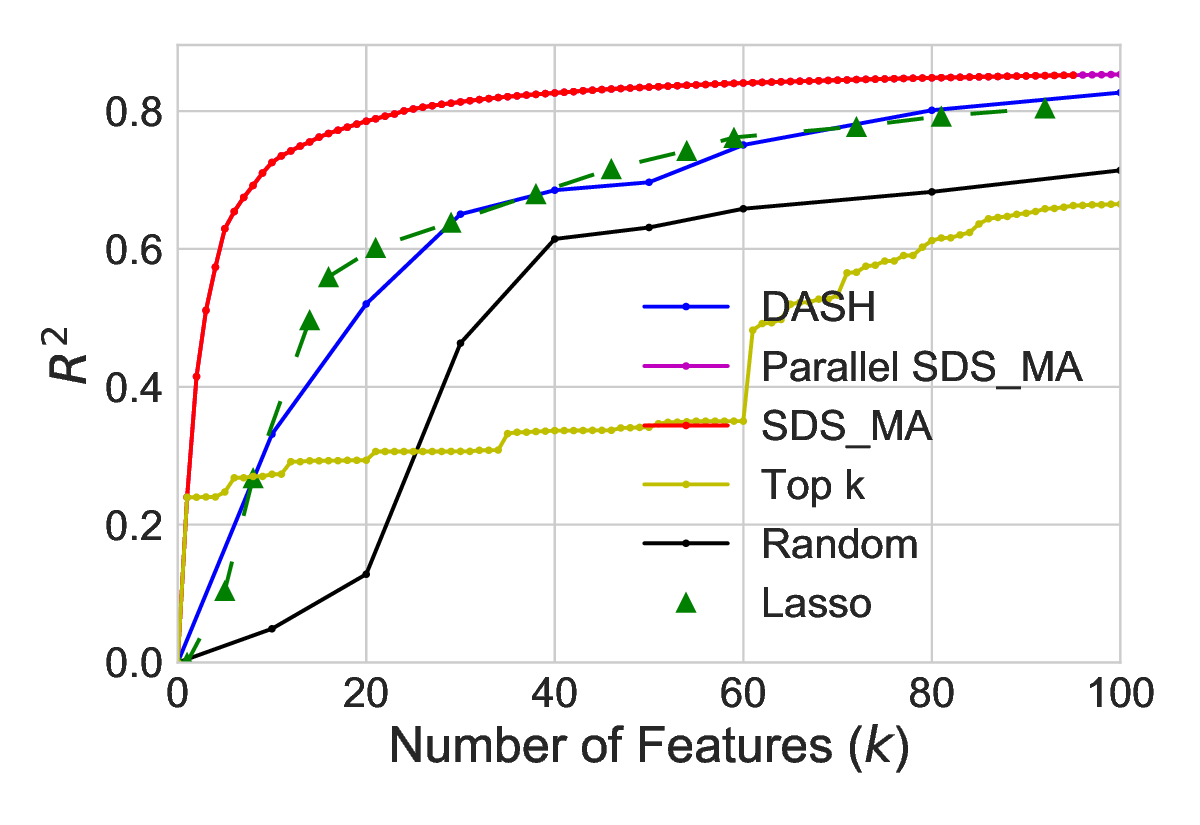}
\subcaption{} \label{fig:lin_e}
\end{minipage}
\begin{minipage}{0.28\textwidth}
\includegraphics[width=\textwidth]{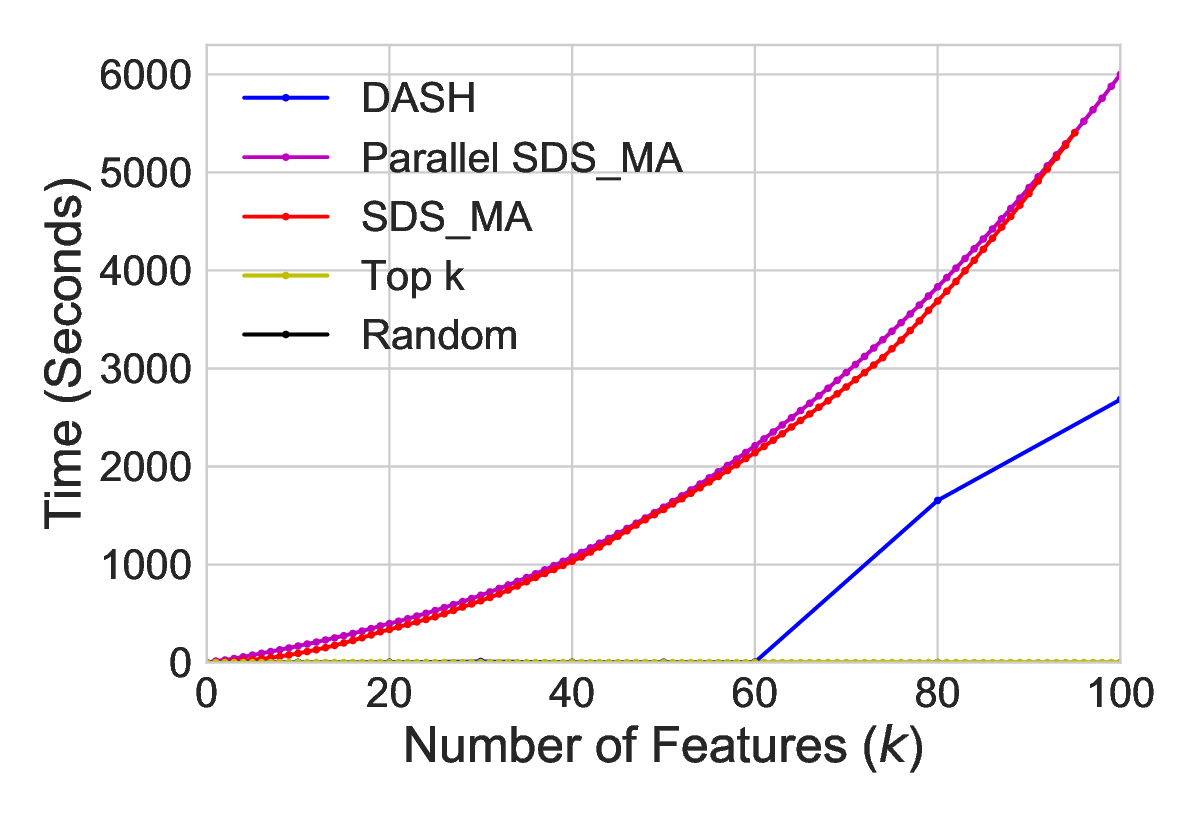}
\subcaption{} \label{fig:lin_f}
\end{minipage}
\caption{Linear regression feature selection results comparing \apx (blue) to baselines on synthetic (top row) and clinical datasets (bottom row). Dashed line represents \textsc{Lasso} extrapolated across $\lambda$.}
\label{fig:lin}
\end{center}
\vskip -0.2in
\end{figure*}

\begin{figure*}[ht]
\vskip 0.2in
\begin{center}
\begin{minipage}{0.28\textwidth}
\includegraphics[width=\textwidth]{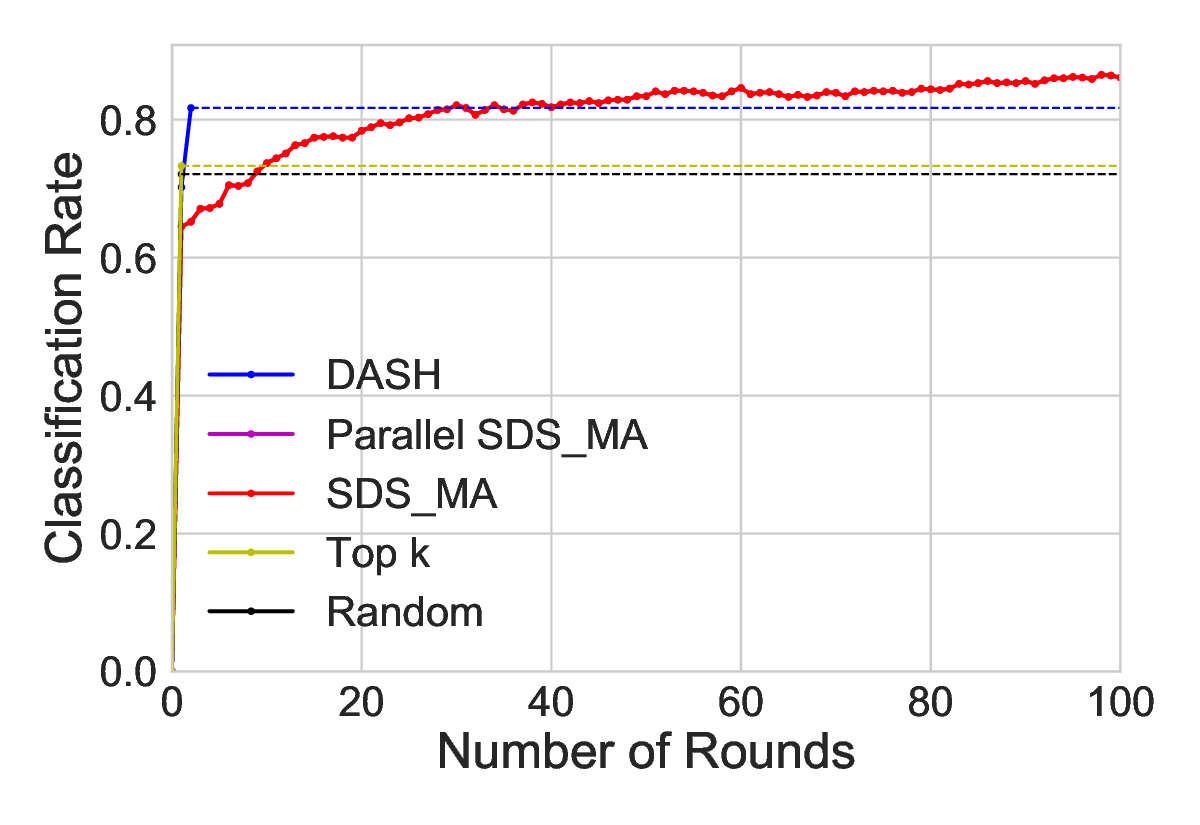}
\subcaption{} \label{fig:log_a}
\end{minipage}
\begin{minipage}{0.28\textwidth}
\includegraphics[width=\textwidth]{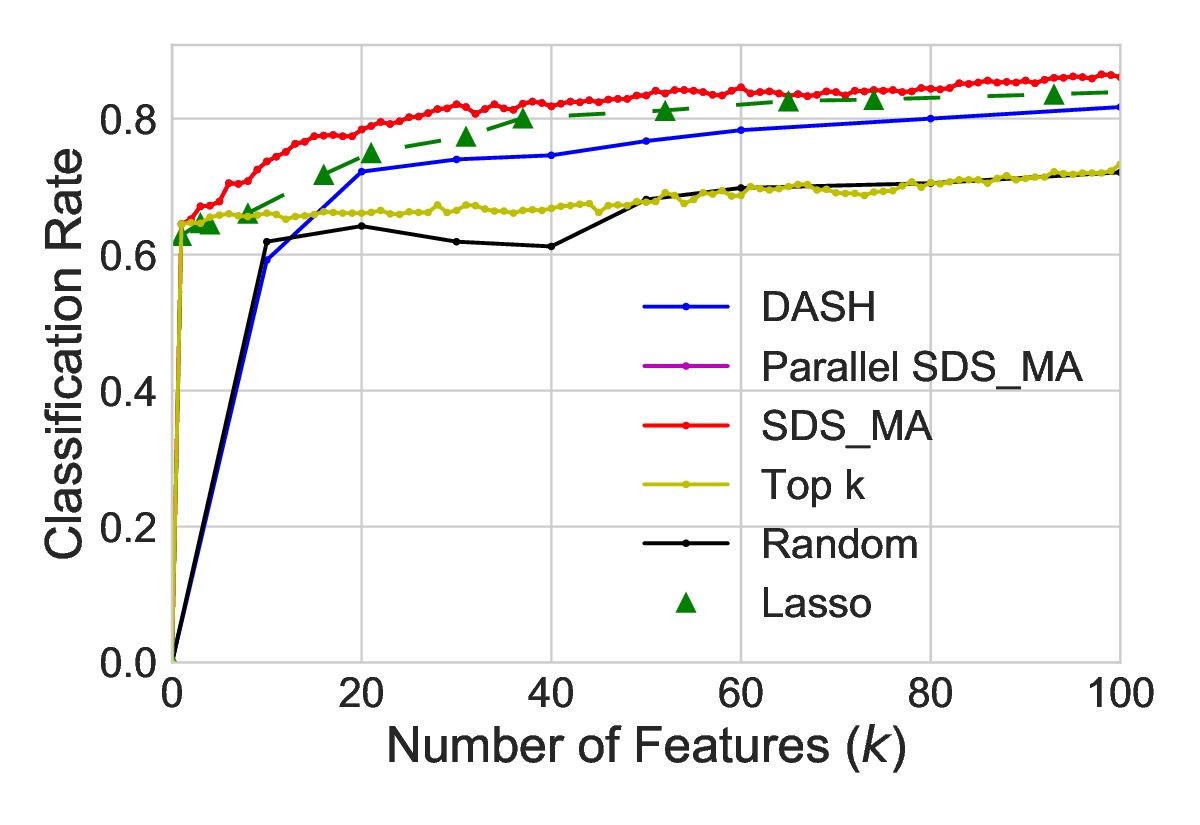}
\subcaption{} \label{fig:log_b}
\end{minipage}
\begin{minipage}{0.28\textwidth}
\includegraphics[width=\textwidth]{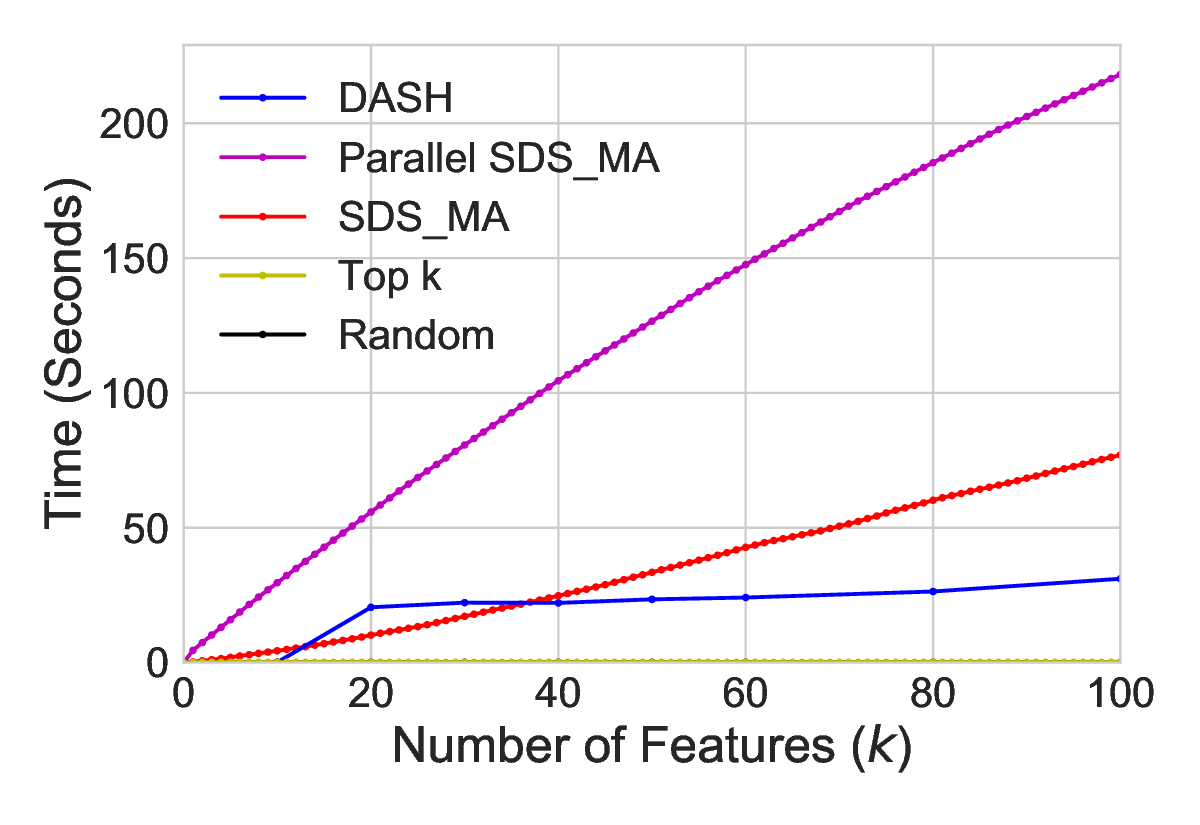}
\subcaption{} \label{fig:log_c}
\end{minipage}

\begin{minipage}{0.28\textwidth}
\includegraphics[width=\textwidth]{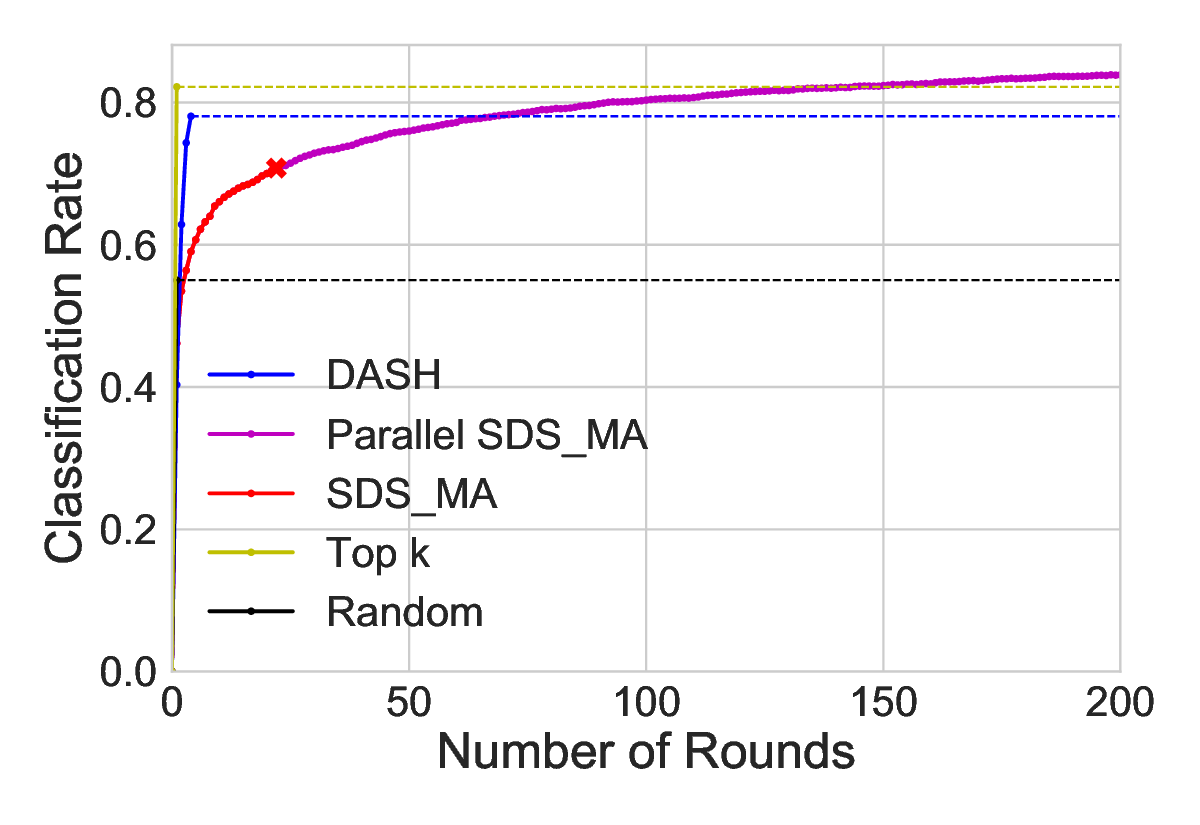}
\subcaption{} \label{fig:log_d}
\end{minipage}
\begin{minipage}{0.28\textwidth}
\includegraphics[width=\textwidth]{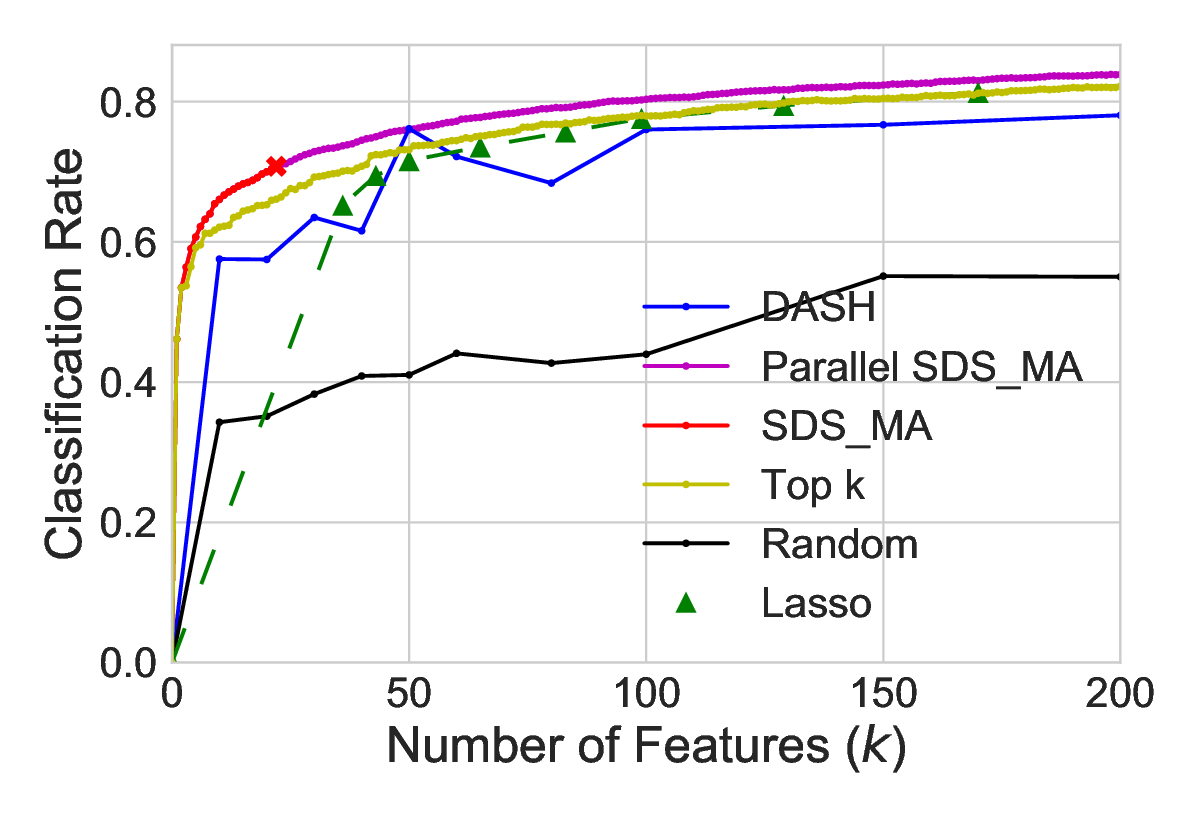}
\subcaption{} \label{fig:log_e}
\end{minipage}
\begin{minipage}{0.28\textwidth}
\includegraphics[width=\textwidth]{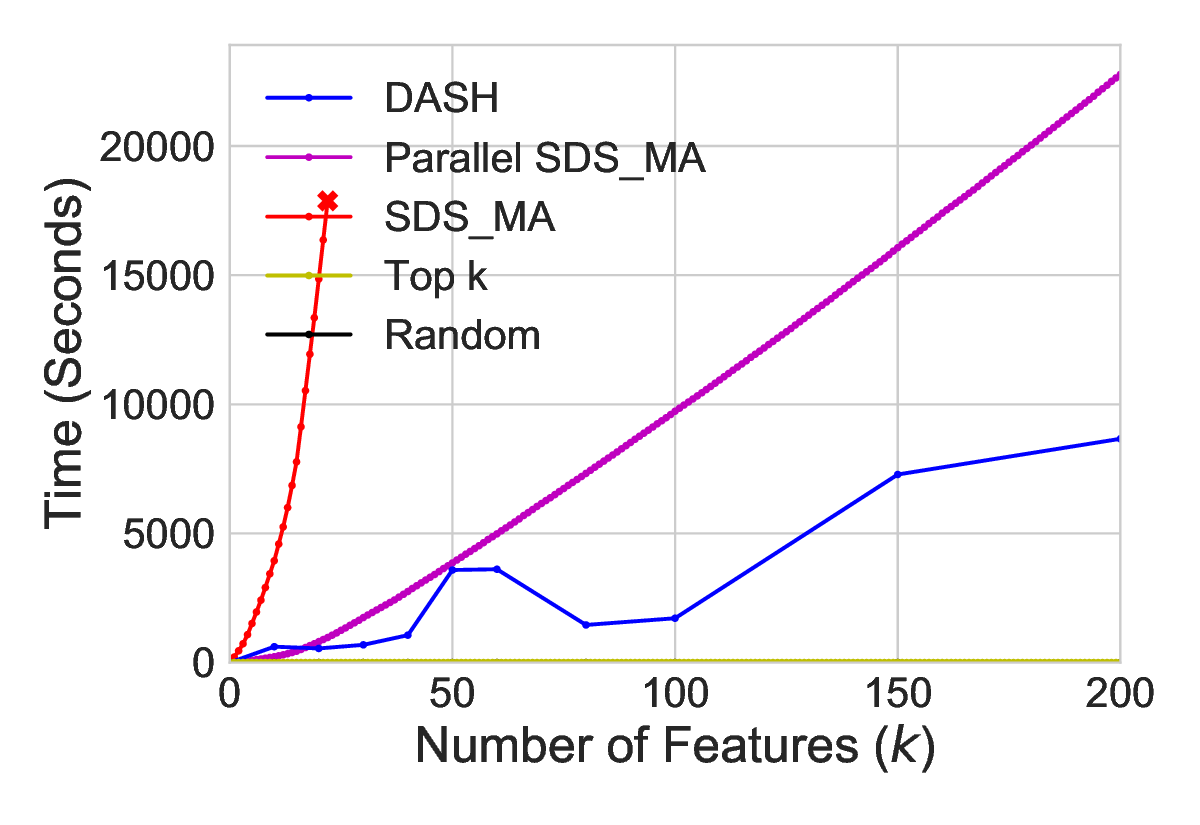}
\subcaption{} \label{fig:log_f}
\end{minipage}
\label{fig:log}
\caption{Logistic regression feature selection results comparing \apx (blue) to baselines on synthetic (top row) and gene datasets (bottom row). The X denotes manual termination of the algorithm due to running time constraints. Dashed line represents approximation for \textsc{Lasso} extrapolated across $\lambda$.}
\end{center}
\vskip -0.2in
\end{figure*}

\section{Experiments} \label{sec:exp}
To empirically evaluate the performance of \textsc{Dash}, we conducted several experiments on feature selection and Bayesian experimental design. While the $1-1/e^{\gamma^4}$ approximation guarantee of \apx is weaker than the $1-1/e^{\gamma}$ of the greedy algorithm ($\textsc{SDS}_{\textsc{MA}}$), we observe that \apx performs comparably to $\textsc{SDS}_{\textsc{MA}}$ and outperforms other benchmarks.  Most importantly, in all experiments, \apx achieves a two to eight-fold speedup of {\bf parallelized} greedy implementations, even for moderate values of $k$. This shows the incredible potential of other parallelizable algorithms, such as adaptive sampling and adaptive sequencing, under the differential submodularity framework.

\paragraph{Datasets.}We conducted experiments for linear and logistic regression using the $\ell_{\texttt{reg}}$ and $\ell_{\texttt{class}}$ objectives, and Bayesian experimental design using $f_{\texttt{A-opt}}$. We generated the synthetic feature space from a multivariate normal distribution. To generate the response variable $\by$, we sample coefficients uniformly (D1) and map to probabilities for classification (D3) and attempt to select important features and samples. We also select features on a clinical dataset $n=385$ (D2) and classify location of cancer in a biological dataset $n=2500$ (D4). We use D1, D2 for linear regression and Bayesian experimental design, and D3, D4 for logistic regression experiments. (See Appendix \ref{app:data} for details.)

\paragraph{Benchmarks.}  We compared \apx to \textsc{Random} (selecting $k$ elements randomly in one round), \textsc{Top-$k$} (selecting $k$ elements of largest marginal contribution), $\textsc{SDS}_{\textsc{MA}}$ \cite{krause2010} and Parallel $\textsc{SDS}_{\textsc{MA}}$, and \textsc{Lasso}, a popular algorithm for regression with an $\ell_1$ regularization term. (See Appendix \ref{app:bench}.)

\paragraph{Experimental Setup.} We run \apx and baselines for different $k$ for two sets of experiments.
\begin{itemize}
\item \textbf{Accuracy vs. rounds.}  In this set of experiments, for each dataset we fixed one value of $k$ ($k=150$ for D1, $k=100$ for D2, D3 and $k=200$ for D4) and ran algorithms to compare accuracy of the solution ($R^2$ for linear regression, classification rate for logistic regression and Bayesian A-optimality for experimental design) as a function of the number of parallel rounds.  The results are plotted in Figures \ref{fig:lin_a}, \ref{fig:lin_d}, Figures \ref{fig:log_a}, \ref{fig:log_d} and Figures \ref{fig:bayes_a}, \ref{fig:bayes_d}; 
\item \textbf{Accuracy and time vs. features.}  In these experiments, we ran the same benchmarks for varying values of $k$ (in D1 the maximum is $k=150$, D2, D3 the maximum is $k=100$ and in D4 the maximum is $k=200$) and measure both accuracy (Figures \ref{fig:lin_b}, \ref{fig:lin_e}, \ref{fig:log_b}, \ref{fig:log_e},  \ref{fig:bayes_b}, \ref{fig:bayes_e}) and time (Figures \ref{fig:lin_c}, \ref{fig:lin_f}, \ref{fig:log_c}, \ref{fig:log_f},  \ref{fig:bayes_c}, \ref{fig:bayes_f}).  When measuring accuracy, we also ran \textsc{Lasso} by manually varying the regularization parameter $\lambda$ to select approximately $k$ features. Since each $k$ represents a different run of the algorithm, the output (accuracy or time) is not necessarily monotonic with respect to $k$.
\end{itemize} 
We implemented \apx with 5 samples at every round.  Even with this small number of samples, the terminal value outperforms greedy throughout all experiments.  The advantage of using fewer samples is that it allows parallelizing over fewer cores.  In general, given more cores one can reduce the variance in estimating marginal contributions which improves the performance of the algorithm.

\begin{figure*}[ht]
\vskip 0.2in
\begin{center}
\begin{minipage}{0.28\textwidth}
\includegraphics[width=\textwidth]{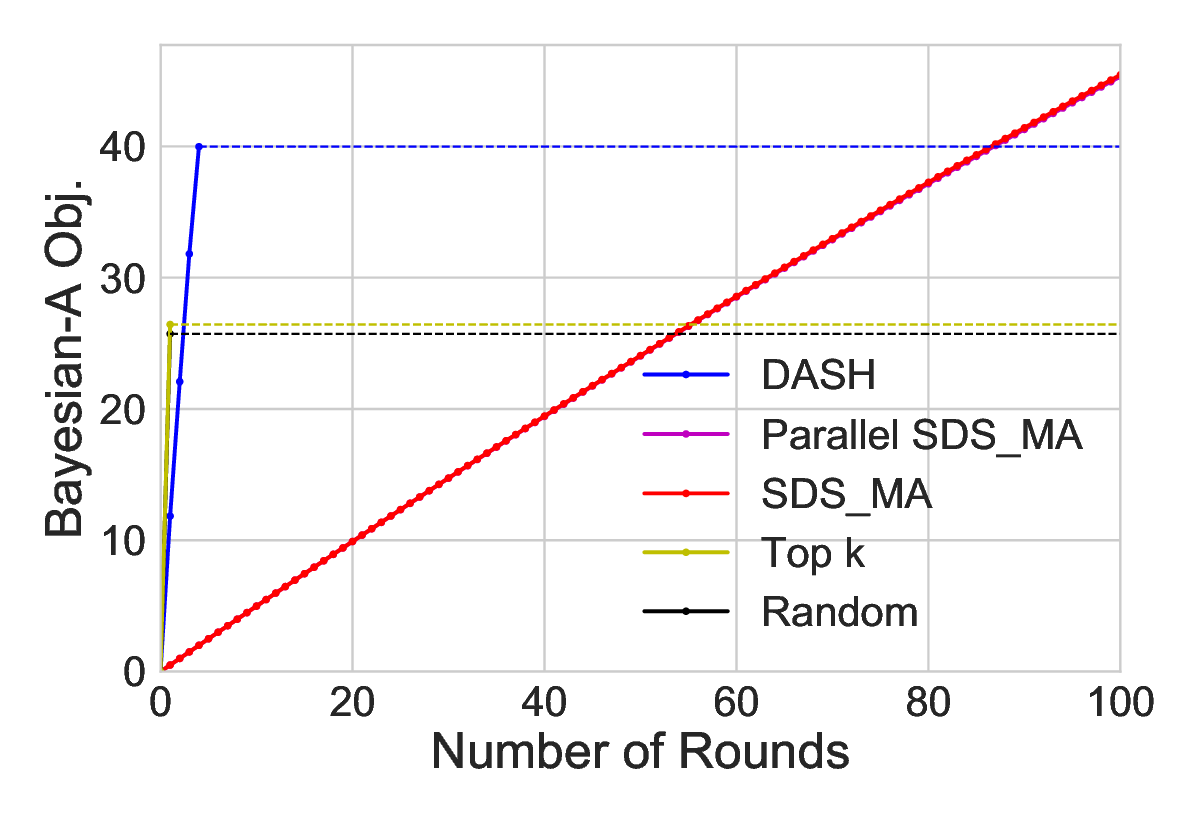}
\subcaption{} \label{fig:bayes_a}
\end{minipage}
\begin{minipage}{0.28\textwidth}
\includegraphics[width=\textwidth]{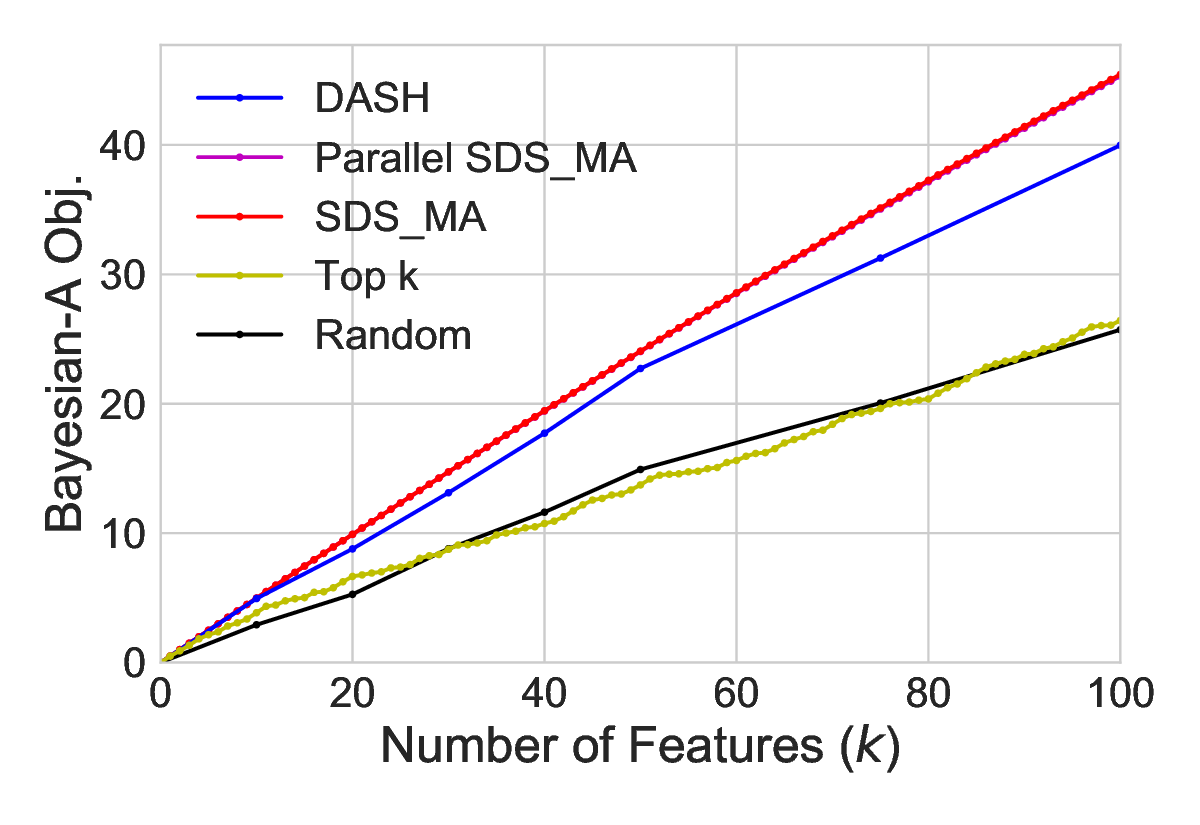}
\subcaption{} \label{fig:bayes_b}
\end{minipage}
\begin{minipage}{0.28\textwidth}
\includegraphics[width=\textwidth]{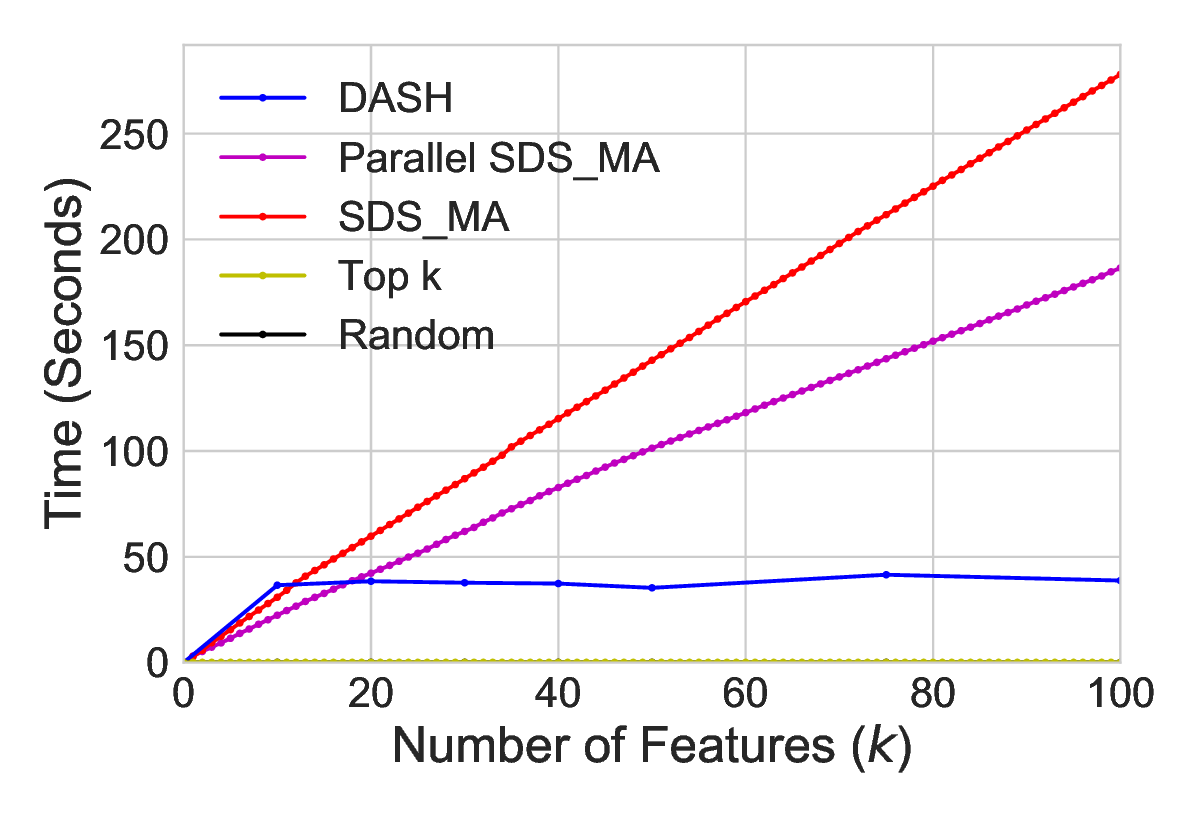}
\subcaption{} \label{fig:bayes_c}
\end{minipage}

\begin{minipage}{0.28\textwidth}
\includegraphics[width=\textwidth]{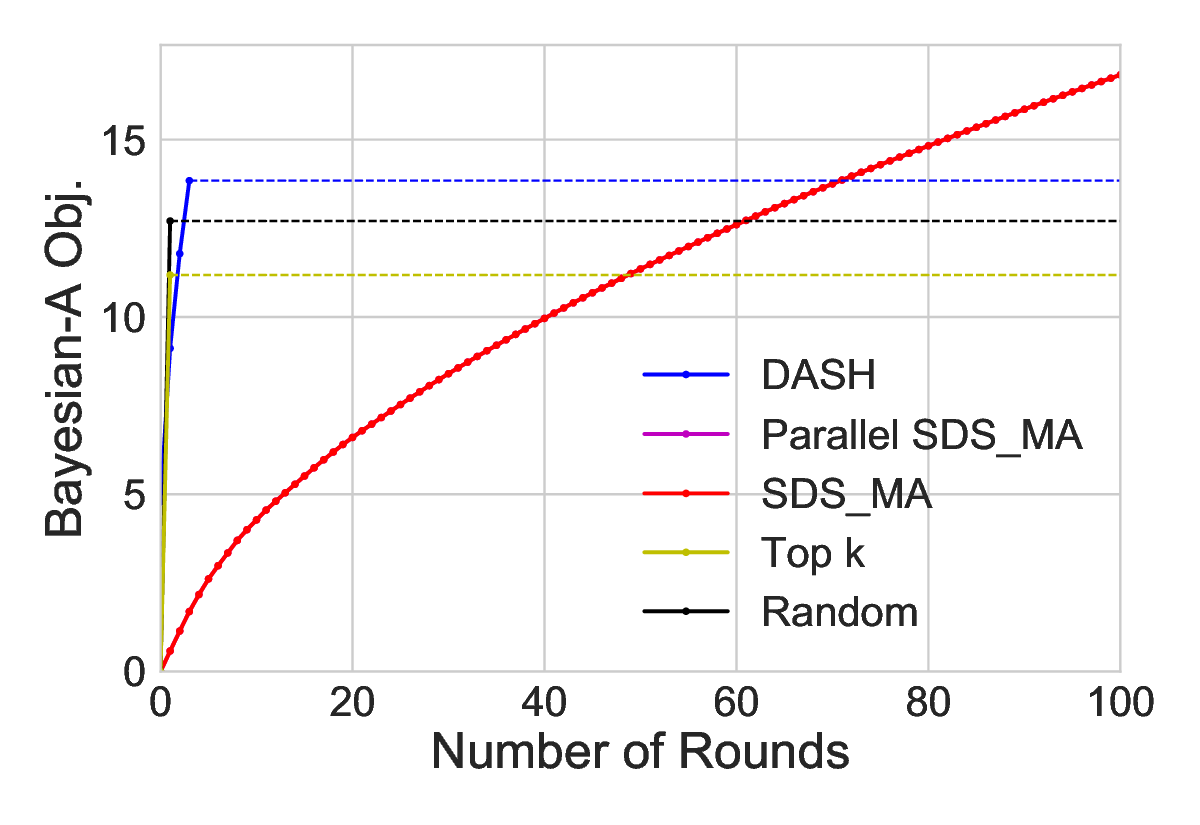}
\subcaption{} \label{fig:bayes_d}
\end{minipage}
\begin{minipage}{0.28\textwidth}
\includegraphics[width=\textwidth]{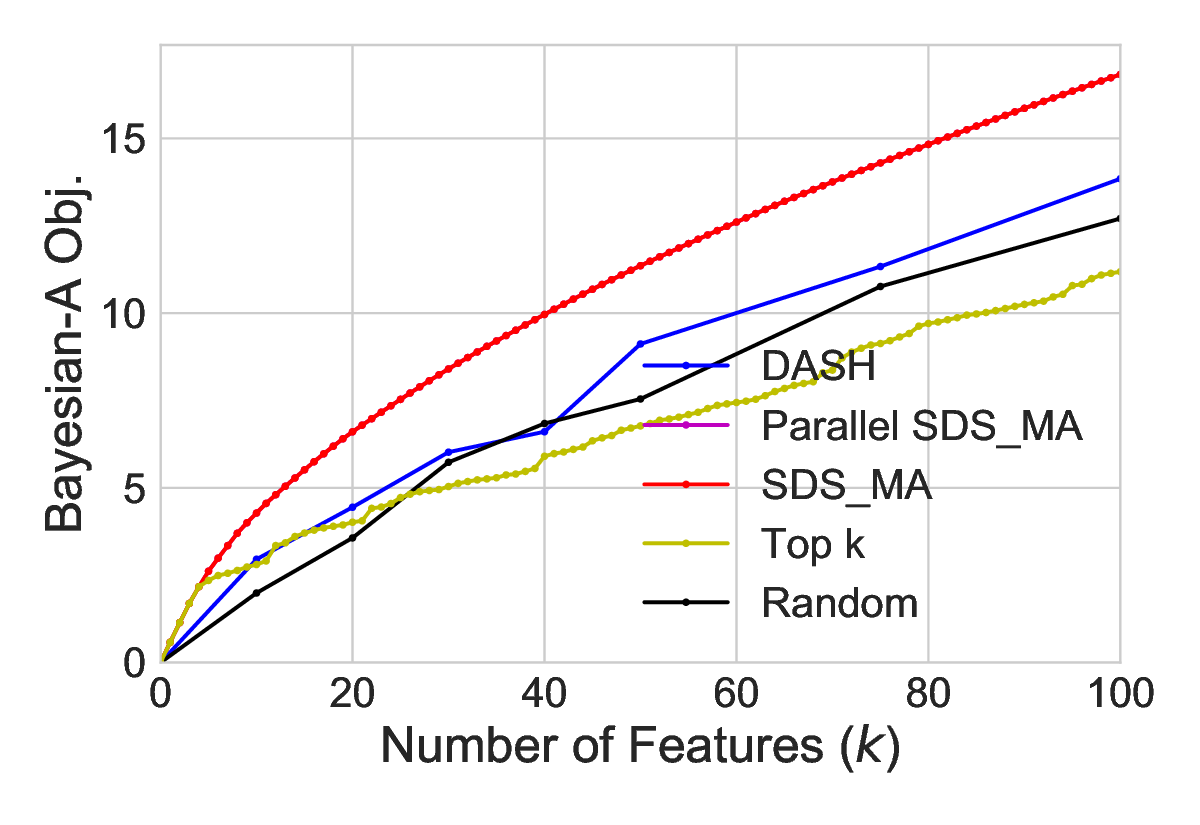}
\subcaption{} \label{fig:bayes_e}
\end{minipage}
\begin{minipage}{0.28\textwidth}
\includegraphics[width=\textwidth]{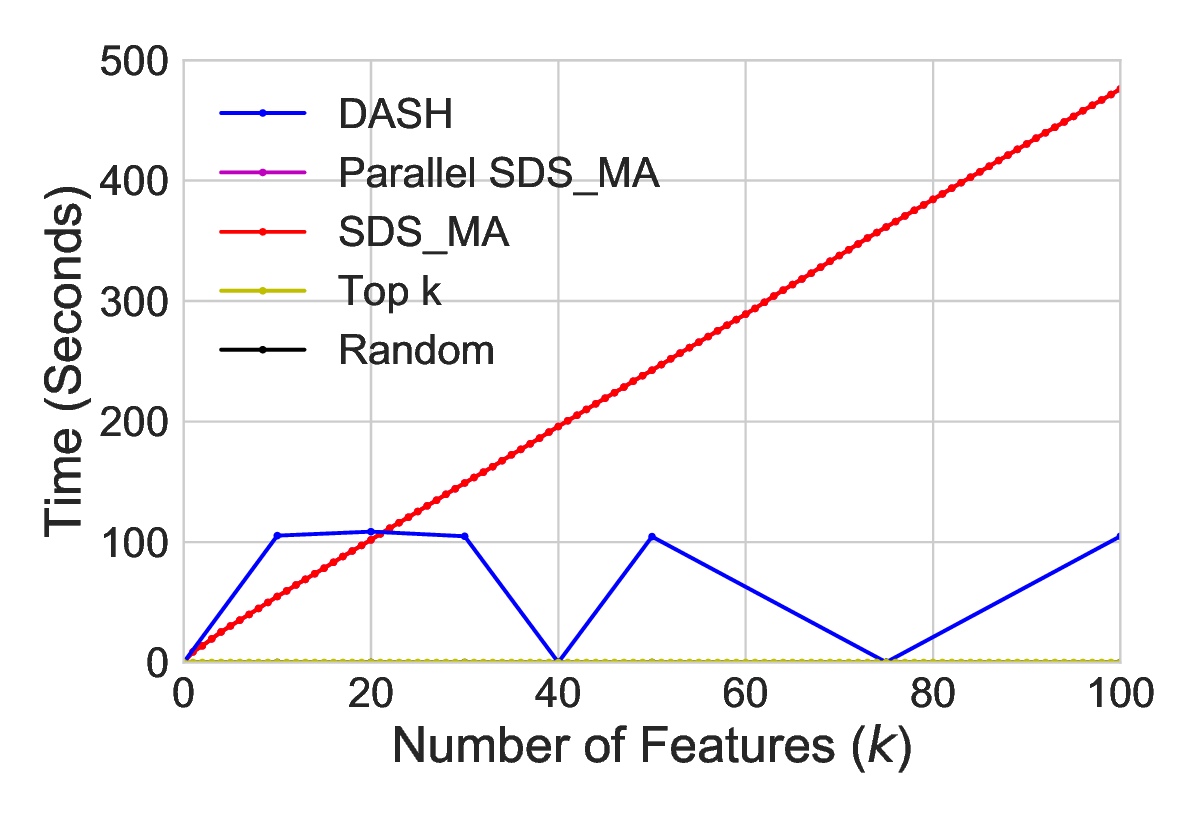}
\subcaption{} \label{fig:bayes_f}
\end{minipage}
\label{fig:bayes}
\caption{Bayesian experimental design results comparing \apx (blue) to baselines on synthetic (top row) and clinical datasets (bottom row).  }
\end{center}
\vskip -0.2in
\end{figure*} 
\paragraph{Results on general performance.} \ We first analyze the performance of \apx. For all applications, Figures \ref{fig:lin_a}, \ref{fig:lin_d}, \ref{fig:log_a}, \ref{fig:log_d}, \ref{fig:bayes_a} and \ref{fig:bayes_d} show that the final objective value of \apx is comparable to $\textsc{SDS}_{\textsc{MA}}$, outperforms \textsc{Top-$k$} and $\textsc{Random}$, and is able to achieve the solution in much fewer rounds. In Figures \ref{fig:lin_b}, \ref{fig:lin_e}, \ref{fig:log_b}, \ref{fig:log_e}, \ref{fig:bayes_b} and \ref{fig:bayes_e}, we show \apx can be very practical in finding a comparable solution set to $\textsc{SDS}_{\textsc{MA}}$ especially for larger values of $k$. In the synthetic linear regression experiment, \apx significantly outperforms \textsc{Lasso} and has comparable performance in other experiments. While \apx outperforms the simple baseline of \textsc{Random}, we note that the performance of \textsc{Random} varies widely depending on properties of the dataset. In cases where a small number of features can give high accuracy, \textsc{Random} can perform well by randomly selecting well-performing features when $k$ is large (Figure \ref{fig:lin_e}). However, in more interesting cases where the value does not immediately saturate, both \apx and $\textsc{SDS}_{\textsc{MA}}$ significantly outperform $\textsc{Random}$ (Figure \ref{fig:lin_b}, \ref{fig:bayes_b}).

We can also see in Figures \ref{fig:lin_c}, \ref{fig:lin_f}, \ref{fig:log_c}, \ref{fig:log_f},  \ref{fig:bayes_c} and \ref{fig:bayes_f} that \apx is computationally efficient compared to the other baselines. In some cases, for smaller values of $k$, $\textsc{SDS}_{\textsc{MA}}$ is faster (Figure \ref{fig:log_c}). This is mainly due to the sampling done by \apx to estimate the marginals, which can be computationally intensive. However, in most experiments, \apx terminates more quickly even for small values of $k$. For larger values, \apx shows a two to eight-fold speedup compared to the fastest baseline.

\paragraph{Effect of oracle queries.} \ Across our experiments, the cost for oracle queries vary widely. When the calculation of the marginal contribution is computationally cheap, parallelization of $\textsc{SDS}_{\textsc{MA}}$ has a longer running time than its sequential analog due to the cost of merging parallelized results (Figures \ref{fig:lin_c}, \ref{fig:log_c}). However, in the logistic regression gene selection experiment, calculating the marginal contribution of an element to the solution set can span more than 1 minute.  In this setting, using sequential $\textsc{SDS}_{\textsc{MA}}$ to select 100 elements would take several days for the algorithm to terminate (Figure \ref{fig:log_f}). Parallelization of $\textsc{SDS}_{\textsc{MA}}$ drastically improves the algorithm running time, but \apx is still much faster and can find a comparable solution set in under half the time of parallelized $\textsc{SDS}_{\textsc{MA}}$.

In both cases of cheap and computationally intensive oracle queries, \apx terminates more quickly than the sequential and parallelized version of $\textsc{SDS}_{\textsc{MA}}$ for larger values of $k$. This can be seen in Figures \ref{fig:lin_c}, \ref{fig:log_c} and \ref{fig:bayes_c} where calculation of marginal contribution on synthetic data is fast and in Figures \ref{fig:lin_f}, \ref{fig:log_f} and \ref{fig:bayes_f} where oracle queries on larger datasets are much slower. This shows the incredible potential of using \apx across a wide array of different applications to drastically cut down on computation time in selecting a large number elements across different objective functions. Given access to more processors, we expect even a larger increase in speedup for \textsc{Dash}. 

\newpage
\section*{Acknowledgements}
The authors would like to thank Eric Balkanski for helpful discussions. This research was supported by a Smith Family Graduate Science and Engineering Fellowship, NSF grant CAREER CCF 1452961, NSF CCF 1301976, BSF grant 2014389, NSF USICCS proposal 1540428, a Google Research award, and a Facebook research award.
\bibliography{diff_sm}
\bibliographystyle{alpha}

\newpage
\appendix
\section{Motivational Examples}
\subsection{\textsc{Adaptive-Sampling} does not work for weakly submodular functions}\label{appendix:toy1}
To demonstrate why adding sets at each iteration can perform badly compared to adding single elements, we construct a weakly submodular function where greedy can achieve the optimal value and the performance of adding sets of elements to the solution set can be poor. The construction is a slight variant of the one in \cite{elenberg2017}.

We have a ground set consisting of two types of elements, $\mathcal N = \{U, V\}$, where $U = \{u_i\}_{i=1}^k$ and $V = \{v_i\}_{i=1}^k$. For every subset $S \subseteq \mathcal N$, $u(S) = | S \cap U|$ and $v(S) = | S \cap V|$. Now, we define the following set function 
$$f(S) = \min \{2\cdot u(S) + 1, 2\cdot v(S)\}, \quad \forall S \subseteq N.$$ 

For cardinality constraint $k$, we can see that the optimal solution is $k$.
\begin{lemma}\label{lemma:f}
$f$ is nonnegative, monotone and 0.5-weakly submodular \cite{elenberg2017}.
\end{lemma}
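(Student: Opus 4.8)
The plan is to verify the three asserted properties in turn, treating nonnegativity and monotonicity as warm-ups and reserving the real work for the submodularity ratio. Nonnegativity is immediate: since $u(S),v(S)\ge 0$, both arguments of the minimum in $f(S)=\min\{2u(S)+1,\,2v(S)\}$ are nonnegative, so $f(S)\ge 0$, and $f(\emptyset)=\min\{1,0\}=0$. For monotonicity I would observe that adding any single element to $S$ increases exactly one of $u(S),v(S)$ by one while leaving the other fixed; both affine terms $2u(S)+1$ and $2v(S)$ are therefore nondecreasing under set inclusion, hence so is their pointwise minimum.

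For the weak-submodularity bound I would first reduce everything to counting. Because $f$ depends on $S$ only through the pair $(p,q):=(u(S),v(S))$ and is symmetric within $U$ and within $V$, a candidate set $A$ (which we may take disjoint from $S$, as elements of $S$ have zero marginal) is fully described by its numbers $a$ of $U$-elements and $b$ of $V$-elements. The crucial structural observation is a parity fact: $2p+1$ is always odd and $2q$ always even, so the two terms are \emph{never} equal, and each state lies in exactly one of two regimes --- the $U$-bound regime $p<q$ with $f=2p+1$, and the $V$-bound regime $p\ge q$ with $f=2q$. In each regime I would compute the single-step marginals $\delta_U:=f(S\cup u)-f(S)$ and $\delta_V:=f(S\cup v)-f(S)$; the parity argument forces each into $\{0,1,2\}$, with the value $1$ arising only on a boundary ($p=q$ for $\delta_V$, $p=q-1$ for $\delta_U$). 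This yields the sum of individual marginals in closed form, $\sum_{x\in A}f_S(x)=a\,\delta_U+b\,\delta_V$.

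It then remains to compare this against the joint marginal $f_S(A)=\min\{2(p+a)+1,\,2(q+b)\}-\min\{2p+1,\,2q\}$ and show the ratio is at least $\tfrac{1}{2}$, splitting on the regime of $(p,q)$. When $(p,q)$ is strictly interior to a regime ($p>q$ or $p\le q-2$) the relevant single-step marginal equals $2$ and a direct estimate gives $\sum_{x\in A}f_S(x)\ge f_S(A)$, so the ratio is at least $1$. The binding cases are the two boundaries: at $p=q$ the sum reduces to $b$ and $f_S(A)=\min\{2a+1,2b\}$, and at $p=q-1$ the sum reduces to $a$ and $f_S(A)=\min\{2a,2b+1\}$; in each, a one-line parity check (using that $2a+1$ and $2b$ cannot coincide) gives ratio $\ge \tfrac{1}{2}$. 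To see the bound is tight I would exhibit $S=\emptyset$, $A=\{u_1,v_1\}$, for which $f_S(A)=\min\{3,2\}=2$ while $f_S(u_1)=0$ and $f_S(v_1)=1$, so $\sum_{x\in A}f_S(x)=1=\tfrac{1}{2}f_S(A)$; hence $\gamma=\min_k\gamma_k=\tfrac{1}{2}$ exactly. The main obstacle is simply organizing this regime/boundary case analysis cleanly --- there is no analytic difficulty, since the ``no ties'' parity observation pins down every $\delta_U,\delta_V\in\{0,1,2\}$ and collapses each case to an elementary integer inequality.
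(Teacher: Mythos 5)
Your proposal is correct, but it is doing something the paper does not: the paper offers no proof of this lemma at all, simply citing \cite{elenberg2017} and noting that the construction is a ``slight variant'' of the one there. Your argument is a genuine, self-contained verification. The structure is sound: the parity observation that $2u(S)+1$ is odd and $2v(S)$ is even, so the two terms never tie, is exactly what makes the case analysis clean, and your marginal tables check out ($\delta_U=2$ for $p\le q-2$, $=1$ for $p=q-1$, $=0$ for $p\ge q$; symmetrically $\delta_V=2$ for $p\ge q+1$, $=1$ for $p=q$, $=0$ for $p<q$, where $p=u(S)$, $q=v(S)$). The interior cases give ratio $\ge 1$ since, e.g., for $p\ge q+1$ one has $f_S(A)=\min\{2(p-q)+2a+1,\,2b\}\le 2b=\sum_{x\in A}f_S(x)$, and the two boundary cases give $b/\min\{2a+1,2b\}\ge 1/2$ and $a/\min\{2a,2b+1\}\ge 1/2$ respectively. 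Your tightness witness $S=\emptyset$, $A=\{u_1,v_1\}$ is also the right one and matches the paper's own informal discussion in Appendix A.1 (where $f(u_i)=0$ and $f(v_i)=1$). Two small housekeeping points you should make explicit in a polished write-up: (i) the reduction to $A\cap S=\emptyset$ uses both that $f_S(a)=0$ for $a\in S$ \emph{and} that $f_S(A)=f_S(A\setminus S)$; (ii) the ratio in Definition~\ref{def:ratio} is conventionally restricted to pairs with $f_S(A)>0$ (otherwise the boundary case $p=q$, $b=0$ produces a $0/0$ expression), which is the standard convention and worth a one-line remark. Neither affects correctness. What your approach buys is independence from the external reference and an exact determination of $\gamma=1/2$ rather than merely a lower bound; what it costs is a page of bookkeeping that the paper avoids by citation.
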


For simplicity, assume the number of rounds $r=1$. We now show why \textsc{Adaptive Sampling} performs poorly. In the first step, \textsc{Adaptive Sampling} will filter out elements with low marginal contributions. Since $f(u_i) = 0$ and $f(v_i)=1$ for all $i$, by standard concentration bounds, elements of $U$ will be filtered out and only elements in $V$ will remain. Now, the algorithm attempts to add a set of $k$ elements into the solution set. Since all subsets of $V$ have a value of 1, the algorithm can only achieve a value of 1 even when the optimal value is $k$. As $k$ increases, this algorithm performs arbitrarily poorly.

\subsection{Existing adaptive algorithms fail for differentially submodular functions}\label{appendix:toy2}
\textsc{Adaptive-Sampling} \cite{balkanski2019} for submodular functions does not guarantee termination for differentially submodular functions. The filtering step that removes elements with low individual marginal contribution does not guarantee the marginal contribution of the set of ``good'' elements is larger than the threshold value as in the submodular case. For differentially submodular functions, the algorithm may result in an infinite \texttt{while} loop, where ``bad'' elements are filtered out, but no combination of remaining elements adds sufficient value to the solution set. We show two examples.

We use the construction defined in the previous section $f(S) = \min \{2\cdot u(S) + 1, 2\cdot v(S)\}$. While $f(S)$ is weakly submodular, we note that it is not differentially submodular. Consider the case where $S=\{u_1 \}$ and $A = \{ v_i\}_{i=1}^n$, then $\sum_{a\in A} f_S(a) = n$, but $f_S(A) = 1$. However, we can show a modified function is differentially submodular on small set sizes, which is sufficient for our example. Let $f'(S) = f(S)$ where $|S| \leq 2$, then $f'$ is 0.25-differentially submodular. This construction demonstrates a simple case of how adaptive sampling on submodular functions fails for differentially submodular functions.

\begin{lemma}
$f'$ is 0.25-differentially submodular.
\end{lemma}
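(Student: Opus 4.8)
The plan is to exhibit two monotone submodular functions $g$ and $h$ that sandwich the marginals of $f'$ with ratio $1/4$, and then reduce the verification to a finite case check. Since $f'$ agrees with $f$ only on sets of size at most two, every admissible pair $(S,A)$ in the definition of differential submodularity has $|S\cup A|\le 2$. Up to the obvious symmetry among the $u_i$'s and among the $v_i$'s, a base $S$ is then determined by its counts $(u(S),v(S))$ with $u(S)+v(S)\le 2$, and $A$ by how many $U$- and $V$-elements it contributes. So first I would tabulate all the marginal contributions $f'_S(a)$ and $f'_S(A)$ over this finite list, using $f(S)=\min\{2u(S)+1,\,2v(S)\}$; the single-element marginals all fall in $\{0,1,2\}$.

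Next I would read off candidates for $g$ and $h$ from the two linear pieces of the min, taking them to be (scaled) modular functions supported essentially on the $V$-coordinates that charge each $V$-element its smallest, respectively largest, attainable marginal. Submodularity and monotonicity are then immediate, since modular functions are trivially submodular. The target constant $\alpha=1/4=(1/2)^2$ is exactly the square of the weak-submodularity ratio $\gamma=1/2$ from Lemma~\ref{lemma:f}, mirroring the $\gamma^2$ phenomenon established in Theorem~\ref{thm:param} and its corollaries; this tells me in advance which constant the construction must achieve. With $g,h$ fixed I would simply check the two inequalities $g_S(A)\le f'_S(A)\le h_S(A)$ and $g_S(A)\ge \tfrac14\,h_S(A)$ against the finite table, and confirm that the worst-case ratio $g_S(A)/h_S(A)$ over the table equals exactly $1/4$.

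The main obstacle is that $f'$ is genuinely not submodular, and the binding constraints sit precisely at the supermodular transitions: the marginal of a $V$-element rises from $1$ at $S=\emptyset$ to $2$ at $S=\{u_1\}$, and the marginal of a $U$-element rises from $0$ to $1$ once a $V$-element is present. At every pair with $f'_S(A)>0$ the two conditions combine to force $f'_S(A)\le h_S(A)\le 4\,f'_S(A)$, so a single pair of submodular functions must dominate (resp.\ be dominated by) $f'$ simultaneously at the ``low'' base $\emptyset$ and the ``high'' bases $\{u_1\},\{v_1\}$ — while submodularity forces the marginals of $g$ and $h$ to move in the opposite direction to those of $f'$. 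I expect the crux of the argument to lie in choosing $g$ and $h$ modular enough to straddle these jumps inside the factor-of-four window, and in being careful about exactly which base sets are admitted in the restricted domain $|S\cup A|\le2$; this is where all the content of the $0.25$ constant resides. Once the handful of numeric inequalities at these transitions are verified, the finiteness of the domain closes out the proof.
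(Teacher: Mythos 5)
Your proposal stops short of the one thing the lemma actually requires: concrete functions $g$ and $h$. The candidates you gesture at --- modular functions on the ground set, charging each $V$-element its smallest (for $g$) and largest (for $h$) attainable marginal --- provably cannot work. Any nonnegative modular $g$ with $g_S(A)\le f'_S(A)$ everywhere must satisfy $g(\{u_1\})\le f'(\{u_1\})=0$ and $g(\{v_1,v_2\})-g(\{v_1\})\le f'_{\{v_1\}}(v_2)=0$, forcing $g\equiv 0$; meanwhile any $h$ dominating $f'$ must have $h_{\{u_1\}}(v_1)\ge 2$, so $g_S(A)\ge \tfrac14\,h_S(A)$ already fails at $S=\emptyset$, $A=\{v_1\}$. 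The obstruction is not an artifact of modularity: if $g$ is any genuine submodular set function with $g_S(A)=g(S\cup A)-g(S)\le f'_S(A)$, then $g_{\{v_1\}}(u_1)\le g_\emptyset(u_1)\le f'_\emptyset(u_1)=0$, while $h_{\{v_1\}}(u_1)\ge f'_{\{v_1\}}(u_1)=1$, so the ratio condition is violated exactly at the supermodular transition you correctly identify as the crux. Your finite case check would therefore terminate with ``no such pair exists'' rather than with a proof.

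The paper avoids this by not using marginals of fixed set functions at all: it takes $g_S(A)=\tfrac12\sum_{a\in A}f'_S(a)$ and $h_S(A)=2\sum_{a\in A}f'_S(a)$, i.e., functions that are modular in $A$ separately for each base $S$ (the same convention as in Theorem~\ref{thm:param}). The two sandwich inequalities are then one line each: the upper bound $f'_S(A)\le 2\sum_{a\in A}f'_S(a)$ is exactly the weak-submodularity ratio $1/2$ inherited from Lemma~\ref{lemma:f}, and the lower bound $f'_S(A)\ge\max_{a\in A}f'_S(a)\ge\tfrac{1}{|A\setminus S|}\sum_{a\in A}f'_S(a)\ge\tfrac12\sum_{a\in A}f'_S(a)$ uses only monotonicity together with the restriction $|A\setminus S|\le 2$ --- which is precisely why the truncation to sets of size at most two matters. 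The ratio $g_S(A)/h_S(A)=1/4$ then falls out immediately. Your instinct that $1/4=(1/2)^2$ mirrors the $\gamma^2$ phenomenon is right, but the route to it has to go through per-base modular bounds on the marginals, not through a single pair of set functions.
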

\begin{proof}
We can use the lower bound from weak submodularity of $f$ from Lemma \ref{lemma:f}, which holds for $f'$: 
$$\sum_{a\in A} f'_S(a) \geq 0.5 \cdot f'_S(A)$$

With our modification, we can now lower bound by marginal contributions:

$$f'_S(A) \geq \frac{1}{|A \backslash S|} \cdot \sum_{a\in A} f'_S(a) \geq 0.5 \cdot \sum_{a\in A} f'_S(a)$$
which shows that $f'$ is 0.25-differentially submodular.
\end{proof}

We now show that \textsc{Adaptive-Sampling} does not guarantee termination for differentially submodular functions. 

For simplicity, let $\epsilon = 0$. We wish to select $k$ elements to achieve the optimal solution of $k$ by adding 2 elements at a time to the solution set using \textsc{Adaptive-Sampling} on $f'(S)$. We note that \apx reduces to \textsc{Adaptive-Sampling} when $\alpha=1$.

To survive the filtering step, each element must have a marginal contribution of 1. Since $f'(v_i) = 1$ and $f'(u_i) = 0$ for all $i$, only elements in $V$ are labeled as ``good'' by the algorithm. The elements in $U$ are filtered out. Then the algorithm attempts to add 2 elements from $V$ into the solution set and expects that the marginal contribution of the set has value 2 for termination (for $\alpha=1$). However, this is not the case, as $f'(v_i \cup v_j) = 1$ and \textsc{Adaptive-Sampling} enters an infinite \texttt{while} loop by failing to find a set with large enough marginal contribution. 

However, \apx will terminate. By adding a factor of $\alpha^2$ to lower the threshold, \apx accepts the set of 2 elements in $V$ and successfully adds these 2 elements into the solution set. The algorithm leverages the fact that differential submodularity both lower bounds the elements that are added into the set and upper bounds the values of elements that are filtered out. 

In another more concrete example, we show that after individual elements are filtered out, there is no set of elements that will pass the \textsc{Adaptive-Sampling} threshold to be added into the solution set. This results in an infinite \texttt{while} loop.

Consider the following variables in the context of the $R^2$, goodness-of-fit objective (See Appendix \ref{appendix:eig} for more details):

\begin{eqnarray}
 \by &=& \begin{bmatrix} 1 & 0 & 0  & 0 \end{bmatrix}^\top \nonumber\\
 \bx_1 &=& \begin{bmatrix} 0 & 1 & 0  & 0 \end{bmatrix}^\top \nonumber\\
 \bx_2 &=& \begin{bmatrix} 0 & 0 & 1  & 0 \end{bmatrix}^\top \nonumber\\
 \bx_3 &=& \begin{bmatrix} 0 & 0 & 0  & 1 \end{bmatrix}^\top \nonumber\\
  \bx_4 &=& \begin{bmatrix} \sqrt{\frac{1}{2}} & \sqrt{\frac{1}{2}} & 0  & 0 \end{bmatrix}^\top \nonumber\\
 \bx_5 &=& \begin{bmatrix} \sqrt{\frac{1}{2}} & 0 & \sqrt{\frac{1}{2}}  & 0 \end{bmatrix}^\top \nonumber\\
 \bx_6 &=& \begin{bmatrix} \sqrt{\frac{1}{2}} & 0 & 0  & \sqrt{\frac{1}{2}} \end{bmatrix}^\top \nonumber
 \end{eqnarray}

We wish to choose two features $\bx_i$ that best estimate $\by$ (and maximize $R^2$). We can see that the optimal solution of $R^2=1$ is achieved by three different 2-subsets: $(\bx_1, \bx_4), (\bx_2, \bx_5), (\bx_3, \bx_6)$. For $\bx_1, \bx_2, \bx_3$, the marginal contribution is $R^2 = 0$. For $\bx_4, \bx_5, \bx_6$, $R^2 = \frac{1}{2}$. 

For simplicity, let $\epsilon=0$, $r=1$ and $f(O)=1$. \textsc{Adaptive-Sampling} will first filter out $\bx_1, \bx_2,\bx_3$ because the marginal contribution is less than $\frac{1}{2}$. Then it will attempt to select 2 elements from $\bx_4, \bx_5, \bx_6$ to comprise the solution set. The while loop will only terminate once it finds a 2-subset where the marginal contribution is larger or equal to 1. However, due to the non-submodular properties of the objective, even though the bad elements were filtered out, the marginal contribution of any 2-subset from $\bx_4, \bx_5, \bx_6$ does not achieve the necessary threshold value. The $R^2$ of any 2-subset from $\bx_4, \bx_5, \bx_6$ is $\frac{2}{3}$. As an example, let us calculate the marginal contribution of $\bx_4$ and $\bx_5$.
\begin{eqnarray}
R^2_{4,5}&=& (\by^\top \bX_{4,5}) (\bX^\top_{4,5} \bX_{4,5} )^{-1}(\bX^\top_{4,5} \by) \nonumber \\
&=& \frac{4}{3} \begin{bmatrix} \sqrt{\frac{1}{2}} & \sqrt{\frac{1}{2}} \end{bmatrix} 
\begin{bmatrix} 
1 & -\frac{1}{2} \\
-\frac{1}{2} & 1 
 \end{bmatrix} 
\begin{bmatrix} 
\sqrt{\frac{1}{2}} \\
\sqrt{\frac{1}{2}}
 \end{bmatrix} = \frac{2}{3} < 1  \nonumber
 \end{eqnarray}
Thus, \textsc{Adaptive-Sampling} will enter an infinite while loop and never terminate.

We note that greedy achieves the optimal solution by first selecting a feature from $\bx_4, \bx_5, \bx_6$ in the first iteration and then selecting the second feature from $\bx_1, \bx_2, \bx_3$. 

\section{Notions of Approximate Submodularity}\label{appendix:notions}
In this section, we discuss related work on notions of non-submodularity and their theoretical guarantees on choosing a set of size $k$ to comprise the solution set. Our definition differs from these notions in three aspects and allows for parallelization. Specifically, 1) we bound the marginal contribution of the objective function $f$ and not just the function value and 2) we consider the marginal contribution of sets of elements instead of a singleton and 3) we allow the flexibility of being bound by two different submodular functions. These alterations are necessary for the proof of our low-adaptivity algorithm.

Krause et al. \cite{krause2010} define {\it approximate submodularity} with parameter $\epsilon \geq 0$ as functions that 
satisfy an additive approximate diminishing returns property, i.e. $\forall S \subseteq T \subseteq N \backslash a$ it holds
that $f_S(a) \geq f_T(a) - \epsilon$. $\textsc{SDS}_{\textsc{MA}}$ applied to functions with this additive property inherits an additive guarantee of $f(S) \geq (1-1/e)f(O) - k\epsilon$.

Das and Kempe {\cite{das2011} define the {\it submodularity ratio} with parameter $\gamma \geq 0$ to quantify how close a function is 
to submodularity, where $ \gamma = \min_{S,A} \frac{\sum_{a\in A} f_S(a)}{f_S(A)}$. Elenberg et al. \cite{elenberg2018} extend their work and lower bound the submodularity ratio using strong concavity and smoothness parameters for generalized linear models. $\textsc{SDS}_{\textsc{MA}}$ applied to functions with this property inherits a guarantee of $f(S) \geq (1-1/e^\gamma)f(O)$. Because $\gamma$ is difficult to compute on a real dataset (only possible using brute force), Bian et al. \cite{bian2017} 
introduce the Greedy submodularity ratio $\gamma^G = \min_{A: |A| =k , S^t} \frac{\sum_{a\in A} f_{S^t}(a)}{f_{S^t}(A)}$,
where $S^t$ is the set chosen by the greedy algorithm at step $t$.

For multiplicative bounds, Horel et al. \cite{horel2016} define {\it $\epsilon$-approximately submodular} functions where $f$ is approximately submodular if there exists a submodular function $g$ s.t. 
$ (1-\epsilon) g(S) \leq f(S) \leq (1+\epsilon)g(S), \forall S \subseteq N$. In this definition, the function is approximated pointwise by a submodular function, not its marginals as in differential submodularity. Gupta et al. \cite{gupta2018} define a similar property on the marginals of the function where $f$ is {\it $\delta$-approximately submodular}
if there exists a submodular function $g$ s.t. 
$ (1-\delta) g_S(a) \leq f_S(a) \leq (1+\delta)g_S(a), \forall S \subseteq N, a \notin S$. Differential submodularity generalizes this definition so that the functions that bound the objective can differ. This is necessary in cases where the objective function contains a diversity factor.

\section{Relationship to PRAM} \label{app:pram}
The PRAM model is a generalization of the RAM model with parallelization. It represents an idealized model that can execute instructions in parallel with any number of processors in a shared memory machine. In this framework, the notion of depth is closely related to the one of adaptivity that we discuss in this paper. The \emph{depth} of a PRAM model is the number of parallel steps in an algorithm or the longest chain of dependencies. The area of designing low-depth algorithms have been extensively studied. Our results extend to the PRAM model, similarly to the results of the original adaptive sampling algorithm for submodular maximization. For more detail, please see Appendix A.2.2 of \cite{balkanski2018}.

\section{Bayesian Experimental Design Details} \label{app:bayes}

In Bayesian experimental design, we would like to select a set of experiments to optimize some statistical criterion. Specifically, the Bayesian A-optimality criterion is used to maximally reduce the variance in the posterior distribution over the parameters. 

More formally, let $n$ experimental stimuli comprise the matrix $\bX \in \rrr^{d\times n}$, where each experimental stimuli $\bx_i\in \rrr^d$ is a column in $\bX$. We can select a set $S\subseteq \mathcal N$ of stimuli and denote this as $\bX_S \in \rrr^{d\times |S|}$. Let $\btheta \in \rrr^d$ be the parameter vector in the linear model $\by_S = \bX^T_S \btheta + \bw$, where $\bw \sim \mathcal N(0,\sigma^2\bI)$ is noise from a Gaussian distribution, $\by_S$ is the vector of dependent variables, and $\btheta \sim \mathcal N(0, \bLambda^{-1} ), \bLambda = \beta^2\bI$ is the prior that takes the form of an isotropic Gaussian. Then,

$$\begin{bmatrix}
\by_S \\
\btheta
\end{bmatrix} \sim  \mathcal N(0,\bSigma), \bSigma =  
\begin{bmatrix}
\sigma^2 \bI + \bX^T_S\bLambda^{-1}\bX_S & \bX^T_S\bLambda^{-1} \\
\bLambda^{-1}\bX_S & \bLambda^{-1} 
\end{bmatrix} 
$$
which implies $\bSigma_{\btheta | \by_S} = (\bLambda + \sigma^{-2} \bX_S \bX^T_S)^{-1}$.

Now, we can define our A-optimality objective as 
\begin{eqnarray}
f_{\texttt{A-opt}}(S) = \Tr(\Sigma_\btheta) - \Tr(\bSigma_{\btheta | \by_s} = \Tr(\bLambda^{-1}) - \Tr((\bLambda + \sigma^{-2}\bX_S \bX_S^T)^{-1})
\end{eqnarray}
To regularize for diverse experiments, we can formulate the problem as follows
$$ \max_{S:|S| \leq k} f_{\texttt{A-div}}(S) = f_{\texttt{A-opt}}(S) + d(S),$$
where $d: 2^N \rightarrow \rrr_+$ is a ``diverse" submodular function promoting regularization.

Krause et al. \cite{krause2008} has shown that the Bayesian A-optimality objective is not submodular and Bian et al. \cite{bian2017} has shown that submodularity ratio of the objective can be lower bounded. With the traditional greedy algorithm, we get a $1-1/e^\gamma$ approximation guarantee, where $\gamma \geq \frac{\beta^2}{\|\bX\|^2(\beta^2+\sigma^{-2} \|\bX\|^2)}$ \cite{bian2017}.
\section{Missing Proofs from Section \ref{sec:obj}}\label{appendix:main2}
\subsection{Proof of Corollary \ref{corollary:lin}}
\begin{proof}
In the case where there is no diversity regularization term, the concavity and smoothness parameters correspond to the sparse eigenvalues of the covariance matrix, i.e., $m_{k} = \lambda_{min}(k)$ and $M_{k} = \lambda_{max}(k)$ \cite{elenberg2018}. 

Thus, by Theorem~\ref{thm:param}, we can also write the bounds for $f_S(A)$ in terms of eigenvalues $\frac{\lambda_{min}(s)}{\lambda_{max}(t)} \tilde f_S(A) \leq f_S(A) \leq \frac{\lambda_{max}(s)}{\lambda_{min}(t)} \tilde f_S(A)$, where $\tilde f_S(A) =\sum_{a\in A} f_S(a)$. With $g_S(A) = \frac{\lambda_{min}(s)}{\lambda_{max}(t)} \tilde f_S(A)$ and $h_S(A) = \frac{\lambda_{max}(s)}{\lambda_{min}(t)} \tilde f_S(A)$,  we get that that the objective is a $(\frac{\lambda_{min}(t)}{ \lambda_{max}(t)})^2$-differentially submodular function. Since $2k\geq t$, we get the desired result.

In the case where there is a diversity regularization term in the objective $f_{\texttt{div}}(S) = \ell_{\texttt{reg}}(\bw^{(S)}) + d(S)$, we have   $$\frac{\lambda_{min}(s)}{\lambda_{max}(t)} \tilde f_S(A) + d_S(A) \leq (f_{\texttt{div}})_S(A) \leq \frac{\lambda_{max}(s)}{\lambda_{min}(t)} \tilde f_S(A) + d_S(A).$$ With  $g_S(A) = \frac{\lambda_{min}(s)}{\lambda_{max}(t)} \tilde f_S(A) + d_S(A)$ and $h_S(A) = \frac{\lambda_{max}(s)}{\lambda_{min}(t)} \tilde f_S(A) + d_S(A)$, we  get that $g_S(A) / h_S(A)  \geq \frac{\lambda_{min}(s) \lambda_{min}(t)}{\lambda_{max}(s) \lambda_{max}(t)} \geq (\frac{ \lambda_{min}(t)}{\lambda_{max}(t)})^2$ since $d_S(A) \geq 0$. Since $2k\geq t$, this concludes the proof.
\end{proof}
\begin{remark}
Since $\lambda_{max}(s) = 1$, the upper bound of $\frac{\lambda_{max}(s)}{\lambda_{min}(t)} \tilde f_S(A) \leq f_S(A)$ is consistent with the result in Lemma 2.4 from Das and Kempe \cite{das2011} that shows that the weak submodularity ratio can be lower bounded by $\lambda_{min}$.
\end{remark}
\subsection{Proof of Corollary \ref{corollary:log}}
\begin{proof}
The first portion of the proof relies on the result from Elenberg et al. \cite{elenberg2018}. In general, log-likelihood functions of generalized linear models (GLMs) are not RSC/RSM, but their result shows that log-likelihood objectives are
RSC/RSM with parameters $m$ and $M$ under mild conditions of the feature matrix. 

Our result follows directly from Theorem \ref{thm:param}. The case where there is a diversity regularization term then follows similarly as for Corollary~\ref{corollary:lin}.
\end{proof}
\subsection{Proof of Corollary \ref{corollary:bayes}}
\begin{proof}
In the case where there is no diversity regularization term, we can upper bound the submodularity ratio to prove differential submodularity.

We first lower bound the marginal contribution of a set $A$ to $S$, $(f_{\texttt{A-opt}})_S(A)$ and then upper bound the marginal contribution of one element $a$ to the set $S$, $(f_{\texttt{A-opt}})_S(a)$.
\begin{eqnarray}\label{bayes:lower}
f_S(A) &=& \sum_{i=1}^d \frac{1}{\beta^2 + \sigma^{-2}\sigma_i^2 (\bX_S)} - \sum_{j=1}^d\frac{1}{\beta^2 + \sigma^{-2}\sigma_i^2 (\bX_{S\cup A})} \nonumber \\
&=& \sum_{i=1}^d \frac{\sigma^{-2} [\sigma_i^2 (\bX_{S\cup A}) - \sigma_i^2 (\bX_S) ]}{(\beta^2 + \sigma^{-2}\sigma_i^2 (\bX_S)) (\beta^2 + \sigma^{-2}\sigma_i^2 (\bX_{S\cup A}))} \nonumber \\
&\geq& (\beta^2 + \sigma^{-2} \sigma^2_{max}(\bX))^{-2} \sum_{i=1}^d \sigma^{-2}[\sigma_i^2(\bX_{S\cup A}) - \sigma_i^2(\bX_S)] \nonumber \\
&=& (\beta^2 + \sigma^{-2} \| \bX \|^2)^{-2} \sum_{i=1}^d \sigma^{-2} [\lambda_i (\bX_{S\cup A} \bX^T_{S\cup A}) - \lambda_i (\bX_{S} \bX^T_{S})] \nonumber \\
&=& (\beta^2 + \sigma^{-2} \| \bX \|^2)^{-2} \sigma^{-2} [\Tr(\bX_{S\cup A} \bX^T_{S\cup A}) - \Tr (\bX_{S} \bX^T_{S})] \nonumber \\
&=& (\beta^2 + \sigma^{-2} \| \bX \|^2)^{-2} \sigma^{-2} [\Tr(\bX_{S} \bX^T_{S} + \bX_{A} \bX^T_{A}) - \Tr (\bX_{S} \bX^T_{S})] \nonumber \\
&=& (\beta^2 + \sigma^{-2} \| \bX \|^2)^{-2}  \sigma^{-2} \Tr(\bX_{A} \bX^T_{A})  \nonumber \\
&=& (\beta^2 + \sigma^{-2} \| \bX \|^2)^{-2} \sum_{a\in A} \sigma^{-2} \Tr(\bx_{a} \bx^T_{a})  \nonumber \\
&=& (\beta^2 + \sigma^{-2} \| \bX \|^2)^{-2} \sum_{a\in A}   \| \bx_a\|^2  \nonumber \\
&=& \sigma^{-2}(\beta^2 + \sigma^{-2} \| \bX \|^2)^{-2} |A|
\end{eqnarray}

\begin{eqnarray}\label{bayes:upper}
\sum_{a\in A} f_S(a) &=& \sum_{a\in A}  \sum_{i=1}^d \frac{1}{\beta^2 + \sigma^{-2}\sigma_i^2 (\bX_S)} - \sum_{j=1}^d\frac{1}{\beta^2 + \sigma^{-2}\sigma_i^2 (\bX_{S\cup a})} \nonumber \\
&\leq& \sum_{a\in A} \frac{1}{\beta^2 + \sigma^{-2}\sigma^2_d(\bX_S)} - \frac{1}{\beta^2 + \sigma^{-2}\sigma_1^2 (\bX_{S\cup a})} \nonumber \\
&\leq& \sum_{a\in A} \frac{1}{\beta^2} - \frac{1}{\beta^2 + \sigma^{-2}\sigma_1^2 (\bX_{S\cup a})} \nonumber \\
&=&  \sum_{a\in A} \frac{\sigma^{-2}\sigma_1^2 (\bX_{S\cup a})}{\beta^2(\beta^2 + \sigma^{-2}\sigma_1^2 (\bX_{S\cup a}))} \nonumber \\
&=&  |A|\frac{\sigma^{-2}\| \bX\|^2}{\beta^2(\beta^2 + \sigma^{-2}\| \bX\|^2)} 
\end{eqnarray}

Combining (\ref{bayes:lower}) and (\ref{bayes:upper}), yields
$$ \frac{\sum_{a\in A} f_S(a) }{f_S(A)} \leq \frac{|A|\frac{\sigma^{-2}\| \bX\|^2}{\beta^2(\beta^2 + \sigma^{-2}\| \bX\|^2)}  } {\sigma^{-2}(\beta^2 + \sigma^{-2} \| \bX \|^2)^{-2} |A|} = \frac{ \| \bX \|^2(\beta^2+\sigma^{-2} \|\bX\|^2) } {\beta^{2}}.$$

Bian et al. \cite{bian2017} showed that the submodularity ratio can be lower bounded by $\frac{\beta^2}{\|\bX\|^2(\beta^2+\sigma^{-2} \|\bX\|^2)}$. 

 With $g_S(A) =\frac{\beta^2}{\|\bX\|^2(\beta^2+\sigma^{-2} \|\bX\|^2)} \tilde f_S(A)$ and $h_S(A) = \frac{\|
 \bX\|^2 (\beta^2 + \sigma^{-2} \| \bX \|^2)}{\beta^2} \tilde f_S(A)$,  we get that that the objective is a $\gamma^2$-differentially submodular function where $\gamma = \frac{\beta^2}{\|\bX\|^2(\beta^2+\sigma^{-2} \|\bX\|^2)}$. 

In the case where there is a diversity regularization term in the objective, we can follow similar reasoning from Corollary \ref{corollary:lin} to conclude the proof.
\end{proof}
\section{Extension to $R^2$ Objective}\label{appendix:eig}
\subsection{Goodness of Fit}
We introduce the formal definition of the $R^2$ objective function, which is widely used to measure goodness of fit in statistical applications.
\begin{definition}
\cite{johnson2004}
Let $S \subseteq N$ be a set of variables $\bX_S$ and a linear predictor $\hat \by = \sum_{i\in S}\beta_i\bX_i$ of $\by$, the squared multiple correlation is defined as
\begin{eqnarray}
R^2(S) = \frac{\text{Var}(\by) - \E[(\by - \hat \by)^2]}{\text{Var}(\by)} \nonumber
\end{eqnarray}
where $\beta_i = (\bC_S)^{-1} \bb_S$ for $i \in S$.

\end{definition}
We assume that the predictor random variables are normalized to have mean 0 and variance 1, so we can simplify the definition above to $R^2(S) = 1 - \E [(\by-\hat \by)^2]$. Thus, we can rephrase the definition as $R^2(S) = \bb_S^T (\bC_S)^{-1} \bb_S$. \cite{johnson2004}.
\subsection{Feature Selection}
{\bf Objective.} \ For a response variable $\by \in \rrr^d$, the objective is the maximization of the $R^2$ goodness of fit for $\by$ given the feature set $S$: 
\begin{eqnarray}
f(S) = R^2(S) = \bb_S^T (\bC_S)^{-1} \bb_S \nonumber
\end{eqnarray}
where $\bb$ corresponds to the covariance between $\by$ and the predictors.

To define the marginal contribution of a set $A$ to the set $S$ of the $R^2$ objective function, we can write $R^2_S(A) = 
(\bb^S_A)^T(\bC^S_A)^{-1}\bb^S_A$, where $\bb^S$ is the covariance vector corresponding to the residuals of $i \in A$ to $S$, i.e. $\{ \text{Res} (\bx_1, \bX_S), \text{Res} (\bx_2, \bX_S), \ldots, \text{Res} (\bx_n, \bX_S)\}$ and $\bC^S_A$ is the covariance matrix corresponding to the residuals. The marginal contribution of an element is $R^2_S(a) = (\bb^S_a)^T \bb^S_a$.

\begin{lemma}\label{lemma:eig}
The feature selection objective defined by $f(S) = R^2(S)$ is a $\frac{\lambda_{min}(\bC^S_A)}{\lambda_{max}(\bC^S_A)}$-differentially submodular function such that for all $S,A\subseteq N$,
$$ g_S(A) = \frac{1}{\lambda_{max}(\bC^S_A)} \tilde f_S(A) \leq f_S(A) \leq \frac{1}{\lambda_{min}(\bC^S_A)} \tilde f_S(A) = h_S(A),$$
where $\tilde f_S(A) = \sum_{a\in A} f_S(a)$.
\end{lemma}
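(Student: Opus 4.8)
The plan is to bound the marginal contribution $f_S(A)$ of the $R^2$ objective in terms of the sum of single-element marginals $\tilde f_S(A) = \sum_{a \in A} f_S(a)$, using the spectral structure of the residual covariance matrix $\bC^S_A$. The key observation is that, having projected out the already-selected features in $S$, the marginal contribution of the set $A$ takes the clean quadratic form $f_S(A) = (\bb^S_A)^T (\bC^S_A)^{-1} \bb^S_A$, while the sum of single-element marginals simplifies to $\tilde f_S(A) = \sum_{a \in A} (\bb^S_a)^T \bb^S_a = (\bb^S_A)^T \bb^S_A$. This is because each individual marginal $f_S(a)$ involves only the residual variance of a single feature, and (under the normalization that residuals are scaled appropriately) the diagonal of $\bC^S_A$ contributes, so the sum over $a \in A$ collapses to the squared norm $\|\bb^S_A\|_2^2$. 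Establishing this equality of $\tilde f_S(A)$ with $(\bb^S_A)^T \bb^S_A$ is the first step and the main bookkeeping task.

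Once both quantities are expressed as quadratic forms in the same vector $\bb^S_A$, the bound is a direct application of the Rayleigh–Ritz characterization of eigenvalues. First I would write
\begin{eqnarray}
\frac{(\bb^S_A)^T (\bC^S_A)^{-1} \bb^S_A}{(\bb^S_A)^T \bb^S_A} \nonumber
\end{eqnarray}
and recognize this ratio lies between the minimum and maximum eigenvalues of $(\bC^S_A)^{-1}$, which are $1/\lambda_{max}(\bC^S_A)$ and $1/\lambda_{min}(\bC^S_A)$ respectively. This immediately yields
$$\frac{1}{\lambda_{max}(\bC^S_A)} \tilde f_S(A) \leq f_S(A) \leq \frac{1}{\lambda_{min}(\bC^S_A)} \tilde f_S(A).$$
Since the residual covariance matrix $\bC^S_A$ is symmetric positive semidefinite (and positive definite on the relevant support, so the inverse exists), its eigenvalues are real and positive, making the Rayleigh quotient argument valid.

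The final step is to identify the two submodular sandwiching functions required by the definition of differential submodularity. I would set $g_S(A) = \frac{1}{\lambda_{max}(\bC^S_A)} \tilde f_S(A)$ and $h_S(A) = \frac{1}{\lambda_{min}(\bC^S_A)} \tilde f_S(A)$; because $\tilde f_S(A) = \sum_{a \in A} f_S(a)$ is a sum of single-element marginals, each of these is a scaled modular (hence submodular) function, and the ratio $g_S(A)/h_S(A) = \lambda_{min}(\bC^S_A)/\lambda_{max}(\bC^S_A) = \gamma$ gives the differential submodularity parameter. The main obstacle I anticipate is not the eigenvalue inequality itself, which is routine, but rather justifying the simplification of $\tilde f_S(A)$ into the squared-norm form $(\bb^S_A)^T \bb^S_A$ — this relies on the residuals being orthogonalized against $S$ and normalized so that each single-feature marginal reduces to a squared covariance, and care is needed to confirm that the diagonal entries of $\bC^S_A$ are normalized to one (consistent with the stated assumption that predictors have unit variance) so that the single-element covariance matrix is trivial and the sum collapses correctly.
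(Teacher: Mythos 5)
Your proposal is correct and follows essentially the same route as the paper's proof: both express $f_S(A)$ as the quadratic form $(\bb^S_A)^T(\bC^S_A)^{-1}\bb^S_A$ and $\tilde f_S(A)$ as $(\bb^S_A)^T\bb^S_A$, then sandwich the Rayleigh quotient between $1/\lambda_{max}(\bC^S_A)$ and $1/\lambda_{min}(\bC^S_A)$ and take $g$ and $h$ to be the resulting scaled modular functions. The bookkeeping step you flag (that $\sum_{a\in A} f_S(a)$ collapses to $\|\bb^S_A\|_2^2$ under the unit-variance normalization of residuals) is exactly the identity the paper relies on via $R^2_S(a)=(\bb^S_a)^T\bb^S_a$.
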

\begin{proof}
The marginal contribution of set $A$ to set $S$ of the feature selection objective function is defined as $R^2_S(A) = 
(\bb^S_A)^T(\bC^S_A)^{-1}\bb^S_A$. Because we know that $(\bC^S_A)^{-1}$ is a symmetric matrix, we can upper and lower bound the marginals using the eigenvalues of $(\bC^S_A)^{-1}$.
\begin{eqnarray}
\frac{1}{\lambda_{max}(\bC^S_A)}  \sum_{a\in A}f_S(a) &=&  \frac{1}{\lambda_{max}(\bC^S_A)} (\bb^S_A)^T \bb^S_A  \nonumber \\
&=& \lambda_{min}((\bC^S_A)^{-1}) (\bb^S_A)^T \bb^S_A \nonumber \\
&\leq& (\bb^S_A)^T(\bC^S_A)^{-1}\bb^S_A \nonumber \\ 
&=& f_S(A) \nonumber \\
&\leq& \lambda_{max}((\bC^S_A)^{-1}) (\bb^S_A)^T \bb^S_A \nonumber \\
&\leq& \frac{1}{\lambda_{min}(\bC^S_A)} (\bb^S_A)^T \bb^S_A \nonumber \\
&=& \frac{1}{\lambda_{min}(\bC^S_A)}  \sum_{a\in A}f_S(a) \nonumber 
\end{eqnarray}
By letting $ \tilde f_S(A) = \sum_{a\in A} f_S(a)$, we complete the proof and show that the marginals can be bounded by modular functions.
\end{proof}

\begin{remark}
This is a more general form of Lemma 3.3 from Das and Kempe \cite{das2011}. Our result is on the marginals of $f$ and reduces to their result for $S=\emptyset$.
\end{remark}

\begin{remark}
If $\lambda_{min} = \lambda_{max}$, the matrix has one eigenvalue of multiplicity greater than 1 and the covariance matrix is a multiple of the identity matrix. This implies the set of predictors is uncorrelated and that the objective function for feature selection is submodular. Otherwise, we have $\alpha = \frac{\lambda_{min}}{\lambda_{max}} < 1$.
\end{remark}

\section{Additional Algorithm Detail} \label{appendix:queries}
We briefly discuss how to estimate the expectations that appear in the algorithm. We also discuss how to estimate $\texttt{OPT}$ and differential submodularity parameter $\alpha$. For the full algorithm and details, see Appendix A.C.2 in \cite{balkanski2018}.

Since we do not know the value of $\mathbb E_{R\sim \mathcal U(X)} [f_S(R)] $, we can estimate it with $m$ samples. We first randomly select sets uniformly $R_1, R_2, ...R_m \sim \mathcal U (X)$ and compute $f_S(R_i)$. Then we can average these calculations to estimate the expected marginal contribution. Balkanski et al. discuss the number of samples needed to bound the error of these estimates \cite{balkanski2018}. Specifically, with $m=\frac{1}{2} (\frac{\texttt{OPT}}{\epsilon})^2 \log (\frac{2}{\delta})$, then with probability at least $1-\delta$, $$\left|\left(\frac{1}{m} \sum_{i=1}^m f(S \cup R_i) - f(S)\right) - \mathbb E_{R\sim \mathcal U(X)} [f_S(R)] \right| \leq \epsilon$$. 

Similarly, let $m=\frac{1}{2} (\frac{\texttt{OPT}}{\epsilon})^2 \log (\frac{2}{\delta})$, then for all $S\subseteq N$ and $a\in N$, with probability at least $1-\delta$ over samples $R_1, ...,R_m$,
$$\left|\left(\frac{1}{m} \sum_{i=1}^m f(S \cup R_i \cup \{a\}) - f(S\cup R_i \backslash \{a\})\right) - \mathbb E_{R\sim \mathcal U(X)} [f_{S\cup R \backslash \{a\}}(a)] \right| \leq \epsilon.$$

Thus, for $m = n (\frac{\texttt{OPT}}{\epsilon})^2 \log (\frac{2n}{\delta})$ total samples in one round, we can get $\epsilon$-estimates for marginal contributions. For proof details, see Lemma 6 in \cite{balkanski2018}. We note that in practice, we observe comparable terminal values compared to the greedy algorithm even with much fewer number of samples.

To estimate \texttt{OPT}, we can ``guess'' the value of \texttt{OPT} and run several of these guesses in parallel. One can set $\texttt{OPT} \in \{(1+\epsilon)^i \max_{a\in N} f(a) : i \in \left[\frac{\ln(n)}{\epsilon} \right] \}$. One such value $i$ is guaranteed to be a $(1-\epsilon)$-approximation to \texttt{OPT} \cite{balkanski2018}. Similarly for the differential submodularity parameter $\alpha$, we can guess values so that $\alpha \in \{(1+\epsilon)^i : i \in \left[\frac{\ln(n)}{\epsilon} \right] \}$ and run these guesses in parallel. In practice, we found that the algorithm performance was not very sensitive to parameter estimates and we could observe comparable terminal value without much parameter tuning.

\section{Proof of Theorem \ref{thm:apx} for \textsc{Dash}}\label{appendix:main}

We first prove several lemmas before proving the theorem. 

\subsection{Proofs of Lemmas Leading to Theorem \ref{thm:apx}} \label{appendix:lemma}
We first begin by proving the following lemma to bound the marginal contribution of the optimal set to the solution set.

\begin{lemma} \label{lemma:bound}
 Let $R_i \sim \mathcal U(X)$ be the random set at iteration $i$ of \textsc{Dash}$(N,S,r,\delta)$. 
For all $S \subseteq N$ and $r, \rho > 0$, if the algorithm has not terminated after $\rho$ iterations, then
\begin{eqnarray}
\mathbb E_{R_i} [f_{S\cup (\cup_{i=1}^\rho R_i)} (O)] \geq (1- \frac{\rho}{r}) (f(O) - f(S))
\end{eqnarray}
\end{lemma}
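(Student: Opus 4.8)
The plan is to prove the statement by induction on $\rho$, tracking how the marginal contribution of $O$ to the running set degrades as the random sets $R_1,\dots,R_\rho$ are appended. Write $S_\rho = S\cup(\cup_{i=1}^{\rho}R_i)$, so that $S_0=S$ and $S_\rho = S_{\rho-1}\cup R_\rho$. The base case $\rho=0$ is immediate from monotonicity: $f_{S}(O)=f(S\cup O)-f(S)\ge f(O)-f(S)$, which is exactly the claim at $\rho=0$.

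For the inductive step, the first move is to isolate the loss incurred in a single round. Using only the definition of marginal contributions, one obtains the exact (symmetric) identity
\[
 f_{S_{\rho-1}}(O)-f_{S_\rho}(O)\;=\;f_{S_{\rho-1}}(R_\rho)-f_{S_{\rho-1}\cup O}(R_\rho)\;=\;f_{S_{\rho-1}}(O)-f_{S_{\rho-1}\cup R_\rho}(O),
\]
so the per-round loss is precisely the amount by which appending $R_\rho$ erodes the marginal contribution of $O$. Taking expectation over $R_\rho$ conditioned on $S_{\rho-1}$, and then the outer expectation, the induction closes as soon as I establish the per-round bound $\mathbb{E}[f_{S_{\rho-1}\cup R_\rho}(O)]\ge(1-\tfrac{1}{r})\,f_{S_{\rho-1}}(O)$. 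Chaining this across the $\rho$ rounds and using the base case gives $\mathbb{E}[f_{S_\rho}(O)]\ge(1-\tfrac1r)^{\rho}(f(O)-f(S))$, and the elementary inequality $(1-\tfrac1r)^{\rho}\ge 1-\tfrac{\rho}{r}$ reproduces the stated factor.

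The crux, and the step I expect to be the main obstacle, is this per-round bound on $\mathbb{E}[f_{S_{\rho-1}\cup R_\rho}(O)]$. The guiding intuition is a budget argument: $R_\rho$ is a uniformly random subset of the survivor set of size $k/r$ while $|O|\le k$, so a single round should consume at most a $1/r$ fraction of $O$'s remaining marginal, exactly as in the submodular averaging lemmas underlying adaptive sampling. The difficulty specific to this setting is that $f$ is \emph{not} submodular, so the averaging cannot be run on $f$ directly. I would instead route it through the submodular sandwich $g\le f\le h$ supplied by $\alpha$-differential submodularity, lower-bounding $f_{S_{\rho-1}\cup R_\rho}(O)\ge g_{S_{\rho-1}\cup R_\rho}(O)$ and using the monotonicity and submodularity of the envelopes together with the uniform size-$(k/r)$ sampling to control the random-subset drop. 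The delicate point, which is where the full strength of differential submodularity (as opposed to mere weak submodularity) is required, is to track the factors lost when passing between $g$, $h$, and $f$ and the factors gained from the $(k/r):k$ size ratio, so that the envelope argument still yields the clean $(1-1/r)$ contraction rather than a bound that degrades with $\alpha$; this accounting is the heart of the proof and the part most likely to need care.
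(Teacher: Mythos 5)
Your skeleton is sound as far as it goes: the base case, the exact identity for the per-round loss, and the telescoping are all correct, and in fact telescoping is just a reorganization of the paper's own computation (the paper writes $f_{S\cup T}(O)=f_S(O\cup T)-f_S(T)$ with $T=\cup_{i=1}^{\rho}R_i$ and then bounds $f_S(T)\leq \frac{1}{\alpha}\sum_i f_S(R_i)$, which is exactly your telescoped sum after one application of the sandwich per term). The genuine gap is that the step you yourself flag as the crux is not proved, and the route you sketch for it is the wrong mechanism. You never use the hypothesis ``the algorithm has not terminated after $\rho$ iterations,'' but that hypothesis is the entire content of the lemma: it means the \texttt{while} condition $\mathbb{E}_{R_i\sim\mathcal U(X)}[f_S(R_i)]<\alpha^2(1-\epsilon)(f(O)-f(S))/r$ held at every iteration $i\leq\rho$. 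A size-based averaging argument ($|R_\rho|=k/r$ versus $|O|\leq k$) cannot replace it: by the time $R_\rho$ is drawn, $X$ has already been filtered down to high-marginal elements, so a uniformly random $(k/r)$-subset of $X$ has no a priori reason to capture only a $1/r$ fraction of $O$'s remaining marginal, and nothing in your sketch relates $\mathbb{E}[f_{S_{\rho-1}}(R_\rho)]$ to $f_{S_{\rho-1}}(O)$ at all.

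The correct per-round bound is $\mathbb{E}[f_{S_{\rho-1}}(R_\rho)]\leq h_{S_{\rho-1}}(R_\rho)$-type chaining through the sandwich: $f_{S_{\rho-1}}(R_\rho)\leq h_{S_{\rho-1}}(R_\rho)\leq h_S(R_\rho)\leq\frac{1}{\alpha}g_S(R_\rho)\leq\frac{1}{\alpha}f_S(R_\rho)$, and then the \texttt{while} condition gives $\mathbb{E}[f_{S_{\rho-1}}(R_\rho)]<\frac{1}{\alpha}\cdot\alpha^2(1-\epsilon)(f(O)-f(S))/r\leq(f(O)-f(S))/r$. Note that the $\alpha$ lost in the sandwich cancels against the $\alpha^2$ built into the threshold---this is exactly the ``accounting'' you were worried about, and it has no resolution without invoking the threshold. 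Note also that this yields an \emph{additive} loss of $\frac{1}{r}(f(O)-f(S))$ per round measured against the \emph{initial} gap, not the multiplicative contraction $\mathbb{E}[f_{S_\rho}(O)]\geq(1-\frac{1}{r})\,\mathbb{E}[f_{S_{\rho-1}}(O)]$ you state as your target; the multiplicative form compares the loss to the current, possibly already-depleted marginal and is not what the threshold controls. The additive form chains directly to $(1-\frac{\rho}{r})(f(O)-f(S))$, so you should aim for that instead.
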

Using Lemma \ref{lemma:bound}, we can complete the proof for Lemma \ref{lemma:apx}.
\begin{proof}

\begin{align*}
\mathbb E_{R_i} [f_{S\cup (\cup_{i=1}^\rho R_i)} (O)] &= \mathbb E_{R_i} [f_{S} (O \cup (\cup_{i=1}^\rho R_i))] 
- \mathbb E_{R_i}[f_S(\cup_{i=1}^\rho R_i)]   \nonumber \\
&\geq f(O) - f(S) - \frac{1}{\alpha} \sum_{i=1}^\rho \mathbb E_{R_i} [f_S(R_i)] \nonumber \\
&\geq f(O) - f(S) - \frac{1}{\alpha} \alpha\rho(\frac{1-\epsilon}{r} (f(O) - f(S)) \nonumber \\
&\geq (1- \frac{\rho}{r}) (f(O) - f(S)) && \nonumber 
\end{align*}
where the first inequality follows from monotonicity and differential submodularity and the second inequality follows from the \texttt{while} loop in \textsc{Dash}.
\end{proof}

\begin{lemma} \label{lemma:apx}
For each iteration of \textsc{Dash} and for all $S \subseteq N$ and $\epsilon > 0$, if $r\geq 20\rho\epsilon^{-1}$ then the marginal contribution of the elements of $X_\rho$ that survive $\rho$ iterations satisfy
$$f_S(X_\rho) \geq \frac{\alpha^2}{r}(1-\epsilon) (f(O) - f(S)) $$
\end{lemma}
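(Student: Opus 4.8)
The plan is to show that the inner \texttt{while} loop of \apx is guaranteed to exit with a valid (non-empty) set, and that upon exit the added set $X_\rho$ automatically satisfies the bound because termination is governed by exactly the threshold $\alpha^2 t/r$ with $t=(1-\epsilon)(f(O)-f(S))$. Concretely, as soon as the loop terminates, negating the \texttt{while} condition gives $\mathbb{E}_{R\sim\mathcal U(X)}[f_S(R)]\ge \alpha^2 t/r = \frac{\alpha^2}{r}(1-\epsilon)(f(O)-f(S))$, and since $X_\rho$ is the sampled set $R$, this is the claim. Hence the real content is to certify that the loop does terminate rather than filtering $X$ down to nothing, and this is where differential submodularity and Lemma~\ref{lemma:bound} enter.

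The key step is to lower bound the expected value of a random set by the individual marginals of the surviving elements, which is precisely the quantity the filtering step controls. First I would fix an iteration after the filtering step, so that every surviving $a\in X$ passes the filter, i.e. $\mathbb{E}_{R\sim\mathcal U(X)}[f_{S\cup(R\setminus\{a\})}(a)]\ge \alpha(1+\tfrac{\epsilon}{2})t/k$. Since $f$ is $\alpha$-differentially submodular, let $g\le f\le h$ be the sandwiching submodular functions with $g_S\ge \alpha h_S$. Using monotone submodularity of $g$ (telescoping an arbitrary ordering of $R$, where $a$'s marginal with all other elements of $R$ already present is smallest) together with $g\ge \alpha h\ge \alpha f$, I obtain for every realization of $R$
\[
f_S(R)\ \ge\ g_S(R)\ \ge\ \sum_{a\in R} g_{S\cup(R\setminus\{a\})}(a)\ \ge\ \alpha\sum_{a\in R} f_{S\cup(R\setminus\{a\})}(a).
\]
Taking expectations over $R\sim\mathcal U(X)$ and using the filter threshold on each of the $|R|=k/r$ elements yields $\mathbb{E}_{R}[f_S(R)]\ge \alpha\cdot\frac{k}{r}\cdot\alpha(1+\tfrac{\epsilon}{2})t/k=\alpha^2(1+\tfrac{\epsilon}{2})t/r>\alpha^2 t/r$. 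This strictly exceeds the \texttt{while} threshold, so the loop cannot remain active once enough elements survive to sample a set of size $k/r$; it therefore exits and the exit inequality delivers the bound. The factor $\alpha^2$ is produced exactly by applying the differential-submodularity sandwich once (passing through $g\ge\alpha h\ge \alpha f$) and by the per-element filter threshold carrying its own factor of $\alpha$.

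What remains, and what I expect to be the main obstacle, is guaranteeing that filtering never removes so many elements that fewer than $k/r$ survive before the exit condition is met. Here I would invoke Lemma~\ref{lemma:bound}: after $\rho$ steps the optimal set still has marginal value $\mathbb{E}[f_{S\cup(\cup_i R_i)}(O)]\ge (1-\rho/r)(f(O)-f(S))$, so a substantial block of high-marginal witnesses of $O$'s contribution cannot have been filtered out, keeping $|X|\ge k/r$. The hypothesis $r\ge 20\rho\epsilon^{-1}$ forces $\rho/r\le \epsilon/20$, so this residual optimal value stays within a $(1-O(\epsilon))$ factor of the full gap $f(O)-f(S)$, which is what lets $t=(1-\epsilon)(f(O)-f(S))$ remain a legitimate lower bound on the exploitable value throughout and absorbs the accumulated $\epsilon$-losses. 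The delicate points are (i) handling the conditioning in $f_{S\cup(R\setminus\{a\})}(a)$ versus the unconditional $f_S(a)$ inside the expectation, and (ii) making the ``enough survivors'' argument quantitative via $O$, since for a non-submodular $f$ one cannot argue that individual marginals bound the set marginal without routing through $g$ and $h$.
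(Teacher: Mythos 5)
Your proposal inverts the logical role of the lemma and, in doing so, leans on a step that is not available. You treat the bound on $f_S(X_\rho)$ as a consequence of the \texttt{while} loop exiting (``negating the \texttt{while} condition gives the claim''), but the implication runs the other way: Lemma~\ref{lemma:apx} is the tool that, combined with the cardinality-reduction bound $|X_{i+1}| < |X_i|/(1+\epsilon/2)$, \emph{forces} the loop to exit within $\log_{1+\epsilon/2}(n)$ iterations (once at most $k/r$ elements survive, $R = X_\rho$ and the exit condition follows from the lemma). So the lemma must be proved unconditionally for the surviving set $X_\rho$, not read off from the exit test. More seriously, your central calculation assumes that every survivor $a \in X$ satisfies $\mathbb{E}_{R\sim\mathcal{U}(X)}[f_{S\cup(R\setminus\{a\})}(a)] \ge \alpha(1+\tfrac{\epsilon}{2})t/k$ with respect to the \emph{current} set $X$. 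The algorithm only guarantees this with respect to the set that was in force when the filter was applied; after elements are discarded the distribution $\mathcal{U}(X)$ changes and the surviving elements need not pass the threshold again (there is also the conditional-versus-unconditional issue you flag, since $\sum_{a\in R}$ requires $\mathbb{E}[\,\cdot\mid a\in R]$). If your premise were valid, your own computation would give $\mathbb{E}_R[f_S(R)] \ge \alpha^2(1+\tfrac{\epsilon}{2})t/r$ immediately after a single filtering pass, i.e., the \texttt{while} loop would always terminate in one step --- contradicting the very phenomenon (up to $\log n$ filtering rounds) that the analysis is built to control.

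The quantitative content of the lemma lives in the step you defer to the end and do not execute. The paper's proof constructs an explicit witness $T \subseteq O$: ordering $O = \{o_1,\dots,o_k\}$ and defining $\Delta_l$ and $\Delta$ through the submodular upper bound $h$, it takes $T = \{o_l : \Delta_l \ge (1-\tfrac{\epsilon}{4})\Delta\}$, shows via the sandwich $g \le f \le h$ with $g \ge \alpha h$ together with Lemma~\ref{lemma:bound} that each $o_l \in T$ clears the filter threshold in every one of the $\rho$ rounds (so $T \subseteq X_\rho$), then lower bounds $f_S(T) \ge \alpha\sum_{o_l\in T}\Delta_l \ge \alpha\tfrac{\epsilon}{4}k\Delta$ and concludes by monotonicity $f_S(X_\rho) \ge f_S(T)$. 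Your closing paragraph gestures at exactly this (``a substantial block of high-marginal witnesses of $O$'s contribution cannot have been filtered out'') but supplies no mechanism for identifying which elements of $O$ survive, nor for converting their survival into a bound on $f_S(X_\rho)$; without that, the proof is incomplete.
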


\begin{proof}
We want to show a bound on the marginal contribution of the elements that survive $\rho$ iterations of the algorithm. To prevent the propagation of the $\alpha$ factor, we upper and lower bound $f$ by two submodular functions $h$ and $g$ for our analysis, excluding the queries made by the algorithm.

Let $O = \{ o_1, \ldots , o_k\}$ be the optimal solutions of $f$ and $O_l = \{o_1, \ldots, o_l\}$ be a subset of the optimal elements in some
arbitrary order. However, we define the thresholds in terms of submodular function $h$. Then we define
\begin{eqnarray}
\Delta_l &:=& \mathbb E_{R_i}[h_{S\cup O_{l-1} \cup (\cup_{i=1}^\rho R_i \backslash \{o_l\}} (o_l)] \label{eqn:deltal} \\
\Delta &:=& \frac{1}{k} \mathbb E_{R_i} [h_{S\cup (\cup_{i=1}^\rho R_i)} (O)]  \label{eqn:delta} 
\end{eqnarray}
Let $r \geq \frac{20\rho}{\epsilon}$. Let $T$ be the set of elements surviving $\rho$ iterations in $O$, $T \subseteq X_\rho $,
$T \subseteq O $, where
\begin{eqnarray}
T = \{ o_l | \Delta_l \geq (1-\frac{\epsilon}{4}) \Delta \} \label{eqn:T} 
\end{eqnarray}

For $o_l \in T$ and using differential submodularity properties, 
\begin{align}
\mathbb E_{R_i}[f_{S\cup (\cup_{i=1}^\rho R_i \backslash \{o_l\}) }(o_l)]  &\geq \mathbb E_{R_i}[g_{S\cup (\cup_{i=1}^\rho R_i \backslash \{o_l\}) }(o_l)] \nonumber \\
 &\geq \mathbb E_{R_i}[\alpha h_{S\cup (\cup_{i=1}^\rho R_i \backslash \{o_l\}) }(o_l)] \nonumber \\
&\geq \alpha\mathbb E_{R_i}[h_{S\cup O_{l-1} \cup (\cup_{i=1}^\rho R_i \backslash \{o_l\}} (o_l)] \nonumber \\
&\geq\alpha(1-\frac{\epsilon}{4}) \Delta && \text{Definition (\ref{eqn:T})} \nonumber \\
&\geq \frac{\alpha}{k}(1-\frac{\epsilon}{4}) \mathbb E_{R_i} [f_{S\cup (\cup_{i=1}^\rho R_i)} (O)]   \nonumber \\
&\geq \frac{\alpha}{k} (1-\frac{\epsilon}{4})(1- \frac{\rho}{r}) (f(O) - f(S)) && \text{Lemma \ref{lemma:bound}} \nonumber \\
&\geq \frac{\alpha}{k}(1+\frac{\epsilon}{2})(1-\epsilon)(f(O) - f(S))  &&  \text{$r\geq \frac{20\rho}{\epsilon}$}\label{eqn:deltabound}
\end{align}
which shows that elements in $T$ survive the elimination process (as they are not filtered out from set $X$ in the
algorithm definition).

Now we complete the proof by showing $f_S(T)$ is bounded by $\frac{\alpha^2}{r}(1-\epsilon)(f(O) - f(S))$ which 
effectively terminates the algorithm.

Similar to the result in Lemma 2 of Balkanski et al. \cite{balkanski2019}, from properties of submodularity of $g$ and $h$, we have
\begin{eqnarray}
\sum_{o_l\in T}\Delta_l \geq k \frac{\epsilon}{4}\Delta \label{eqn:Del}
\end{eqnarray}

By submodularity,
\begin{align}
f_S(T) &\geq g_S(T) \nonumber \\
&\geq \alpha h_S(T) \nonumber \\
&\geq \alpha \sum_{o_l \in T} h_{S\cup O_{l-1}}(o_l)  \nonumber \\
&\geq \alpha \sum_{o_l\in T} \mathbb E[h_{S\cup O_{l-1} \cup (\cup_{i=1}^\rho R_i \backslash \{o_l\}} (o_l)]  \nonumber \\
&= \alpha \sum_{o_l\in T} \Delta_l && \text{Definition (\ref{eqn:deltal})} \nonumber \\
&\geq (1-\delta)k\Delta \frac{\epsilon}{4} &&\text{from (\ref{eqn:Del})} \nonumber \\
&\geq \alpha \frac{\epsilon}{4} \mathbb E_{R_i} [f_{S\cup (\cup_{i=1}^\rho R_i)} (O)]  \nonumber
\end{align}
where the second and third inequalities follow from properties of submodularity.
Finally,
\begin{align}
f_S(X_\rho) &\geq f_S(T) && \text{monotonicity} \nonumber \\
&=\alpha( \frac{\epsilon}{4}) (1- \frac{\rho}{r}) (f(O) - f(S)) && \text{Definition (\ref{eqn:delta})}  \nonumber \\
&\geq \frac{\alpha^2}{r}(1-\epsilon) (f(O) - f(S)) &&  \text{$r\geq \frac{20\rho}{\epsilon}$}  \nonumber
\end{align}
\end{proof}

We now present a lemma for the termination of the algorithm in $\mathcal O (\log n)$ rounds.
\begin{lemma}\label{lemma:log}
Let $X_i$ and $X_{i+1}$ be the sets of surviving elements at the start and end of iteration $i$ of the \texttt{while} loop of \textsc{Dash}. 
For all $S\subseteq N$ and $r,i,\epsilon >0$, if the algorithm does not terminate at iteration $i$, then
$$|X_{i+1}| < \frac{|X_i|}{1 + \epsilon/2}$$
\end{lemma}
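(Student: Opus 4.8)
The plan is to argue by contraposition on the \texttt{while}-loop test. By hypothesis the algorithm does not terminate at iteration $i$, so the loop condition holds, i.e. $\mathbb{E}_{R\sim\mathcal{U}(X_i)}[f_S(R)] < \alpha^2 t/r$. First I would assume for contradiction that too few elements are filtered, namely $|X_{i+1}| \geq |X_i|/(1+\epsilon/2)$, and then show that this already forces $\mathbb{E}_{R}[f_S(R)] \geq \alpha^2 t/r$, contradicting the loop condition. The only structural fact I need about the surviving set is definitional: every $a\in X_{i+1}$ escaped the elimination step, so $\mathbb{E}_{R\sim\mathcal{U}(X_i)}[f_{S\cup(R\setminus\{a\})}(a)] \geq \alpha(1+\tfrac{\epsilon}{2})\tfrac{t}{k}$. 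The task is thus to aggregate these per-element survival guarantees into a lower bound on the marginal value of a whole random set.

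The crux is converting the set marginal $f_S(R)$ into a sum of individual element marginals, and this is precisely where differential submodularity is used. Writing $g,h$ for the sandwiching submodular functions with $g_S(\cdot)\ge\alpha\,h_S(\cdot)$ and $g_S(\cdot)\le f_S(\cdot)\le h_S(\cdot)$, I would invoke the chain
\begin{align*}
f_S(R) \;\geq\; g_S(R) \;\geq\; \sum_{a\in R} g_{S\cup(R\setminus\{a\})}(a) \;\geq\; \alpha\sum_{a\in R} h_{S\cup(R\setminus\{a\})}(a) \;\geq\; \alpha\sum_{a\in R} f_{S\cup(R\setminus\{a\})}(a),
\end{align*}
where the outer inequalities are the sandwich bounds, the second is the telescoping subadditivity of marginals valid for the \emph{submodular} $g$ (order $R$ arbitrarily and use that a prefix of $R$ is contained in $R\setminus\{a\}$), and the third is the $\alpha$-ratio between $g$ and $h$. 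This harvests the first factor of $\alpha$; the second factor will come from the $\alpha$ already baked into the elimination threshold.

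Next I would take the expectation over $R\sim\mathcal U(X_i)$ (a uniform set of size $k/r$), drop the nonnegative terms with $a\notin X_{i+1}$ using monotonicity, and write $\mathbb{E}_R[f_S(R)] \geq \alpha\sum_{a\in X_{i+1}} \Pr[a\in R]\,\mathbb{E}_R[f_{S\cup(R\setminus\{a\})}(a)\mid a\in R]$. Substituting $\Pr[a\in R]=(k/r)/|X_i|$, the per-element survival bound $\alpha(1+\tfrac{\epsilon}{2})t/k$, and the assumption $|X_{i+1}|\geq |X_i|/(1+\epsilon/2)$, the two $(1+\epsilon/2)$ factors cancel against each other and the $|X_i|$ cancels the sampling probability, leaving exactly $\mathbb{E}_R[f_S(R)] \geq \alpha^2 t/r$. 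This contradicts the loop condition and yields the claimed shrinkage $|X_{i+1}| < |X_i|/(1+\epsilon/2)$.

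I expect the main obstacle to be the final expectation manipulation rather than the algebra, which cancels cleanly. Expanding $\mathbb{E}_R[\sum_{a\in R}(\cdot)]$ produces the \emph{conditional} expectation $\mathbb{E}_R[f_{S\cup(R\setminus\{a\})}(a)\mid a\in R]$, whereas the filter is stated in terms of the \emph{unconditional} $\mathbb{E}_R[f_{S\cup(R\setminus\{a\})}(a)]$. Conditioning on $a\in R$ shrinks the residual set $R\setminus\{a\}$ by one element, so diminishing returns make the conditional value no smaller than the unconditional one; since $f$ itself is not submodular, the delicate point is to push this comparison through the submodular surrogates $g$ and $h$ (as in the conversion above) rather than asserting it for $f$ directly. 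Getting that monotonicity-in-conditioning step rigorous is where the care is needed, and everything else is bookkeeping.
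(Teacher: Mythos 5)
Your proposal is correct and follows essentially the same route as the paper's proof: both reduce $\mathbb{E}[f_S(R)]$ to a sum of per-element survival guarantees over $X_{i+1}$ via the sandwich $f\geq g\geq\alpha h\geq\alpha f$ applied to the telescoped marginals of the submodular surrogate, handle the conditional-versus-unconditional expectation through $h$'s submodularity exactly as the paper does, and conclude by the same cancellation of the $(1+\epsilon/2)$ factors against the loop threshold (your contrapositive framing is just the paper's combination of its two displayed inequalities). The one presentational caution is that your displayed chain collapses to $\alpha\sum_a f_{S\cup(R\setminus\{a\})}(a)$ before taking expectations, whereas the conditioning step must be performed while still at the level of $h$ — but you explicitly flag and resolve this yourself, so the argument is sound.
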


\begin{proof}
We consider $R_i \cap X_{i+1}$ to bound the number of surviving elements in $X_{i+1}$. To prevent the propagation of the $\alpha$ factor, we can bound the function $f$ by its submodular bounds.
\begin{align}
\mathbb E[f_S(R_i \cap X_{i+1})]  &\geq  \mathbb E[g_S(R_i \cap X_{i+1})] \nonumber \\
&\geq \alpha \mathbb E[\sum_{a\in R_i \cap X_{i+1}} h_{S\cup (R_i \cap X_{i+1} \backslash a)} (a) ] \nonumber \\
&\geq \alpha \mathbb E[\sum_{a\in X_{i+1}} \mathbbm{1}_{a\in R_i} \cdot h_{S\cup (R_i  \backslash a)} (a) ]  \nonumber \\
&= \alpha \sum_{a\in X_{i+1}} \mathbb E[\mathbbm{1}_{a\in R_i} \cdot h_{S\cup (R_i  \backslash a)} (a) ]\nonumber \\
&= \alpha \sum_{a\in X_{i+1}} \mathbb P[a\in R_i] \cdot \mathbb E [h_{S\cup (R_i  \backslash a)} (a) | a\in R_i]  \nonumber \\
&\geq \alpha \sum_{a\in X_{i+1}} \mathbb P[a\in R_i] \cdot \mathbb E [h_{S\cup (R_i  \backslash a)} (a) ]  \nonumber \\
&\geq \alpha \sum_{a\in X_{i+1}} \mathbb P[a\in R_i] \cdot \mathbb E [f_{S\cup (R_i  \backslash a)} (a) ]  \nonumber \\
&\geq \alpha\sum_{a\in X_{i+1}} \mathbb P[a\in R_i] \cdot \frac{\alpha}{k}(1+\epsilon/2)(1-\epsilon)(f(O) - f(S)) \nonumber \\
&= \alpha^2\frac{|X_{i+1}|}{|X_i|}\frac{k}{r} \cdot \frac{1}{k}(1+\epsilon/2)(1-\epsilon)(f(O) - f(S)) \nonumber \\
&= \alpha^2\frac{|X_{i+1}|}{r|X_i|}(1+\epsilon/2)(1-\epsilon)(f(O) - f(S))  \label{eqn:ineq1}
\end{align}
where the first and fourth inequalities are due to differential submodularity.

Since the elements are discarded from the \texttt{while} loop of the algorithm, we can bound $\mathbb E [f_S(R_i \cap X_{i+1})]$ using monotonicity so that 
\begin{eqnarray}
\mathbb E [f_S(R_i \cap X_{i+1})] \leq \mathbb E [f_S(R_i)] < \alpha^2(1-\epsilon) (f(O) - f(S))/r. \label{eqn:ineq2}
\end{eqnarray}

Combining (\ref{eqn:ineq1}) and (\ref{eqn:ineq2}) yields
$$(1-\epsilon) (f(O) - f(S))/r  > \frac{1}{\alpha^2}\mathbb E [f_S(R_i)] \geq \frac{|X_{i+1}|}{r|X_i|}(1+\epsilon/2)(1-\epsilon)(f(O) - f(S))$$

We can conclude that $|X_{i+1}| < |X_i|/(1+\epsilon/2)$ by simplifying that above inequality.
\end{proof}
\begin{lemma} \label{lemma:log}
For all $S \subseteq N$, if $r \geq 20\epsilon^{-1}\log_{(1+\epsilon/2)}(n)$ then \textsc{Dash}$(N,S,r,\delta)$ 
terminates after at most $\mathcal O (\log n)$ rounds.
\end{lemma}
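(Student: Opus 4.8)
The plan is to equate the number of adaptive rounds with the number of iterations of the inner \texttt{while} loop. Within a single such iteration every marginal-contribution query $\mathbb E_{R\sim\mathcal U(X)}[f_{S\cup(R\setminus\{a\})}(a)]$ is evaluated simultaneously and independently of the others, so each \texttt{while} iteration costs exactly one adaptive round; the sampling/adding step $S\leftarrow S\cup R$ reuses a set $R$ already queried and so adds no new round. It therefore suffices to bound the total number of \texttt{while} iterations executed over the whole run of \apx.

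The main tool is the geometric-shrinkage lemma established just above, which guarantees that on every \texttt{while} iteration that does not terminate the loop, $|X_{i+1}| < |X_i|/(1+\epsilon/2)$. First I would note that $|X|\leq n$ at the outset, so iterating this bound gives $|X| < n/(1+\epsilon/2)^j$ after $j$ filtering steps. Since $X$ must retain at least one element for the loop to keep discarding, the loop can perform at most $\log_{1+\epsilon/2}(n)$ filtering steps before $X$ is exhausted and the \texttt{while} condition can no longer be satisfied. This is precisely where the $\alpha^2$ factor in the \texttt{while} threshold enters: by Lemma \ref{lemma:apx}, the surviving ``good'' elements $X_\rho$ already satisfy $f_S(X_\rho)\geq \frac{\alpha^2}{r}(1-\epsilon)(f(O)-f(S))$, which is exactly the quantity the \texttt{while} test compares against, so the loop is forced to exit rather than spin forever as it would for a merely weakly submodular objective.

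Next I would account for the outer \texttt{for} loop. The hypothesis $r \geq 20\epsilon^{-1}\log_{1+\epsilon/2}(n)$ is exactly what makes the progress bound of Lemma \ref{lemma:apx} (which requires $r\geq 20\rho/\epsilon$) hold for every outer round index $\rho\leq \log_{1+\epsilon/2}(n)$, since for such $\rho$ we have $20\rho/\epsilon \leq 20\epsilon^{-1}\log_{1+\epsilon/2}(n)\leq r$. Because each such round increases $f(S)$ by a fixed fraction of the residual $f(O)-f(S)$, the termination condition $f(S)\approx f(O)$ is reached within $\log_{1+\epsilon/2}(n)$ outer rounds. Combining this with the filtering bound above, the total number of adaptive rounds is $O(\log_{1+\epsilon/2}(n)) = O(\log n)$.

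The step I expect to be the main obstacle is the \emph{global} accounting of filtering steps: the shrinkage lemma is stated for consecutive iterations within one pass of the \texttt{while} loop, so I must argue that the shrinkage budget is not replenished across the outer iterations, i.e. that $|X|$ is monotonically non-increasing over the entire execution, so that the \emph{total} filtering count (and not merely the count per outer iteration) stays below $\log_{1+\epsilon/2}(n)$. Carefully handling the boundary case where $|X|$ collapses to a singleton or the empty set, so that the loop provably exits rather than stalls, is the other point that will require attention.
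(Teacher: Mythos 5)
Your proposal follows essentially the same route as the paper: the geometric-shrinkage lemma caps the number of \texttt{while}-loop iterations at $\log_{1+\epsilon/2}(n)$, and Lemma \ref{lemma:apx} is then invoked to show the surviving set meets the exit threshold so the loop cannot spin. The only detail the paper makes explicit that you leave implicit is that after $\log_{1+\epsilon/2}(n)$ shrinkage steps at most $k/r$ elements remain, so the sampled set $R$ equals $X_\rho$ deterministically and $\mathbb{E}[f_S(R)] = f_S(X_\rho) \geq \alpha^2 t/r$ -- which is precisely the step that converts the bound on $f_S(X_\rho)$ into a violation of the \texttt{while} condition.
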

\begin{proof}
If the algorithm has not terminated after $\log_{1+\epsilon/2}(n)$ rounds, then, by Lemma \ref{lemma:log}, at most 
$k/r$ elements survived $\rho = \log_{1+\epsilon/2}(n)$ iterations. By Lemma \ref{lemma:apx}, the set of surviving elements satisfies 
$ f_S(X_\rho)  \geq \frac{\alpha^2}{r}(1-\epsilon)(f(O) - f(S))$. Since there are only $k/r$ surviving elements, $R=X_\rho$ and 
$$ f_S(R) = f_S(X_\rho)  \geq \frac{\alpha^2}{r}(1-\epsilon)(f(O) - f(S))$$
\end{proof}
\subsection{Proof of Theorem \ref{thm:apx}}

\begin{proof}
We prove the theorem by induction. From Lemma \ref{lemma:log}, we know $$f(S_i) \geq f(S_{i-1}) + \alpha^2 \frac{1-\epsilon}{r}(f(O) -f(S_{i-1}))$$

By subtracting $f(O)$, this is equivalent to $$f(S_{i}) - f(O) \geq (1- \alpha^2 \frac{1-\epsilon}{r}) [f(S_{i-1}) - f(O)]$$
By induction and rearranging, we have
\begin{eqnarray}
f(S_i) -f(O) &\geq& (1- \alpha^2 \frac{1-\epsilon}{r})^i (-f(O))\nonumber \\
&=& -(1- \alpha^2 \frac{1-\epsilon}{r})^i f(O)  \nonumber  
\end{eqnarray}
By setting $i=r$ and rearranging, we have 
\begin{eqnarray}
f(S) &\geq (1 - (1- \alpha^2 \frac{1-\epsilon}{r})^{r}) f(O) \nonumber \\
&\geq (1-e^{-\alpha^2 (1-\epsilon)}) f(O)\nonumber \\
&\geq (1-1/e^{\alpha^2}  -\alpha^2 \epsilon) f(O) \nonumber
\end{eqnarray}
\end{proof}

\section{Additional Detail for Experiments}
\subsection{Experimental Setup}\label{app:exp_details}
All algorithms were implemented in Python 3.6. Experiments on third-party datasets were conducted on AWS EC2 C4 with 2.9 GHz Intel Xeon E5-2666 v3 Processors on 16 or 36 cores. Experiments on synthetic datasets ran on 3.1 GHz Intel Core i7 processors on 8 cores. 
\subsection{Datasets} \label{app:data}
This section details the generation of synthetic data and the real clinical and biological datasets we used for experiments. D1 and D2 are used in linear regression and Bayesian experimental design applications and D3 and D4 are used in logistic regression for classification applications.
\begin{itemize}
\item
{\bf D1: Synthetic Dataset for Regression and Experimental Design. \ } 
We generated $500$ features by sampling from a multivariate normal distribution. Each feature is normalized to have mean 0 and variance 1. Furthermore, features have a covariance of 0.4 to guarantee differential submodularity. To generate our response variable $\by$, we sample the coefficient $\beta \sim \mathcal U (-2, 2)$ for a subset of size 100 from the feature set and compute $\by$ after adding a small noise term to the coefficients. Our goal is to select features that have coefficients of large magnitude and accurately predict the response variable $\by$.

We generated the dataset for experimental design similarly. We generated $256$ features and 1024 samples by sampling from a multivariate normal distribution. Each feature is normalized to have mean 0 and variance 1. Features have a covariance of 0.8. Each row is then normalized to have $\ell_2$ norm of 1;

\item {\bf D2: Clinical Dataset for Regression and Experimental Design. \ } We used a publicly available dataset with 53,500 samples from 74 patients with 385 features and want to select a smaller set of features that can accurately predict the location on the axial axis from an image of the brain slice. For experimental design, we sample 1000 rows from the dataset to comprise our sample space and normalize rows to have $\ell_2$ norm of 1;
\item {\bf D3: Synthetic Dataset for Classification. \ } We generated a synthetic dataset for logistic regression using a similar methodology as the synthetic regression dataset. We select a set of 50 true support features from a set of 200 and generate the coefficients using $\mathcal U (-2, 2)$. However, instead of a numerical response variable, we create a two-class classification problem by transforming the continuous $\by$ into probabilities and assigning the class label using a threshold of 0.5. The goal is to select features to perform binary classification on the synthetic dataset by using the log likelihood objective;

\item  {\bf D4: Biological Dataset for Classification. \ } We used clinical data that contains the presence or absence of 2,500 genes in 10,633 samples from various patients. In this 5-class multi-classification problem, we want to select a small set of genes that can accurately predict the site of cancer metastasis (spleen, colon, parietal peritoneum, mesenteric lymph node, and intestine). 
\end{itemize}

\subsection{Benchmarks} \label{app:bench}
We compared \apx to these algorithms: 
\begin{itemize}

\item{{\bf $\textsc{Random}$.} \ In one round, this algorithm randomly selects $k$ features to create the solution set;}

\item{{\bf $\textsc{Top-$k$}$.} \ In one round, this algorithm selects the $k$ features whose individual objective value is largest;}

\item{{\bf $\textsc{SDS}_{\textsc{MA}}$.} \ This uses the traditional greedy algorithm to select elements with the largest marginal contribution at each round \cite{krause2010}. In each round, the algorithm adds one element to the solution set; }

\item{{\bf Parallel $\textsc{SDS}_{\textsc{MA}}$.} \ To compare parallel runtime between \apx and greedy, we also implemented a parallelized version of the $\textsc{SDS}_{\textsc{MA}}$ algorithm. In each round, the algorithm computes the marginal contribution of each element to the intermediate solution set. These oracle queries are parallelized across multiple cores. This is especially effective in settings where the oracle queries are computationally intensive;} %
\item{{\bf $\textsc{Lasso}$.}} This popular algorithm fits either a linear or logistic regression with an $\ell_1$ regularization term $\gamma$.  It is known that for any given instance that is $k$-sparse there exists a regularizer $\gamma_k$ that can recover the $k$ sparse features.  Using \textsc{Lasso} to find a fixed set of features is computationally intensive since in general, finding the regularizer is computationally intractable~\cite{MY12} and even under smoothed analysis its complexity is at least \emph{linear} in the dimension of the problem~\cite{singer2018}.  We therefore used sets of values returned by \textsc{Lasso} for varying choices of regularizers and use these values to benchmark the objective values returned by \textsc{Dash} and the other benchmarks.  
\end{itemize}

\section{Observation on Worst Case Bound}\label{appendix:worse}
In the special case of feature selection where there is no diversity term, we can get an improved approximation guarantee of $\gamma^2$, where $\gamma = m/M$.

We can bound the objective function $f(S)$ by the modular function $\sum_{a\in S} f(a)$ so that $ \frac{m}{M} \sum_{a\in S} f(a) \leq f(S) \leq \frac{M}{m} \sum_{a\in S} f(a)$.
Then, for the \textsc{Top-k} algorithm, where we select the best $k$ elements by their value $f(a)$, we get the following approximation guarantee.

$$ f(S) \geq \frac{m}{M} \sum_{a\in S}f(a) \geq \frac{m}{M} \sum_{o\in O}f(o) \geq (\frac{m}{M})^2 f(O) = \gamma^2 f(O)$$

where the first and last inequalites come from differential submodularity properties and the second inequality follows from selecting the best $k$ elements.
\begin{remark}
In the case where $\gamma=1$, $f(S)$ is a submodular function. In the context of feature selection, when $\gamma=1$, the features are linearly independent and one can obtain the optimal solution by selecting the $k$ features that have the largest marginal contributions to the empty set.
\end{remark}

\end{document}